\documentclass{article}

% if you need to pass options to natbib, use, e.g.:
%     \PassOptionsToPackage{numbers, compress}{natbib}
% before loading neurips_2023

% ready for submission
\usepackage[final]{neurips_2023}

% to compile a preprint version, e.g., for submission to arXiv, add add the
% [preprint] option:
%     \usepackage[preprint]{neurips_2023}

% to compile a camera-ready version, add the [final] option, e.g.:
%     \usepackage[final]{neurips_2023}

% to avoid loading the natbib package, add option nonatbib:
%    \usepackage[nonatbib]{neurips_2023}

\usepackage{bbding}
\usepackage{multirow}
\usepackage{caption}
\usepackage{graphicx}
\usepackage{float}
\usepackage[utf8]{inputenc} % allow utf-8 input
\usepackage[T1]{fontenc}    % use 8-bit T1 fonts
\usepackage{hyperref}       % hyperlinks
\usepackage{url}            % simple URL typesetting
\usepackage{booktabs}       % professional-quality tables
\usepackage{amsfonts}       % blackboard math symbols
\usepackage{nicefrac}       % compact symbols for 1/2, etc.
\usepackage{microtype}      % microtypography
\usepackage{xcolor}         % colors
\usepackage{amsthm}
\usepackage{slantsc}
\usepackage{enumitem}

% additonal packages and definitions by zeren
\usepackage{mathtools}
\usepackage[capitalize,noabbrev,nameinlink]{cleveref}
\def\*#1{\mathbf{#1}}

\usepackage{caption}
\usepackage{subcaption}
\usepackage{wrapfig}
%%%%%%%%%%%%%%%%%%%%%%%%%%%%%%%%
% THEOREMS
%%%%%%%%%%%%%%%%%%%%%%%%%%%%%%%%
\newtheorem{theorem}{Theorem}[section]
\newtheorem{proposition}[theorem]{Proposition}

\newtheorem{corollary}[theorem]{Corollary}

\newtheorem{remark}[theorem]{Remark}

% reference style 
\crefname{lemma}{Lemma}{Lemma}
\crefname{proposition}{Proposition}{Proposition}
\crefname{subsection}{Section}{Section}
\crefname{section}{Section}{Section}
\crefname{appendix}{Appendix}{Appendix}

\newcommand{\mbe}{\mathbb{E}}
\newcommand{\mbp}{\mathbb{P}}

\title{\textsc{Glime}: General, Stable and Local LIME Explanation}

% The \author macro works with any number of authors. There are two commands
% used to separate the names and addresses of multiple authors: \And and \AND.
%
% Using \And between authors leaves it to LaTeX to determine where to break the
% lines. Using \AND forces a line break at that point. So, if LaTeX puts 3 of 4
% authors names on the first line, and the last on the second line, try using
% \AND instead of \And before the third author name.

\author{%
  Zeren Tan\\
  Tsinghua University\\
  \texttt{thutzr1019@gmail.com} \\
  \And 
   Yang Tian \\ 
   Tsinghua University\\
   \texttt{tyanyang04@gmail.com}
  \And    
  Jian Li\\
  Tsinghua University\\
  \texttt{lapordge@gmail.com} \\
  % examples of more authors
  % \And
  % Coauthor \\
  % Affiliation \\
  % Address \\
  % \texttt{email} \\
  % \AND
  % Coauthor \\
  % Affiliation \\
  % Address \\
  % \texttt{email} \\
  % \And
  % Coauthor \\
  % Affiliation \\
  % Address \\
  % \texttt{email} \\
  % \And
  % Coauthor \\
  % Affiliation \\
  % Address \\
  % \texttt{email} \\
}

\begin{document}

\maketitle

\begin{abstract}
% As the complexity of black-box machine learning models increases and they are employed in high-stakes applications, it becomes essential to provide explanations for their predictions. Local Interpretable Model-agnostic Explanations (LIME) \cite{ribeiro2016should} is a widely adopted method for elucidating model behavior. However, LIME has been observed to exhibit instability with respect to random seeds \cite{zafar2019dlime, shankaranarayana2019alime, visani2022statistical} and has low fidelity \cite{laugel2018defining, rahnama2019study}. We demonstrate that this instability arises from small sample weights, which cause regularization to dominate and lead to slow convergence. Furthermore, LIME's sampling space is generally biased towards the reference and non-local, which degrades local fidelity and results in sensitivity to the reference. To mitigate these issues, we present \textsc{Glime}—a framework that builds upon LIME and permits more flexible sampling. Within \textsc{Glime}, we derive an equivalent form of LIME that converges significantly faster and is considerably more stable. By selecting a local and unbiased sampling distribution, \textsc{Glime} generates methods that demonstrate greater local fidelity than LIME and are independent of the reference. \textsc{Glime} also allows flexible distribution choice according to users' specific scenarios.
As black-box machine learning models grow in complexity and find applications in high-stakes scenarios, it is imperative to provide explanations for their predictions. Although Local Interpretable Model-agnostic Explanations (LIME) \cite{ribeiro2016should} is a widely adpoted method for understanding model behaviors, it is unstable with respect to random seeds \cite{zafar2019dlime, shankaranarayana2019alime, bansal2020sam} and exhibits low local fidelity (i.e., how well the explanation approximates the model's local behaviors) \cite{rahnama2019study, laugel2018defining}. Our study shows that this instability problem stems from small sample weights, leading to the dominance of regularization and slow convergence. Additionally, LIME's sampling neighborhood is non-local and biased towards the reference, resulting in poor local fidelity and sensitivity to reference choice. To tackle these challenges, we introduce \textsc{Glime}, an enhanced framework extending LIME and unifying several prior methods. Within the \textsc{Glime} framework, we derive an equivalent formulation of LIME that achieves significantly faster convergence and improved stability. By employing a local and unbiased sampling distribution, \textsc{Glime} generates explanations with higher local fidelity compared to LIME. \textsc{Glime} explanations are independent of reference choice. Moreover, \textsc{Glime} offers users the flexibility to choose a sampling distribution based on their specific scenarios.
\end{abstract}

\section{Introduction}
Why a patient is predicted to have a brain tumor \cite{gaur2022explanation}? Why a credit application is rejected \cite{grath2018interpretable}? Why a picture is identified as an electric guitar \cite{ribeiro2016should}? As black-box machine learning models continue to evolve in complexity and are employed in critical applications, it is imperative to provide explanations for their predictions, making interpretability a central concern \cite{arrieta2020explainable}. In response to this imperative, various explanation methods have been proposed \cite{zhou2016learning, shrikumar2017learning, binder2016layer, lundberg2017unified, ribeiro2016should, smilkov2017smoothgrad, sundararajan2017axiomatic}, aiming to provide insights into the internal mechanisms of deep learning models.

Among the various explanation methods, Local Interpretable Model-agnostic Explanations (LIME) \cite{ribeiro2016should} has attracted significant attention, particularly in image classification tasks. LIME explains predictions by assigning each region within an image a weight indicating the influence of this region to the output. This methodology entails segmenting the image into super-pixels, as illustrated in the lower-left portion of \autoref{fig:intro_example}, introducing perturbations, and subsequently approximating the local model prediction using a linear model. The approximation is achieved by solving a weighted Ridge regression problem, which estimates the impact (i.e., weight) of each super-pixel on the classifier's output.
%It is worth noting that in the perturbation process, when a super-pixel is altered, it is set to a reference (also called baseline input), such as a black image.

Nevertheless, LIME has encountered significant instability due to its random sampling procedure \cite{zafar2019dlime, shankaranarayana2019alime, bansal2020sam}. In LIME, a set of samples perturbing the original image is taken. As illustrated in \autoref{fig:intro_example}, LIME explanations generated with two different random seeds display notable disparities, despite using a large sample size (16384). The Jaccard index, measuring similarity between two explanations on a scale from 0 to 1 (with higher values indicating better similarity), is below 0.4. While many prior studies aim to enhance LIME's stability, some sacrifice computational time for stability \cite{shankaranarayana2019alime, zhou2021s}, and others may entail the risk of overfitting \cite{zafar2019dlime}. The evident drawback of unstable explanations lies in their potential to mislead end-users and hinder the identification of model bugs and biases, given that LIME explanations lack consistency across different random seeds.
\begin{figure}
    \centering
   \begin{subfigure}[b]{0.48\textwidth}
    \centering
         \includegraphics[width=\textwidth]{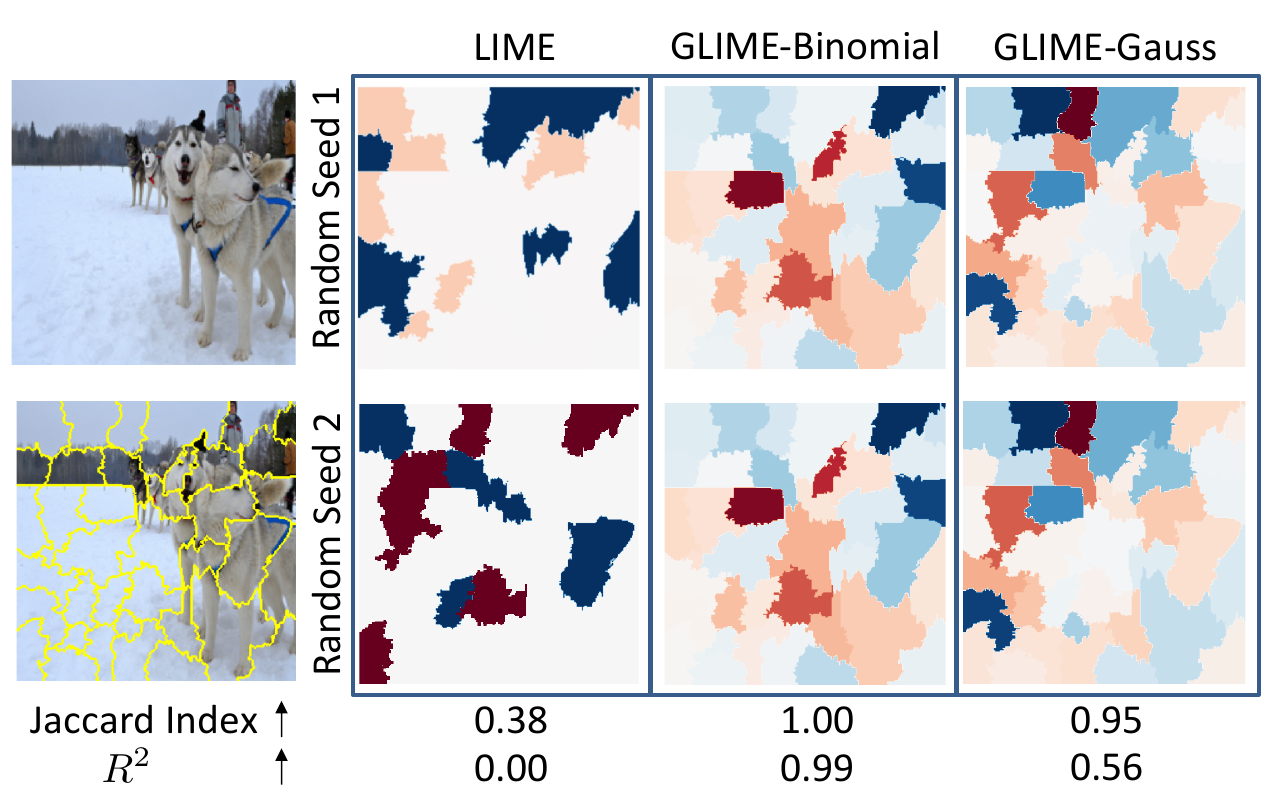}
         \caption{\textbf{LIME and \textsc{Glime} explanations for an ImageNet image on the tiny Swin-Transformer with two random seeds.} $\sigma=0.25$ and a sample size of $16384$ are employed. The numerical value presented below the second row represents the Jaccard Index between the two explanation maps.}
         \label{fig:intro_example}
   \end{subfigure}
\hfill
   \begin{subfigure}[b]{0.48\textwidth}
    \centering
         \includegraphics[width=\textwidth]{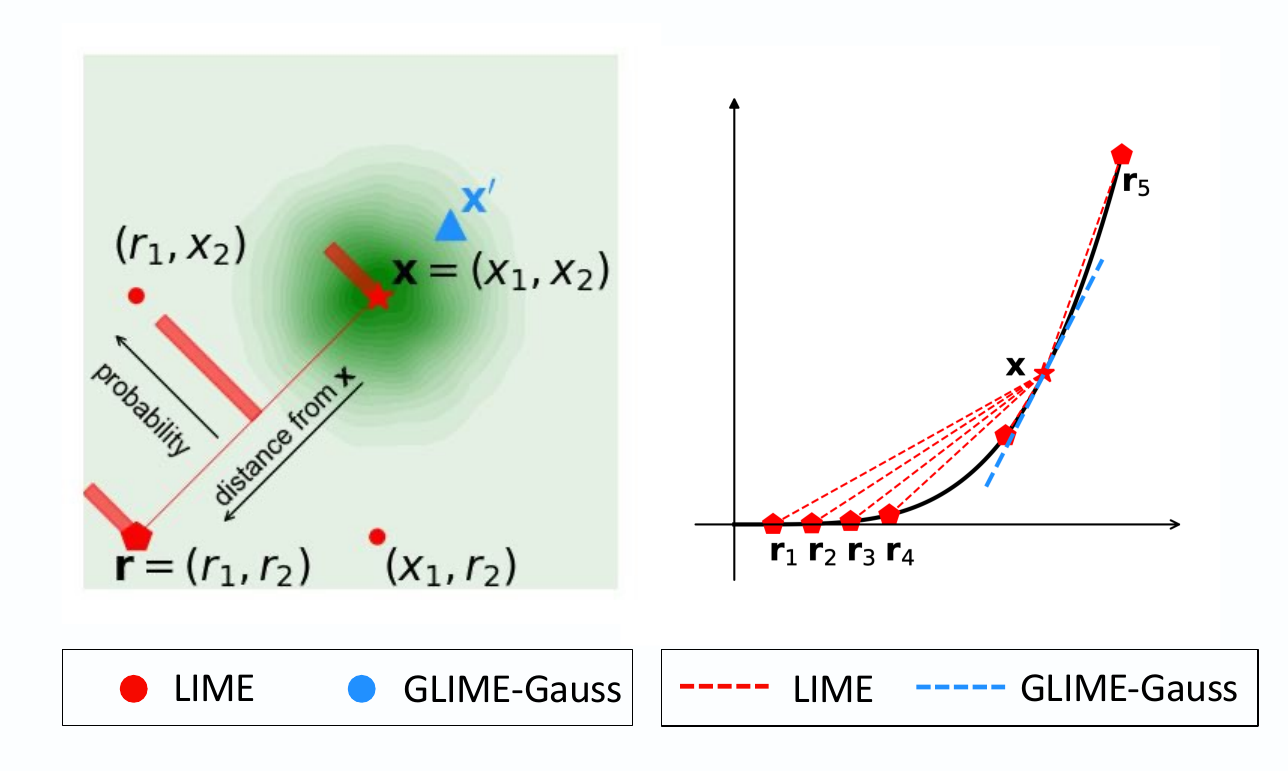}
          \caption{\textbf{Distribution of LIME and \textsc{Glime-Gauss} in 2D example (left) and 1D example explanations (right).} LIME samples are constrained to one side of $\mathbf{r}$ and are more distant from $\mathbf{x}$. Additionally, LIME explanations exhibit variability with distinct reference points.}
         \label{fig:intro_sampling}
   \end{subfigure}
    \caption{\textbf{\textsc{Glime} enhances stability and local fidelity compared to LIME.} (a) LIME demonstrates instability with the default parameter $\sigma$, while \textsc{Glime} consistently provides meaningful explanations. (b) LIME samples from a biased and non-local neighborhood, a limitation overcome by \textsc{Glime}.}
    \label{fig:intro}
\end{figure}
 % Despite numerous attempts to develop stable LIME-based methods, comprehending why different random seeds yield significantly disparate explanations remains a challenge \cite{shankaranarayana2019alime, zafar2019dlime, visani2020optilime, zhou2021s}. In this paper, we establish that this instability is due to the interaction between small sample weights and the dominance of regularization in Ridge regression (refer to \cref{sec:small_weight}). When regularization is factored in, a sample size exponential to the number of super-pixels is necessary for LIME to produce meaningful explanations. Consequently, LIME yields an unstable solution when faced with a limited budget (see \cref{sec:lime_unstable}).

In addition to its inherent instability, LIME has been found to have poor local fidelity \cite{laugel2018defining, rahnama2019study}. As depicted in \autoref{fig:intro_example}, the \(R^2\) value for LIME on the sample image approaches zero (refer also to \autoref{fig:r2_bar}). This problem arises from the non-local and skewed sampling space of LIME, which is biased towards the reference. More precisely, the sampling space of LIME consists of the corner points of the hypercube defined by the explained instance and the selected reference. For instance, in the left section of \autoref{fig:intro_sampling}, only four red points fall within LIME's sampling space, yet these points are distant from \(\mathbf{x}\). As illustrated in \autoref{fig:sample_distance}, the \(L_2\) distance between LIME samples of the input \(\mathbf{x}\) and \(\mathbf{x}\) is approximately \(0.7\|\mathbf{x}\|_2\) on ImageNet. Although LIME incorporates a weighting function to enforce locality, an explanation cannot be considered as local if the samples themselves are non-local, leading to a lack of local fidelity in the explanation. Moreover, the hypercube exhibits bias towards the reference, resulting in explanations designed to explain only a portion of the local neighborhood. This bias causes LIME to generate different explanations for different references, as illustrated in Figure \ref{fig:intro_sampling} (refer to \cref{sec:diff_baseline} for more analysis and results).

To tackle these challenges, we present \textsc{Glime}—a local explanation framework that generalizes LIME and five other methods: KernelSHAP \cite{lundberg2017unified}, SmoothGrad \cite{smilkov2017smoothgrad}, Gradient \cite{zeiler2014visualizing}, DLIME \cite{zafar2019dlime}, and ALIME \cite{shankaranarayana2019alime}. Through a flexible sample distribution design, \textsc{Glime} produces explanations that are more stable and faithful. Addressing LIME's instability issue, within \textsc{Glime}, we derive an equivalent form of LIME, denoted as \textsc{Glime}\textsc{-Binomial}, by integrating the weighting function into the sampling distribution. \textsc{Glime-Binomial} ensures exponential convergence acceleration compared to LIME when the regularization term is presented. Consequently, \textsc{Glime-Binomial} demonstrates improved stability compared to LIME while preserving superior local fidelity (see \autoref{fig:stability}). Furthermore, \textsc{Glime} enhances both local fidelity and stability by sampling from a local distribution independent of any specific reference point.

% Furthermore, existing works sacrifice either computation time or local fidelity for stability. This raises us a question: \emph{Can we stabilize LIME for free?} 
%  {By incorporating weighting into sampling, we show the redefined LIME (RedLIME), which is equivalent to LIME in limit, addresses the aforementioned instability issue for free. }As opposed to LIME, we prove that RedLIME requires exponentially less samples to escape the regime where regularization dominates and thus return meaningful and stable explanations (see \cref{sec:lime_unstable}).  In \autoref{fig:intro_example}, an example image and its explanations under two random seeds are presented. As been shown, LIME is sensitive to random seeds even when over 16000 samples are drawn while RedLIME is much more consistent. Results on RedLIME suggest that explicitly incorporating weighting into sampling maybe a better way to enforce locality.

% \emph{To tackle these issues, we introduce GaussLIME which samples in a Gaussian neighborhood (see \autoref{fig:intro_sampling}).} Unlike LIME, the sampling space of GaussLIME smooth, unbiased and local. GaussLIME is not affected by the choice of reference, offering more robust and reliable explanations.

In summary, our contributions can be outlined as follows:

\begin{itemize}
    \item We conduct an in-depth analysis to find the source of LIME's instability, revealing the interplay between the weighting function and the regularization term as the primary cause. Additionally, we attribute LIME's suboptimal local fidelity to its non-local and biased sampling space.
    
    \item We introduce \textsc{Glime} as a more general local explanation framework, offering a flexible design for the sampling distribution. With varying sampling distributions and weights, \textsc{Glime} serves as a generalization of LIME and five other preceding local explanation methods.
    
    \item By integrating weights into the sampling distribution, we present a specialized instance of \textsc{Glime} with a binomial sampling distribution, denoted as \textsc{Glime\textsc{-Binomial}}. We demonstrate that \textsc{Glime\textsc{-Binomial}}, while maintaining equivalence to LIME, achieves faster convergence with significantly fewer samples. This indicates that enforcing locality in the sampling distribution is better than using a weighting function.
    
    \item With regard to local fidelity, \textsc{Glime} empowers users to devise explanation methods that exhibit greater local fidelity. This is achieved by selecting a local and unbiased sampling distribution tailored to the specific scenario in which \textsc{Glime} is applied.
\end{itemize}
% weighting and regularization are shown to be harmful to stability. On the other hand, RedLIME is much more stable than LIME especially when kernel width parameter is small. While being stable, GaussLIME generally has better local fidelity than LIME and RedLIME.

\section{Preliminary}
% We first introduce notations that will be used through this paper and explain the implementation details of original LIME.

\subsection{Notations}
% Suppose $\mathcal{X}$ and $\mathcal{Y}$ are the input and output space respectively where $\mathcal{X} \subset \mathbb{R}^D, \mathcal{Y}\subset \mathbb{R}$. $f:\mathcal{X} \to \mathcal{Y}$ is a machine learning model that takes $\*x \in \mathcal{X}$ as input. In this paper, we consider $\mathcal{X}$ as the space of images and $f$ is a classification model that outputs probability of the most likely class and thus $\mathcal{Y} = [0,1]$. Before computing explanation, a set of features $s_1, \cdots, s_d$ are obtained by applying a transformation on $\*x$. For example, $\{s_i\}_{i=1}^d$ could be image segments (also called super-pixels in LIME) or feature maps obtained by a convolutional neural network. $\{s_i\}_{i=1}^d$ could also be raw features, i.e., $\*x$ itself. $\|\cdot\|_0, \|\cdot\|_2$ represent $\ell_0$ and $\ell_2$ norm. $\odot$ is element-wise product. We use bold-face characters to denote vectors and matrices while characters without bold-face are scalars or features. $B_{\*x}(r)$ denotes the ball centered at $\*x$ with radius $r$.
Let $\mathcal{X}$ and $\mathcal{Y}$ denote the input and output spaces, respectively, where $\mathcal{X} \subset \mathbb{R}^D$ and $\mathcal{Y} \subset \mathbb{R}$. We specifically consider the scenario in which $\mathcal{X}$ represents the space of images, and $f:\mathcal{X} \to \mathcal{Y}$ serves as a machine learning model accepting an input $\mathbf{x} \in \mathcal{X}$. This study focuses on the classification problem, wherein $f$ produces the probability that the image belongs to a certain class, resulting in $\mathcal{Y} = [0,1]$.

Before proceeding with explanation computations, a set of features $\{s_i\}_{i=1}^d$ is derived by applying a transformation to $\mathbf{x}$. For instance, $\{s_i\}_{i=1}^d$ could represent image segments (also referred to as super-pixels in LIME) or feature maps obtained from a convolutional neural network. Alternatively, $\{s_i\}_{i=1}^d$ may correspond to raw features, i.e., $\mathbf{x}$ itself. In this context, $\|\cdot\|_0$, $\|\cdot \|_1$, and $\|\cdot\|_2$ denote the $\ell_0$, $\ell_1$, and $\ell_2$ norms, respectively, with $\odot$ representing the element-wise product. Boldface letters are employed to denote vectors and matrices, while non-boldface letters represent scalars or features. $B_{\mathbf{x}}(\epsilon)$ denotes the ball centered at $\mathbf{x}$ with radius $\epsilon$.

\subsection{A brief introduction to LIME}\label[section]{sec:original_lime}
% We introduce the image version of LIME in this section to provide readers some background information. 
In this section, we present the original definition and implementation of LIME \cite{ribeiro2016should} in the context of image classification. LIME, as a local explanation method, constructs a linear model when provided with an input $\mathbf{x}$ that requires an explanation. The coefficients of this linear model serve as the feature importance explanation for $\mathbf{x}$.

\textbf{Features.} For an input $\mathbf{x}$, LIME computes a feature importance vector for the set of features. In the image classification setting, for an image $\mathbf{x}$, LIME initially segments $\mathbf{x}$ into super-pixels $s_1, \ldots, s_d$ using a segmentation algorithm such as Quickshift \cite{vedaldi2008quick}. Each super-pixel is regarded as a feature for the input \(\mathbf{x}\).

\textbf{Sample generation.} Subsequently, LIME generates samples within the local vicinity of $\mathbf{x}$ as follows. First, random samples are generated uniformly from $\{0,1\}^d$. The $j$-th coordinate $z_{j}^\prime$ for each sample $\mathbf{z}^\prime$ is either 1 or 0, indicating the presence or absence of the super-pixel $s_j$. When $s_j$ is absent, it is replaced by a reference value $r_j$. Common choices for the reference value include a black image, a blurred version of the super-pixel, or the average value of the super-pixel \cite{sturmfels2020visualizing, ribeiro2016should, garreau2021does}. Then, these $\mathbf{z}^\prime$ samples are transformed into samples in the original input space $\mathbb{R}^D$ by combining them with $\mathbf{x}=(s_1, \ldots, s_d)$ using the element-wise product as follows: $\mathbf{z} = \mathbf{x} \odot \mathbf{z}^\prime  + \mathbf{r} \odot (1 - \mathbf{z}^\prime)$, where $\mathbf{r}$ is the vector of reference values for each super-pixel, and $\odot$ represents the element-wise product. In other words, $\mathbf{z}\in \mathcal{X}$ is an image that is the same as $\mathbf{x}$, except that those super-pixels $s_j$ with $z^\prime_j=0$ are replaced by reference values.

\textbf{Feature attributions.} For each sample $\mathbf{z}^\prime$ and the corresponding image $\mathbf{z}$, we compute the prediction $f(\mathbf{z})$. Finally, LIME solves the following regression problem to obtain a feature importance vector (also known as feature attributions) for the super-pixels:

\begin{equation}\label{eq:ori_lime}
{{\mathbf{w}}}^{\text{LIME}}= \mathop{\arg \min}_{\mathbf{v}} \mathbb{E}_{\mathbf{z}^\prime \sim \text{Uni}(\{0,1\}^d)} [\pi(\mathbf{z}^\prime) (f(\mathbf{z}) - \mathbf{v}^\top \mathbf{z}^\prime)^2] + \lambda \|\mathbf{v}\|_2^2,
\end{equation}

where $\mathbf{z} = \mathbf{x} \odot \mathbf{z}^\prime  + \mathbf{r} \odot (1 - \mathbf{z}^\prime)$, $\pi(\mathbf{z}^\prime) = \exp\{-{\|\mathbf{1} - \mathbf{z}^\prime\|_2^2}/{\sigma^2}\}$, and $\sigma$ is the kernel width parameter.
\begin{remark}
    In practice, we draw samples $\{\mathbf{z}_i^\prime\}_{i=1}^n$ from the uniform distribution $\text{Uni}(\{0,1\}^d)$ to estimate the expectation in \autoref{eq:ori_lime}. In the original LIME implementation \cite{ribeiro2016should}, $\lambda = \alpha/n$ for a constant $\alpha > 0$. This choice has been widely adopted in prior studies \cite{zhou2021s, garreau2021does, lundberg2017unified, garreau2020looking, covert2021explaining, molnar2020interpretable, visani2022statistical}. We use $\hat{\mathbf{w}}^{\text{LIME}}$ to represent the empirical estimation of $\mathbf{w}^{\text{LIME}}$.
\end{remark}

\subsection{LIME is unstable and has poor local fidelity}
\textbf{Instability.} To capture the local characteristics of the neighborhood around the input \(\mathbf{x}\), LIME utilizes the sample weighting function \(\pi(\cdot)\) to assign low weights to samples that exclude numerous super-pixels and, consequently, are located far from \(\mathbf{x}\). The parameter $\sigma$ controls the level of locality, with a small $\sigma$ assigning high weights exclusively to samples very close to $\mathbf{x}$ and a large $\sigma$ permitting notable weights for samples farther from $\mathbf{x}$ as well. The default value for $\sigma$ in LIME is $0.25$ for image data. However, as depicted in \autoref{fig:intro_example}, LIME demonstrates instability, a phenomenon also noted in prior studies \cite{zafar2019dlime, shankaranarayana2019alime, visani2022statistical}. As showed in \cref{sec:lime_unstable}, this instability arises from small $\sigma$ values, leading to very small sample weights and, consequently, slow convergence.

\textbf{Poor local fidelity.} LIME also suffers from poor local fidelity \cite{laugel2018defining, rahnama2019study}. The sampling space of LIME is depicted in \autoref{fig:intro_sampling}. Generally, the samples in LIME exhibit considerable distance from the instance being explained, as illustrated in \autoref{fig:sample_distance}, rendering them non-local. Despite LIME's incorporation of weights to promote locality, it fails to provide accurate explanations for local behaviors when the samples themselves lack local proximity. Moreover, the sampling space of LIME is influenced by the reference, resulting in a biased sampling space and a consequent degradation of local fidelity.

 % \textbf{Actually, $\sigma=0.25$ is not an appropriate value for image data and it requires a lot of samples for LIME to generate stable explanations. When limited samples are generated, LIME may generate very different explanations for different random seeds.} However, after examining codes provided by prior works that use LIME to generate explanations, we find that most of them does not tune $\sigma$ and use $0.25$ as default which reduces the effectiveness of the explanations they generated. In \cref{sec:lime_unstable}, we show that LIME can be stabilized by reformulation.
% \begin{remark}
% Although there is no intercept term in the original formulation provided by \citet{ribeiro2016should}, the intercept is actually fitted first in the official implementation of LIME\footnote{https://github.com/marcotcr/lime}. Therefore, we include $b$ in \autoref{eq:ori_lime}.
% \end{remark}

% \newpage 
\section{A general local explanation framework: \textsc{Glime}}
% In this section, we present \textsc{Glime} which generalizes each step in LIME and show that many previous method can be fitted into \textsc{Glime}. 

\subsection{The definition of \textsc{Glime}}
We first present the definition of \textsc{Glime} and show how it computes the explanation vector ${\mathbf{w}}^{\text{\textsc{Glime}}}$. Analogous to LIME, \textsc{Glime} functions by constructing a model within the neighborhood of the input $\mathbf{x}$, utilizing sampled data from this neighborhood. The coefficients obtained from this model are subsequently employed as the feature importance explanation for $\mathbf{x}$.

% \textbf{Feature transformation.} Given raw features $\{x_i\}_{i=1}^D$, a transformation function $\mathcal{T}(\cdot)$ is applied to $\*x = (x_1, \cdots, x_D)$ and super-features $\*s = (s_1, \cdots, s_d)$ are obtained. $\mathcal{T}$ can be regarded as a feature extractor such as a segmentation function, a convolution neural network or an identity function. In LIME, $\mathcal{T}$ is a segmentation function that takes pixels as inputs and output super-pixels.
\textbf{Feature space.} For the provided input $\mathbf{x} \in \mathcal{X} \subset \mathbb{R}^D$, the feature importance explanation is computed for a set of features $\mathbf{s} = (s_1, \ldots, s_d)$ derived from applying a transformation to $\mathbf{x}$. These features $\mathbf{s}$ can represent image segments (referred to as super-pixels in LIME) or feature maps obtained from a convolutional neural network. Alternatively, the features $\mathbf{s}$ can correspond to raw features, i.e., the individual pixels of $\mathbf{x}$. In the context of LIME, the method specifically operates on super-pixels.

\textbf{Sample generation.} Given features $\mathbf{s}$, a sample $\mathbf{z}^\prime$ can be generated from the distribution $\mathcal{P}$ defined on the feature space (e.g., $\mathbf{s}$ are super-pixels segmented by a segmentation algorithm such Quickshift \cite{vedaldi2008quick} and $\mathcal{P}=\text{Uni}(\{0,1\}^d)$ in LIME). It's important to note that $\mathbf{z}^\prime$ may not belong to $\mathcal{X}$ and cannot be directly input into the model $f$. Consequently, we reconstruct $\mathbf{z}\in \mathbb{R}^D$ in the original input space for each $\mathbf{z}^\prime$ and obtain $f(\mathbf{z})$ (in LIME, a reference $\mathbf{r}$ is first chosen and then $\mathbf{z} = \mathbf{x} \odot \mathbf{z}^\prime  + \mathbf{r} \odot (1 - \mathbf{z}^\prime)$). Both $\mathbf{z}$ and $\mathbf{z}^\prime$ are then utilized to compute feature attributions.
% In LIME, $\mathcal{P}$ is restricted to be $\text{Uni}(\{0,1\}^d)$. 

\textbf{Feature attributions.} For each sample $\mathbf{z}^\prime$ and its corresponding $\mathbf{z}$, we compute the prediction $f(\mathbf{z})$. Our aim is to approximate the local behaviors of $f$ around $\mathbf{x}$ using a function $g$ that operates on the feature space. $g$ can take various forms such as a linear model, a decision tree, or any Boolean function operating on Fourier bases \cite{zhang2022consistent}. The loss function $\ell(f(\mathbf{z}), g(\mathbf{z}^\prime))$ quantifies the approximation gap for the given sample $\mathbf{z}^\prime$. In the case of LIME, $g(\mathbf{z}^\prime) = \mathbf{v}^\top \mathbf{z}^\prime$, and $\ell(f(\mathbf{z}), g(\mathbf{z}^\prime)) = (f(\mathbf{z}) - g(\mathbf{z}^\prime))^2$. To derive feature attributions, the following optimization problem is solved:
\begin{equation}\label{eq:general_lime}
    {\*w}^{\text{\textsc{Glime}}} = \mathop{\arg\min}_{\*v} \mbe_{\*z^\prime\sim \mathcal{P}} [\pi(\*z^\prime) \ell(f(\*z), g(\*z^\prime))] + \lambda R(\*v), 
\end{equation}
 where $\pi(\cdot)$ is a weighting function and $R(\cdot)$ serves as a regularization function, e.g., $ \|\cdot\|_1$ or $\|\cdot\|_2^2$ (which is used by LIME). We use $\hat{\mathbf{w}}^{\textsc{Glime}}$ to represent the empirical estimation of $\mathbf{w}^{\textsc{Glime}}$.

 % Similar to LIME, the expectation in \autoref{eq:general_lime} is approximated by 
 % the empirical average of samples from $\mathcal{P}$.

\textbf{Connection with Existing Frameworks.} Our formulation exhibits similarities with previous frameworks \cite{ribeiro2016should, han2022explanation}. The generality of \textsc{Glime} stems from two key aspects: (1) \textsc{Glime} operates within a broader feature space $\mathbb{R}^d$, in contrast to \cite{ribeiro2016should}, which is constrained to $\{0,1\}^d$, and \cite{han2022explanation}, which is confined to raw features in $\mathbb{R}^D$. (2) \textsc{Glime} can accommodate a more extensive range of distribution choices tailored to specific use cases.

\subsection{An alternative formulation of \textsc{Glime} without the weighting function}
Indeed, we can readily transform \autoref{eq:general_lime} into an equivalent formulation without the weighting function. While this adjustment simplifies the formulation, it also accelerates convergence by sampling from the transformed distribution (see \cref{sec:small_weight} and \autoref{fig:ji_full}). Specifically, we define the transformed sampling distribution as 
$\widetilde{\mathcal{P}}(\mathbf{z}^\prime) = \frac{\pi(\mathbf{z}^\prime) \mathcal{P}(\mathbf{z}^\prime)}{\int \pi(\mathbf{t}) \mathcal{P}(\mathbf{t}) \mathrm{d}\mathbf{t}}$.
Utilizing $\widetilde{\mathcal{P}}$ as the sampling distribution, \autoref{eq:general_lime} can be equivalently expressed as
\begin{equation}\label{eq:equivalent}
    {\mathbf{w}}^{\text{\textsc{Glime}}} = \mathop{\arg\min}_{\mathbf{v}} \mathbb{E}_{\mathbf{z}^\prime\sim \widetilde{\mathcal{P}}} [\ell(f(\mathbf{z}), g(\mathbf{z}^\prime))] + \frac{\lambda}{Z} R(\mathbf{v}), \quad Z = \mathbb{E}_{\mathbf{t} \sim \mathcal{P}}[\pi(\mathbf{t}) \mathcal{P}(\mathbf{t}) ]
\end{equation}
It is noteworthy that the feature attributions obtained by solving \autoref{eq:equivalent} are equivalent to those obtained by solving \autoref{eq:general_lime} (see \cref{app:equivalent_formulation_generallime} for a formal proof). Therefore, the use of $\pi(\cdot)$ in the formulation is not necessary and can be omitted. Hence, unless otherwise specified, \textsc{Glime} refers to the framework without the weighting function.

\subsection{\textsc{Glime} unifies several previous explanation methods}
This section shows how \textsc{Glime} unifies previous methods. For a comprehensive understanding of the background regarding these methods, kindly refer to \cref{app:previous_methods}.

\textbf{LIME \cite{ribeiro2016should} and \textsc{Glime}\textsc{-Binomial}.} In the case of LIME, it initiates the explanation process by segmenting pixels $x_1, \cdots, x_D$ into super-pixels $s_1, \cdots, s_d$. The binary vector $\mathbf{z}^\prime \sim \mathcal{P} = \text{Uni}(\{0,1\}^d)$ signifies the absence or presence of corresponding super-pixels. Subsequently, $\mathbf{z} = \mathbf{x}\odot \mathbf{z}^\prime + \mathbf{r}\odot (\mathbf{1}-\mathbf{z}^\prime)$. The linear model $g(\mathbf{z}^\prime) = \mathbf{v}^\top \mathbf{z}^\prime$ is defined on $\{0,1\}^d$. For image explanations, $\ell(f(\mathbf{z}), g(\mathbf{z}^\prime)) =  (f(\mathbf{z}) - g(\mathbf{z}^\prime))^2$, and the default setting is $\pi(\mathbf{z}^\prime) = \exp(-{\|\mathbf{1} - \mathbf{z}^\prime\|_0^2}/{\sigma^2})$, $R(\mathbf{v}) = \|\mathbf{v}\|_2^2$ \cite{ribeiro2016should}. Remarkably, LIME is equivalent to the special case \textsc{Glime}\textsc{-Binomial} without the weighting function (see \cref{app:lime_binomial_equivalence} for the formal proof). The sampling distribution of \textsc{Glime}\textsc{-Binomial} is defined as $\mathcal{P}(\mathbf{z}^\prime, \|\mathbf{z}^\prime\|_0=k) = {e^{{k}/{\sigma^2}}}/{(1+e^{{1}/{\sigma^2}})^d}$, where $k=0,1,\dots,d$. This distribution is essentially a Binomial distribution. To generate a sample $\mathbf{z}^\prime\in \{0,1\}^d$, one can independently draw $z_i^\prime\in\{0,1\}$ with $\mathbb{P}(z_i=1) = {1}/({1+e^{-{1}/{\sigma^2}}})$ for $i=1,\dots,d$. The feature importance vector obtained by solving \autoref{eq:equivalent} under \textsc{Glime}\textsc{-Binomial} is denoted as ${\mathbf{w}}^{\text{Binomial}}$.

\textbf{KernelSHAP \cite{lundberg2017unified}.} In our framework, the formulation of KernelSHAP aligns with that of LIME, with the only difference being $R(\mathbf{v}) = 0$ and $\pi(\mathbf{z}^\prime) =  ({d-1})/({\binom{d}{\|\mathbf{z}^\prime\|_0}\|\mathbf{z}^\prime\|_0 ( d - \|\mathbf{z}^\prime\|_0)})$.

% \textbf{IntegratedGradient.} $\mathcal{T}(\*x) = \*x$ is identity function. $\*z = \*z^\prime = \alpha \*x$ where $\alpha \sim \text{Uni}(0,1)$. Loss function $\ell(f(\*z), g_{\*v}(\*z^\prime)) = (f(\alpha \*x) - \alpha\*v^\top \*1)^2$.

\textbf{SmoothGrad \cite{smilkov2017smoothgrad}. }SmoothGrad functions on raw features, specifically pixels in the case of an image. Here, $\mathbf{z} = \mathbf{z}^\prime + \mathbf{x}$, where $\mathbf{z}^\prime \sim \mathcal{N}(\mathbf{0}, \sigma^2 \mathbf{I})$. The loss function $\ell(f(\mathbf{z}), g(\mathbf{z}^\prime))$ is represented by the squared loss, while $\pi(\mathbf{z}^\prime) = 1$ and $R(\mathbf{v}) = 0$, as established in \cref{app:smooth-grad}.

\textbf{Gradient \cite{zeiler2014visualizing}.} The Gradient explanation is essentially the limit of SmoothGrad as $\sigma$ approaches 0.

% \textbf{UniGrad \cite{wang2020smoothed}.} The formulation of UniGrad is the same as SmoothGrad except that $\*z = \*z^\prime + \*x $ where $\*z^\prime \sim \text{Uni}(B_{\*0}(r))$. 

\textbf{DLIME \cite{zafar2019dlime}.} DLIME functions on raw features, where $\mathcal{P}$ is defined over the training data that have the same label with the nearest neighbor of $\*x$. The linear model $g(\*z^\prime) = \*v^\top \*z^\prime$ is employed with the square loss function $\ell$ and the regularization term $R(\*v) = 0$.

\textbf{ALIME \cite{shankaranarayana2019alime}.} ALIME employs an auto-encoder trained on the training data, with its feature space defined as the output space of the auto-encoder. The sample generation process involves introducing Gaussian noise to $\*x$. The weighting function in ALIME is denoted as $\pi(\*z^\prime) = \exp(-\|\mathcal{AE}(\*x) - \mathcal{AE}(\*z^\prime)\|_1)$, where $\mathcal{AE}(\cdot)$ represents the auto-encoder. The squared loss function is chosen as the loss function and no regularization function is applied.

% \subsection{New explanations produced by \textsc{Glime}}
% \textsc{Glime} not only unifies many previous methods, by varying feature transformation function, sample generation procedure and the way to compute feature attributions, new explanation methods are produced. In addition, the flexible design of \textsc{Glime} allows users to design explanation method for their purpose. We list some possible explanations produced by \textsc{Glime} below. In the next section, we will show that \textsc{Glime} produces an explanation that is equivalent to LIME but also converges exponentially faster when regularization is present. 

% \textbf{\textsc{Glime}-L1.} The original formulation of LIME solves a Ridge regression problem. However, when the number of super-features is huge, we only care about the most influential features. L1 regularization encourages sparse attributions so that only the most influential super-features get non-zero attributions. 

% \textbf{\textsc{Glime}-SmoothGrad with human prior.} Although SmoothGrad significantly boost robustness of gradient-based explanations, SmoothGrad often results in dense and noisy explanations. \textsc{Glime} provides a flexible way to obtain sparse explanation for SmoothGrad -- adding $L_1$ regularization. By adding other regularization terms, we can impose other human priors into SmoothGrad. 
\section{Stable and locally faithful explanations in \textsc{Glime}}\label[section]{sec:lime_unstable}

\subsection{\textsc{Glime-Binomial} converges exponentially faster than LIME}\label[section]{sec:small_weight}
% As discussed by previous works \cite{zafar2019dlime, shankaranarayana2019alime, visani2022statistical, zhou2021s}, LIME is unstable to random seeds. We show theoretically that this is because weights are too small and regularization dominates. As a consequence, explanation is close to zero.  \textsc{Glime} requires much less samples to escape from the regime where regularization dominates and thus converges much faster. We will show empirical results on the effect of regularization and weighting in \cref{sec:experiments}.
To understand the instability of LIME, we demonstrate that the sample weights in LIME are very small, resulting in the domination of the regularization term. Consequently, LIME tends to produce explanations that are close to zero. Additionally, the small weights in LIME lead to a considerably slower convergence compared to \textsc{Glime}\textsc{-Binomial}, despite both methods converging to the same limit.
\begin{figure}
\centering 
   \includegraphics[width=14cm]{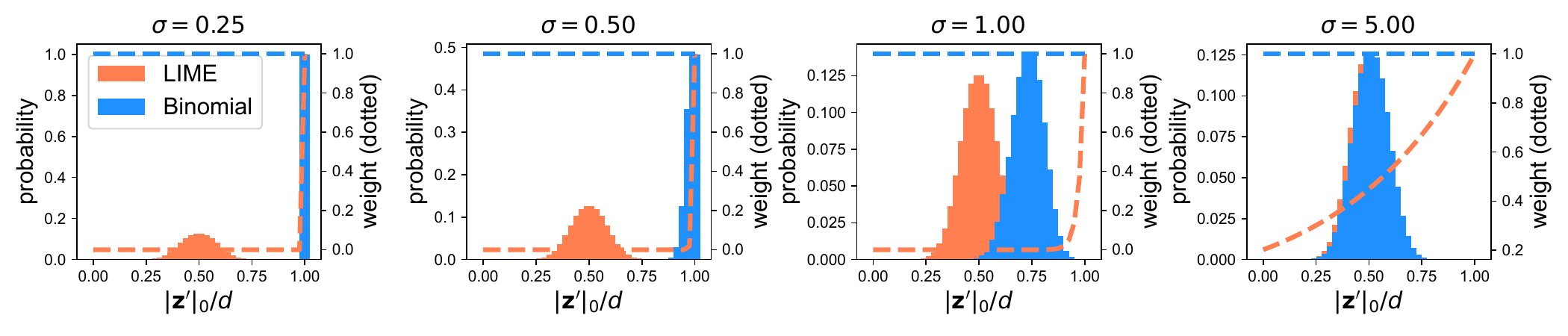}
    \caption{\textbf{The distribution and the weight of $\|\*z^\prime\|_0$.} In LIME, the distribution of $\|\*z^\prime\|_0$ follows a binomial distribution, and it is independent of $\sigma$. In \textsc{Glime}\textsc{-Binomial}, when $\sigma$ is small, $\|\*z^\prime\|_0$ concentrates around $d$, while in LIME, most samples exhibit negligible weights except for the all-one vector $\*1$. As $\sigma$ increases, the distributions in \textsc{Glime}\textsc{-Binomial} converge to those in LIME, and all LIME samples attain non-negligible weights.}
    \label{fig:sample_norm_dist}
\end{figure}

\textbf{Small sample weights in LIME.} The distribution of the ratio of non-zero elements to the total number of super-pixels, along with the corresponding weights for LIME and \textsc{Glime}\textsc{-Binomial}, is depicted in \autoref{fig:sample_norm_dist}. Notably, most samples exhibit approximately ${d}/{2}$ non-zero elements. However, when $\sigma$ takes values such as 0.25 or 0.5, a significant portion of samples attains weights that are nearly zero. For instance, when $\sigma=0.25$ and $\|\mathbf{z}^\prime\|_0 = {d}/{2}$, $\pi(\mathbf{z}^\prime)$ reduces to $\exp(-8d)$, which is approximately $10^{-70}$ for $d=20$. Even with $\|\mathbf{z}^\prime\|_0 = d-1$, $\pi(\mathbf{z}^\prime)$ equals $e^{-16}$, approximating $10^{-7}$. Since LIME samples from $\text{Uni}(\{0, 1\}^d)$, the probability that a sample $\textbf{z}^\prime$ has $\|\mathbf{z}^\prime\|_0 = d-1$ or $d$ is approximately $2\times 10^{-5}$ when $d=20$. Therefore, most samples have very small weights. Consequently, the sample estimation of the expectation in \autoref{eq:ori_lime} tends to be much smaller than the true expectation with high probability and is thus inaccurate (see \cref{app:small_weights} for more details). Given the default regularization strength $\lambda=1$, this imbalance implies the domination of the regularization term in the objective function of \autoref{eq:ori_lime}. As a result, LIME tends to yield explanations close to zero in such cases, diminishing their meaningfulness and leading to instability.

    \textbf{\textsc{Glime} converges exponentially faster than LIME in the presence of regularization.} Through the integration of the weighting function into the sampling process, every sample uniformly carries a weight of 1, contributing equally to \autoref{eq:equivalent}. Our analysis reveals that \textsc{Glime} requires substantially fewer samples than LIME to transition beyond the regime where the regularization term dominates. Consequently, \textsc{Glime}\textsc{-Binomial} converges exponentially faster than LIME. %Our argument is proved in \cref{thm:over_regularization_lime}, \cref{thm:over_regularization_generallime} and \cref{coro:concentrate_GLIME_Binomial}. 
  Recall that $\hat{\*w}^{\text{LIME}}$ and $\hat{\*w}^{\textsc{Glime}}$ represent the empirical solutions of \autoref{eq:ori_lime} and \autoref{eq:equivalent}, respectively, obtained by replacing the expectations with the sample average. $\hat{\*w}^{\textsc{Binomial}}$ is the empirical solution of \autoref{eq:equivalent} with the transformed sampling distribution $\widetilde{\mathcal{P}}(\mathbf{z}^\prime, \|\mathbf{z}^\prime\|_0=k) = {e^{{k}/{\sigma^2}}}/{(1+e^{{1}/{\sigma^2}})^d}$, where $k=0,1,\cdots, d$. In the subsequent theorem, we present the sample complexity bound for LIME (refer to \cref{app:theorem_41} for proof).
\begin{theorem}\label[theorem]{thm:over_regularization_lime}
    Suppose samples $\{\*z_i^\prime\}_{i=1}^n \sim \text{Uni}(\{0,1\}^d)$ are used to compute the LIME explanation. For any $\epsilon >0, \delta \in (0,1)$, if $n = \Omega(\epsilon^{-2} d^9 2^{8d}e^{8/\sigma^2}\log(4d/\delta)), \lambda \leq n,$    we have $\mbp(\|\hat{\*w}^{\text{LIME}} - \*w^{\text{LIME}}\|_2 < \epsilon ) \geq 1 - \delta $. $\*w^{\text{LIME}} = \lim_{n\to\infty} \hat{\*w}^{\text{LIME}}$.
\end{theorem}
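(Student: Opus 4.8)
The plan is to treat both $\hat{\bw}^{\text{LIME}}$ and $\bw^{\text{LIME}}$ as closed-form solutions of a weighted ridge regression and then bound their difference by combining a matrix-perturbation identity with elementary concentration. Writing $\pi_i=\pi(\bz_i^\prime)$, $y_i=f(\bz_i)$ and setting
\begin{equation*}
\hSigma=\tfrac1n\textstyle\sum_{i}\pi_i\bz_i^\prime\bz_i^{\prime\top},\quad \hat{\mathbf{b}}=\tfrac1n\textstyle\sum_{i}\pi_i\bz_i^\prime y_i,\quad \bSigma=\mbe[\pi\bz^\prime\bz^{\prime\top}],\quad \mathbf{b}=\mbe[\pi\bz^\prime f(\bz)],
\end{equation*}
the stationarity condition of \autoref{eq:ori_lime} gives $\hat{\bw}^{\text{LIME}}=(\hSigma+\lambda\bI)^{-1}\hat{\mathbf{b}}$. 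Since $\hSigma\to\bSigma$ and $\hat{\mathbf{b}}\to\mathbf{b}$ almost surely by the law of large numbers (for fixed $\lambda$), the limit in the statement is $\bw^{\text{LIME}}=(\bSigma+\lambda\bI)^{-1}\mathbf{b}$, which I would take as the population target.

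Next I would apply the resolvent identity $A^{-1}u-C^{-1}v=A^{-1}(u-v)-A^{-1}(A-C)C^{-1}v$ with $A=\hSigma+\lambda\bI$, $C=\bSigma+\lambda\bI$, $u=\hat{\mathbf{b}}$, $v=\mathbf{b}$, and take norms. Because $\hSigma,\bSigma\succeq 0$, each resolvent has spectral norm at most the reciprocal of the corresponding smallest eigenvalue, yielding
\begin{equation*}
\|\hat{\bw}^{\text{LIME}}-\bw^{\text{LIME}}\|_2\le \frac{\|\hat{\mathbf{b}}-\mathbf{b}\|_2}{\lambda_{\min}(\hSigma)}+\frac{\|\hSigma-\bSigma\|\,\|\mathbf{b}\|_2}{\lambda_{\min}(\hSigma)\,\lambda_{\min}(\bSigma)}.
\end{equation*}
The problem thus reduces to lower-bounding $\lambda_{\min}(\bSigma)$ and to controlling the two sampling errors $\|\hSigma-\bSigma\|$ and $\|\hat{\mathbf{b}}-\mathbf{b}\|_2$.

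For the eigenvalue bound, the permutation symmetry of $\text{Uni}(\{0,1\}^d)$ forces $\bSigma=(a-b)\bI+b\,\bone\bone^\top$ with $a=\mbe[\pi(\bz^\prime)z_1^\prime]$ and $b=\mbe[\pi(\bz^\prime)z_1^\prime z_2^\prime]$, so $\lambda_{\min}(\bSigma)=a-b$. A direct binomial computation gives $a=2^{-d}(1+e^{-1/\sigma^2})^{d-1}$ and $b=2^{-d}(1+e^{-1/\sigma^2})^{d-2}$, hence $a-b=2^{-d}e^{-1/\sigma^2}(1+e^{-1/\sigma^2})^{d-2}$; this quantity is exponentially small in both $d$ and $1/\sigma^2$ and is the source of the $2^{8d}e^{8/\sigma^2}$ factor. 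For the sampling errors, every entry of $\pi_i\bz_i^\prime\bz_i^{\prime\top}$ and of $\pi_i\bz_i^\prime y_i$ lies in $[0,1]$ (using $\pi\le1$ and $f\in[0,1]$), so Hoeffding's inequality applied entrywise with a union bound over the $O(d^2)$ entries gives $\|\hSigma-\bSigma\|$ and $\|\hat{\mathbf{b}}-\mathbf{b}\|_2$ of order $d\sqrt{\log(d/\delta)/n}$ with probability at least $1-\delta$; on the same event, once $n$ is large enough to push this deviation below $(a-b)/2$, Weyl's inequality yields $\lambda_{\min}(\hSigma)\ge(a-b)/2$, which removes the dependence on $\lambda$ from the first resolvent.

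Finally I would substitute these estimates together with the crude bound $\|\mathbf{b}\|_2\le\sqrt{d}\,a$ into the perturbation inequality and solve for the smallest $n$ making the right-hand side below $\epsilon$; collecting the powers of $d$, of $2^d$ and of $e^{1/\sigma^2}$ coming from $1/(a-b)$ and $1/(a-b)^2$, and splitting $\delta$ across the eigenvalue event and the two concentration events (the $\log(4d/\delta)$), reproduces the stated $n=\Omega(\epsilon^{-2}d^9 2^{8d}e^{8/\sigma^2}\log(4d/\delta))$. The main obstacle is exactly this interplay: since $\lambda$ may be as small as the original $\alpha/n$, the resolvent $(\hSigma+\lambda\bI)^{-1}$ cannot be controlled by $1/\lambda$ and must instead be controlled through a high-probability lower bound on $\lambda_{\min}(\hSigma)$, which rests entirely on the exponentially small spectral gap $a-b$; carefully propagating this exponential smallness through the squared resolvent term is the delicate step that drives the large sample-complexity factors.
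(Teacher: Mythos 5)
Your proposal follows essentially the same route as the paper's own proof: both reduce $\hat{\*w}^{\text{LIME}}$ and $\*w^{\text{LIME}}$ to closed-form (ridge) regression solutions, exploit the exchangeability of $\text{Uni}(\{0,1\}^d)$ to compute $\bSigma=(\alpha_1-\alpha_2)\*I+\alpha_2\*1\*1^\top$ exactly with the same $\alpha_1,\alpha_2$ and the same exponentially small gap $\alpha_1-\alpha_2$ as the source of the $2^{\Theta(d)}e^{\Theta(1/\sigma^2)}$ factors, concentrate $\hSigma$ and $\hat{\mathbf{b}}$ via Hoeffding-type bounds, and combine these through a first-order matrix perturbation inequality. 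The only differences are technical substitutions rather than a different argument—you use the resolvent identity plus Weyl's inequality to lower-bound $\lambda_{\min}(\hSigma)$, where the paper invokes the Garreau–von Luxburg perturbation lemma whose condition $\|\bSigma^{-1}(\bSigma_n-\bSigma)\|\leq 0.32$ plays exactly the role of your Weyl step, and you use entrywise Hoeffding with a union bound in place of matrix Hoeffding—and your bookkeeping in fact produces a slightly smaller sample-complexity threshold (order $d^3 2^{4d}e^{4/\sigma^2}$, close to the paper's appendix bound), which still suffices for the stated $\Omega(\cdot)$ condition.
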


Next, we present the sample complexity bound for \textsc{Glime} (refer to \cref{app:theorem_glime} for proof).

\begin{theorem}\label[theorem]{thm:over_regularization_generallime}
      Suppose $\*z^\prime {\sim} \mathcal{P}$ such that the largest eigenvalue of $\*z^\prime(\*z^\prime)^\top$  is bounded by $R$ and $\mathbb{E}[\*z^\prime(\*z^\prime)^\top] = (\alpha_1 - \alpha_2) \*I + \alpha_2 \*1\*1^\top, \|\text{Var}(\*z^\prime(\*z^\prime)^\top)\|_2 \leq \nu^2 $, $ |(\*z^\prime f(\*z))_i|\leq M$ for some $M>0$. $\{\*z_i^\prime\}_{i=1}^n$ are i.i.d. samples from $\mathcal{P}$ and are used to compute \textsc{Glime} explanation $\hat{\*w}^{\text{\textsc{Glime}}}$. For any $\epsilon > 0, \delta \in (0,1)$, if $n = \Omega(\epsilon^{-2}M^2 \nu^2 d^3 \gamma^4\log(4d/\delta))$ where $\gamma$ is a function of $\lambda, d, \alpha_1, \alpha_2$, we have $\mbp(\|\hat{\*w}^{\text{\textsc{Glime}}} - \*w^{\text{\textsc{Glime}}}\|_2 < \epsilon) \geq  1 - \delta $. $\*w^{\text{\textsc{Glime}}} = \lim_{n\to\infty} \hat{\*w}^{\text{\textsc{Glime}}}$.
\end{theorem}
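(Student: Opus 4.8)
The plan is to exploit the closed-form solutions that the squared loss, the linear model $g(\*z^\prime)=\*v^\top\*z^\prime$ and the ridge penalty $R(\*v)=\|\*v\|_2^2$ afford. Writing $\bSigma=\mbe[\*z^\prime(\*z^\prime)^\top]$, $\hSigma=\frac1n\sum_{i=1}^n\*z_i^\prime(\*z_i^\prime)^\top$, $\*b=\mbe[\*z^\prime f(\*z)]$, $\hat{\*b}=\frac1n\sum_{i=1}^n\*z_i^\prime f(\*z_i)$, and letting $c$ denote the effective ridge coefficient (the $\lambda/Z$ of \autoref{eq:equivalent}), the first-order optimality conditions give $\*w^{\textsc{Glime}}=\*A^{-1}\*b$ and $\hat{\*w}^{\textsc{Glime}}=\hat{\*A}^{-1}\hat{\*b}$ with $\*A=\bSigma+c\*I$ and $\hat{\*A}=\hSigma+c\*I$. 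Since $\hSigma\succeq\*0$, the ridge term makes $\hat{\*A}$ invertible for every realization, with $\|\hat{\*A}^{-1}\|_2\le 1/c$ deterministically; this invertibility is the key structural simplification relative to the LIME analysis, where no such lower bound on the Gram matrix is available.

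Next I would apply a resolvent perturbation identity. Adding and subtracting $\hat{\*A}^{-1}\*b$ gives
\begin{equation*}
\hat{\*w}^{\textsc{Glime}}-\*w^{\textsc{Glime}}=\hat{\*A}^{-1}(\hat{\*b}-\*b)+\hat{\*A}^{-1}(\bSigma-\hSigma)\*w^{\textsc{Glime}},
\end{equation*}
hence $\|\hat{\*w}^{\textsc{Glime}}-\*w^{\textsc{Glime}}\|_2\le\|\hat{\*A}^{-1}\|_2\big(\|\hat{\*b}-\*b\|_2+\|\hSigma-\bSigma\|_2\,\|\*w^{\textsc{Glime}}\|_2\big)$. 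I would then control the deterministic factors: $\|\*w^{\textsc{Glime}}\|_2\le\|\*A^{-1}\|_2\|\*b\|_2$ with $\|\*b\|_2\le M\sqrt d$ from the entrywise bound $|(\*z^\prime f(\*z))_i|\le M$, and $\|\*A^{-1}\|_2$ computed exactly from the spectrum of $\bSigma=(\alpha_1-\alpha_2)\*I+\alpha_2\bone\bone^\top$, whose eigenvalues are $\alpha_1+(d-1)\alpha_2$ (simple) and $\alpha_1-\alpha_2$ (multiplicity $d-1$). These explicit spectral quantities, together with $c$, are exactly what gets collected into the factor $\gamma=\gamma(\lambda,d,\alpha_1,\alpha_2)$.

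It then remains to concentrate the two empirical averages, which is where the $n$-dependence is produced. For the vector term each summand $\*z_i^\prime f(\*z_i)$ is entrywise bounded by $M$, so a coordinatewise Hoeffding bound with a union bound over the $d$ coordinates yields $\|\hat{\*b}-\*b\|_2=O\!\big(M\sqrt d\,\sqrt{\log(d/\delta)/n}\big)$ with probability at least $1-\delta/2$. For the matrix term I would bound the operator norm by the Frobenius norm, $\|\hSigma-\bSigma\|_2\le\|\hSigma-\bSigma\|_F$, and apply a Bernstein bound to each of the $O(d^2)$ entries, controlling the per-entry variance through the hypothesis $\|\mathrm{Var}(\*z^\prime(\*z^\prime)^\top)\|_2\le\nu^2$; summing over the $d^2$ entries costs a factor $d$ and gives $\|\hSigma-\bSigma\|_2=O\!\big(d\,\nu\,\sqrt{\log(d/\delta)/n}\big)$ with probability at least $1-\delta/2$. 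Substituting both bounds, absorbing all spectral constants into $\gamma$, and requiring the right-hand side to be below $\epsilon$ makes the dominant term scale like $\gamma^2 M\nu d^{3/2}\sqrt{\log(d/\delta)/n}$, which upon solving for $n$ yields the stated $n=\Omega(\epsilon^{-2}M^2\nu^2 d^3\gamma^4\log(4d/\delta))$; a union bound over the two events gives overall confidence $1-\delta$.

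The main obstacle is the matrix-concentration step together with the bookkeeping of $\gamma$: one must verify that the two hypotheses ($R$-bounded largest eigenvalue, supplying the Bernstein boundedness input, and $\nu^2$-bounded variance) are precisely what the inequality requires, and then track how the $\|\cdot\|_2\le\|\cdot\|_F$ passage, the source of the extra factor $d$ and hence of $d^3$ rather than $d$, interacts with the spectral norms of $\*A^{-1}$ and $\hat{\*A}^{-1}$ so that they combine into $\gamma^4$. Everything else is a routine chaining of the triangle inequality with the two tail bounds.
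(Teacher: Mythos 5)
Your proposal follows the same overall skeleton as the paper's proof: write the ridge solutions in closed form, concentrate the empirical second-moment matrix and the empirical moment vector (the paper also uses coordinatewise Hoeffding for the vector term), and chain the two through a perturbation bound. Your matrix-concentration step differs in tooling: you pass to the Frobenius norm and apply entrywise Bernstein, paying an extra factor of $d$, whereas the paper applies matrix Hoeffding directly to $\|\*\Sigma_n-\*\Sigma\|_2$ with variance proxy $\nu^2$. That difference is benign, since the stated complexity $\epsilon^{-2}M^2\nu^2d^3\gamma^4\log(4d/\delta)$ is loose enough to absorb your extra $d$ (the paper's own $n_1,n_2,n_3$ are in fact smaller than the stated bound).

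The genuine gap is in your resolvent step. You replace the paper's perturbation lemma with the deterministic bound $\|\hat{\*A}^{-1}\|_2\le 1/c$ and propose to fold $1/c$ into $\gamma$. But $1/c$ is not controlled by the theorem's $\gamma$: in the paper, $\gamma^2=\|\*\Sigma^{-1}\|_F^2=d\beta_1^2+(d^2-d)\beta_2^2$ with $\*\Sigma=(\alpha_1+\lambda-\alpha_2)\*I+\alpha_2\*1\*1^\top$, and this remains bounded as $\lambda\to0$ (of order $\sqrt{d}/(\alpha_1-\alpha_2)$, because the population matrix stays well conditioned), whereas $1/c$ diverges; at $\lambda=0$ your invertibility claim is vacuous. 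Consequently your argument proves a statement of the theorem's syntactic form but with a strictly weaker $\gamma$, and it cannot deliver \cref{coro:concentrate_GLIME_Binomial}, whose bound $\Omega(\epsilon^{-2}d^5e^{4/\sigma^2}\log(4d/\delta))$ carries no $1/\lambda$ factor (and the effective ridge coefficient for \textsc{Glime-Binomial} is tiny, since it must match LIME's $\lambda/n$ scaling). The repair is exactly what the paper does: bound $\|\hat{\*A}^{-1}\|_2$ on a high-probability event rather than deterministically. Concretely, on the event $\|\hSigma-\bSigma\|_2\le 1/(2\|\*A^{-1}\|_2)$ one gets $\|\hat{\*A}^{-1}\|_2\le 2\|\*A^{-1}\|_2$; the paper packages this as the condition $\|\*\Sigma^{-1}(\*\Sigma_n-\*\Sigma)\|\le 0.32$, under which $\|\*\Sigma_n^{-1}\*\Gamma_n-\*\Sigma^{-1}\*\Gamma\|\le 2\|\*\Sigma^{-1}\|_F\|\*\Gamma_n-\*\Gamma\|_2+2\|\*\Sigma^{-1}\|_F^2\|\*\Gamma\|\|\*\Sigma_n-\*\Sigma\|$ \cite{garreau2021does}, so that only population spectral quantities enter the final bound. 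With that one change, your exact identity $\hat{\*w}^{\textsc{Glime}}-\*w^{\textsc{Glime}}=\hat{\*A}^{-1}(\hat{\*b}-\*b)+\hat{\*A}^{-1}(\bSigma-\hSigma)\*w^{\textsc{Glime}}$ goes through and yields the claimed rate.
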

Since \textsc{Glime}\textsc{-Binomial} samples from a binomial distribution, which is sub-Gaussian with parameters $M = \sqrt{d}$, $\nu = 2$, $\alpha_1 = 1/(1+e^{-1/\sigma^2})$, $\alpha_2 = 1/(1+e^{-1/\sigma^2})^2$, and $\gamma(\alpha_1, \alpha_2, d) = d e^{2/\sigma^2}$ (refer to \cref{app:theorem_glime} for proof), we derive the following corollary:
\begin{corollary}\label[corollary]{coro:concentrate_GLIME_Binomial}
    Suppose $\{\*z_i^\prime\}_{i=1}^n $ are i.i.d. samples from $\widetilde{\mathcal{P}}(\*z^\prime, \|\*z^\prime\|_0=k) = e^{{k}/{\sigma^2}}/(1+e^{{1}/{\sigma^2}})^d, k=1,\dots, d$ and are used to compute \textsc{Glime-Binomial} explanation. For any $\epsilon >0, \delta \in (0,1)$, if $n = \Omega(\epsilon^{-2} d^5 e^{4 /\sigma^2}\log(4d/\delta)),$  we have $\mbp(\|\hat{\*w}^{\text{Binomial}} - \*w^{\text{Binomial}}\|_2 < \epsilon ) \geq 1 - \delta $. $\*w^{\text{Binomial}} = \lim_{n\to\infty} \hat{\*w}^{\text{Binomial}}$.
\end{corollary}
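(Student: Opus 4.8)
The plan is to derive \Cref{coro:concentrate_GLIME_Binomial} purely as a specialization of \Cref{thm:over_regularization_generallime} to the transformed binomial distribution $\widetilde{\mathcal{P}}$, so essentially all the work is to (i) check that $\widetilde{\mathcal{P}}$ meets the hypotheses of that theorem, (ii) evaluate the distribution-dependent constants $\alpha_1,\alpha_2,\nu,M$ and the conditioning quantity $\gamma$, and (iii) substitute them into the generic rate $n=\Omega(\epsilon^{-2}M^2\nu^2 d^3\gamma^4\log(4d/\delta))$ and simplify. The enabling observation is that a draw $\*z^\prime\sim\widetilde{\mathcal{P}}$ is exactly a vector of \emph{independent} Bernoulli coordinates with $\mbp(z_i^\prime=1)=p:=1/(1+e^{-1/\sigma^2})$; independence together with $z_i^\prime\in\{0,1\}$ makes every moment below a one-line computation.

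I would first assemble the second-moment matrix. The diagonal entries are $\mbe[(z_i^\prime)^2]=\mbe[z_i^\prime]=p$, and by independence the off-diagonal entries are $\mbe[z_i^\prime z_j^\prime]=p^2$; hence $\mbe[\*z^\prime(\*z^\prime)^\top]=(\alpha_1-\alpha_2)\*I+\alpha_2\*1\*1^\top$ with $\alpha_1=p=1/(1+e^{-1/\sigma^2})$ and $\alpha_2=p^2=1/(1+e^{-1/\sigma^2})^2$, which is exactly the structural form required by the theorem. The boundedness constants follow just as directly: $\*z^\prime(\*z^\prime)^\top$ has a single nonzero eigenvalue $\|\*z^\prime\|_0\le d$ so $R\le d$; since $f(\*z)\in[0,1]$ and $z_i^\prime\in\{0,1\}$, the vector $\*z^\prime f(\*z)$ has entries bounded by $1$ and $\ell_2$-norm at most $\sqrt{d}$, giving $M=\sqrt{d}$; and bounding the matrix-variance statistic of the rank-one terms $\*z^\prime(\*z^\prime)^\top$ entrywise (each product of at most two independent Bernoullis has variance $\le 1/4$) yields $\nu=2$.

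The crux, and the only place the exponential factor is born, is the conditioning quantity $\gamma$. The eigenvalues of $\mbe[\*z^\prime(\*z^\prime)^\top]=(\alpha_1-\alpha_2)\*I+\alpha_2\*1\*1^\top$ are $\alpha_1+(d-1)\alpha_2$ along $\*1$ and $\alpha_1-\alpha_2$ on its orthogonal complement, so the smallest eigenvalue is $\alpha_1-\alpha_2=p(1-p)=e^{-1/\sigma^2}/(1+e^{-1/\sigma^2})^2$, which decays like $e^{-1/\sigma^2}$ as $\sigma\to0$. Because the \textsc{Glime} estimator inverts a regularized version of this design matrix, $\gamma$ is controlled by $1/(\alpha_1-\alpha_2)\asymp e^{1/\sigma^2}$ up to polynomial factors in $d$ and the regularization strength; raising the resulting $\gamma$ to the fourth power produces the $e^{4/\sigma^2}$ in the final bound, and combining it with $M^2=d$, $\nu^2=4$, and the ambient $d^3$ collapses the rate to $n=\Omega(\epsilon^{-2}d^5 e^{4/\sigma^2}\log(4d/\delta))$. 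I expect the genuine difficulty to be pinning down the exact polynomial-in-$d$ and regularization dependence hidden in $\gamma$ --- i.e., tracking how the exponentially small eigenvalue gap, the term $\lambda/Z$, and the dimension interact inside the inverse design matrix --- rather than the moment evaluations, which are routine. The payoff is the comparison against \Cref{thm:over_regularization_lime}: the binomial scheme incurs only $e^{4/\sigma^2}$ where direct uniform LIME sampling incurs $e^{8/\sigma^2}$, which is exactly the claimed exponential acceleration.
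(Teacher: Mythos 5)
Your proposal is correct and follows essentially the same route as the paper's proof: it specializes \cref{thm:over_regularization_generallime} by computing $\alpha_1=p$, $\alpha_2=p^2$ for $p=1/(1+e^{-1/\sigma^2})$ (you via coordinate independence, the paper via sums over $\|\mathbf{z}^\prime\|_0=k$---identical values), bounding $\nu$ and $M$, and tracing the exponential factor to $\alpha_1-\alpha_2=p(1-p)\asymp e^{-1/\sigma^2}$ entering $\beta_1,\beta_2$ and hence $\gamma^4\asymp d^2e^{4/\sigma^2}$, exactly as the paper's appendix does. The one bookkeeping slip (which mirrors an inconsistency in the paper itself, whose main text also asserts $M=\sqrt{d}$) is that with your $M^2=d$ the generic rate $\epsilon^{-2}M^2\nu^2d^3\gamma^4\log(4d/\delta)$ yields $d^6e^{4/\sigma^2}$ rather than the stated $d^5e^{4/\sigma^2}$; the $d^5$ follows only if $M$ is read as the coordinate-wise bound $|(\mathbf{z}^\prime f(\mathbf{z}))_i|\le 1$ appearing in the theorem's hypothesis, rather than as the $\ell_2$-norm bound $\|\mathbf{\Gamma}\|\le\sqrt{d}$.
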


Comparing the sample complexities outlined in \cref{thm:over_regularization_lime} and \cref{coro:concentrate_GLIME_Binomial}, it becomes evident that LIME necessitates an exponential increase of $\exp(d, \sigma^{-2})$ more samples than \textsc{Glime}\textsc{-Binomial} for convergence. Despite both LIME and \textsc{Glime}\textsc{-Binomial} samples being defined on the binary set $\{0, 1\}$, the weight $\pi(\*z^\prime)$ associated with a sample $\*z^\prime$ in LIME is notably small. Consequently, the square loss term in LIME is significantly diminished compared to that in \textsc{Glime}\textsc{-Binomial}. This situation results in the domination of the regularization term over the square loss term, leading to solutions that are close to zero. For stable solutions, it is crucial that the square loss term is comparable to the regularization term. Consequently, \textsc{Glime}\textsc{-Binomial} requires significantly fewer samples than LIME to achieve stability.
% In \cref{thm:over_regularization_lime}, $\sum_{i=1}^n \pi(\*z_i^\prime)\|\*z_i^\prime\|_0/d$ can be regarded as the coefficient of $\|\*v\|^2$ in the sum-of-square term in \autoref{eq:ori_lime} where $\lambda$ is the regularization coefficient of $\|\*v\|^2$. Hence, \cref{thm:over_regularization_lime} demonstrates that LIME needs $O\big(\lambda t ({2e^{\frac{1}{\sigma^2}}}/{(1+e^{\frac{1}{\sigma^2}}}))^{d-1}\sqrt{\log({1}/{\delta}})\big)$ samples for the first term to be comparable with the regularization strength. On the other hand, \cref{thm:over_regularization_generallime} states that \textsc{Glime} only requires $O\big(\lambda t \tau^{-1}\sqrt{\log({1}/{\delta}})\big)$ samples to achieve the same. $\tau^{-1}$ is independent of $d$ and has constant bound for \textsc{Glime}\textsc{-Binomial}. It is evident that LIME converges exponentially slower especially when $\sigma$ is small and $d$ is large as $({2e^{\frac{1}{\sigma^2}}}/{(1+e^{\frac{1}{\sigma^2}}}))^{d-1}$ approaches $2^{d-1}$ when $\sigma$ is small. 

% \begin{remark}
% Our discussion above suggests that using weighting to enforce locality may not be suitable when regularization is present. However, this does not imply that regularization is unnecessary. In cases where we have fewer data points $n$ than the dimensionality $d$, regularization is actually beneficial, as it helps address the low-rank and ill-conditioned nature of the problem. 
% \end{remark}

\subsection{Designing locally faithful explanation methods within \textsc{Glime}}\label[section]{sec:local_unbiased}
\begin{figure}
    \centering
    \includegraphics[width=13cm]{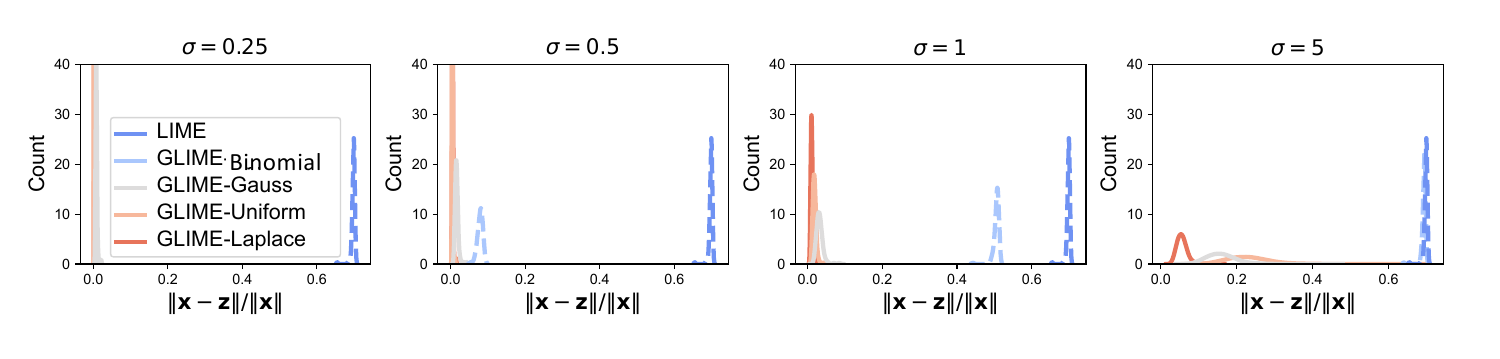}
         \caption{\textbf{The distribution of sample distances to the original input.} The samples produced by LIME display a considerable distance from the original input $\*x$, whereas the samples generated by \textsc{Glime} demonstrate a more localized distribution. LIME has a tendency to overlook sampling points that are in close proximity to $\*x$.}
         \label{fig:sample_distance}
\end{figure}
\textbf{Non-local and biased sampling in LIME.} LIME employs uniform sampling from $\{0,1\}^d$ and subsequently maps the samples to the original input space $\mathcal{X}$ with the inclusion of a reference. Despite the integration of a weighting function to enhance locality, the samples $\{\mathbf{z}_i\}_{i=1}^n$ generated by LIME often exhibit non-local characteristics, limiting their efficacy in capturing the local behaviors of the model $f$ (as depicted in \autoref{fig:sample_distance}). This observation aligns with findings in \cite{laugel2018defining, rahnama2019study}, which demonstrate that LIME frequently approximates the global behaviors instead of the local behaviors of $f$. As illustrated earlier, the weighting function contributes to LIME's instability, emphasizing the need for explicit enforcement of locality in the sampling process.
% Additionally, the biased sampling towards the reference also contributes to the generation of different explanations for different reference points.

\textbf{Local and unbiased sampling in \textsc{Glime}.} In response to these challenges, \textsc{Glime} introduces a sampling procedure that systematically enforces locality without reliance on a reference point. One approach involves sampling $\mathbf{z}^\prime \sim \mathcal{P} = \mathcal{N}(\mathbf{0}, \sigma^2 \mathbf{I})$ and subsequently obtaining $\mathbf{z} = \mathbf{x} + \mathbf{z}^\prime$. This method, referred to as \textsc{Glime-Gauss}, utilizes a weighting function $\pi(\cdot) \equiv 1$, with other components chosen to mirror those of LIME. The feature attributions derived from this approach successfully mitigate the aforementioned issues. Similarly, alternative distributions, such as $\mathcal{P} = \text{Laplace}(\mathbf{0}, \sigma)$ or $\mathcal{P} = \text{Uni}([-\sigma, \sigma]^d)$, can be employed, resulting in explanation methods known as \textsc{Glime-Laplace} and \textsc{Glime-Uniform}, respectively.
% \autoref{fig:sample_distance} shows that these methods samples locally.
\subsection{Sampling distribution selection for user-specific objectives}
% Users can customize the explanation method in \textsc{Glime} to align with their specific objectives (as illustrated in Figure \ref{fig:local_fidelity_generallime}). For example, selecting a distribution that maximizes local fidelity within a neighborhood of radius $\epsilon$. \textsc{Glime}'s flexible sample distribution allows users to choose distributions that suit their needs. Additionally, \textsc{Glime} provides the option to incorporate feature correlation into the sampling process, further enhancing its flexibility. Overall, \textsc{Glime} empowers users to make more tailored choices according to their objectives.
Users may possess specific objectives they wish the explanation method to fulfill. For instance, if a user seeks to enhance local fidelity within a neighborhood of radius $\epsilon$, they can choose a distribution and corresponding parameters aligned with this objective (as depicted in \autoref{fig:local_fidelity_generallime}). The flexible design of the sample distribution in \textsc{Glime} empowers users to opt for a distribution that aligns with their particular use cases. Furthermore, within the \textsc{Glime} framework, it is feasible to integrate feature correlation into the sampling distribution, providing enhanced flexibility. In summary, \textsc{Glime} affords users the capability to make more tailored choices based on their individual needs and objectives.

\section{Experiments}\label[section]{sec:experiments}
\textbf{Dataset and models.} Our experiments are conducted on the ImageNet dataset\footnote{Code is available at \url{https://github.com/thutzr/GLIME-General-Stable-and-Local-LIME-Explanation}}. Specifically, we randomly choose 100 classes and select an image at random from each class. The models chosen for explanation are ResNet18 \cite{he2016deep} and the tiny Swin-Transformer \cite{liu2021swin} (refer to \cref{app:swin_transformer} for results). Our implementation is derived from the official implementation of LIME\footnote{\url{https://github.com/marcotcr/lime}}. The default segmentation algorithm in LIME, Quickshift \cite{vedaldi2008quick}, is employed. Implementation details of our experiments are provided in \cref{app:details}. For experiment results on text data, please refer to \cref{app:text_data}. For experiment results on ALIME, please refer to \cref{app:alime}. 

\textbf{Metrics.} (1) \emph{Stability}: To gauge the stability of an explanation method, we calculate the average top-$K$ Jaccard Index (JI) for explanations generated by 10 different random seeds. Let $\*w_1, \ldots, \*w_{10}$ denote the explanations obtained from 10 random seeds. The indices corresponding to the top-$K$ largest values in $\*w_i$ are denoted as $R_{i,:K}$. The average Jaccard Index between pairs of $R_{i,:K}$ and $R_{j,:K}$ is then computed, where JI$(A,B) = {|A\cap B|}/{|A\cup B|}$.

(2) \emph{Local Fidelity}: To evaluate the local fidelity of explanations, reflecting how well they capture the local behaviors of the model, we employ two approaches. For LIME, which uses a non-local sampling neighborhood, we use the $R^2$ score returned by the LIME implementation for local fidelity assessment \cite{visani2020optilime}. Within \textsc{Glime}, we generate samples $\{\*z_i\}_{i=1}^m$ and $\{\*z_i^\prime\}_{i=1}^m$ from the neighborhood $B_{\*x}(\epsilon)$. The squared difference between the model's output and the explanation's output on these samples is computed. Specifically, for a sample $\*z$, we calculate $(f(\*z) - \hat{\*w}^\top \*z^\prime)^2$ for the explanation $\hat{\*w}$. The local fidelity of an explanation $\hat{\*w}$ at the input $\*x$ is defined as $1/(1 + \frac{1}{m}\sum_i (f(\*z_i) - \hat{\*w}^\top \*z_i^\prime)^2)$, following the definition in \cite{laugel2018defining}. To ensure a fair comparison between different distributions in \textsc{Glime}, we set the variance parameter of each distribution to match that of the Gaussian distribution. For instance, when sampling from the Laplace distribution, we use $\text{Laplace}(\*0, \sigma/\sqrt{2})$, and when sampling from the uniform distribution, we use $\text{Uni}([-\sqrt{3}\sigma, \sqrt{3}\sigma]^d)$.

\subsection{Stability of LIME and \textsc{Glime}}
\begin{figure}
    \centering
   \begin{subfigure}[b]{0.48\textwidth}
    \centering
         \includegraphics[width=7cm]{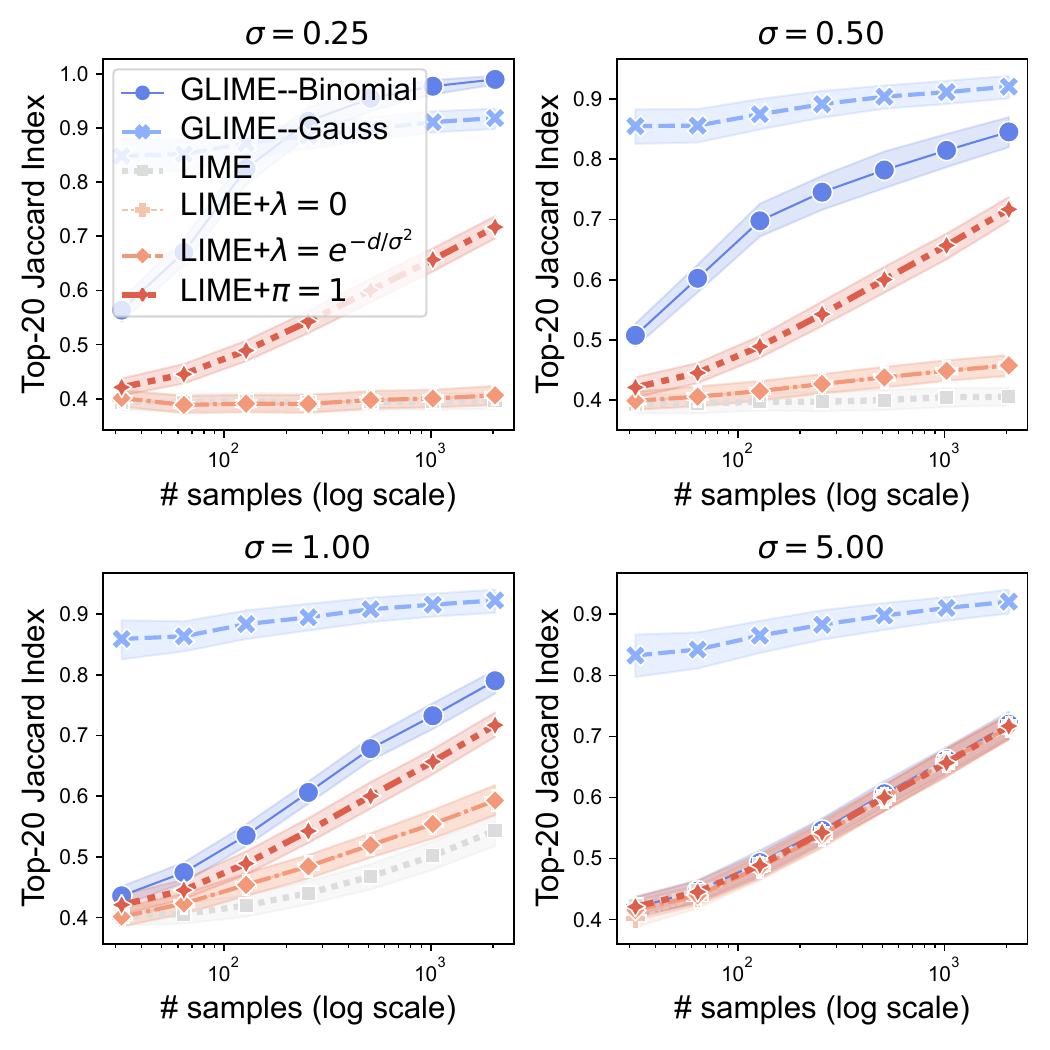}
         \caption{\textbf{Stability of various methods.} The reported metric is the Top-20 Jaccard index. Instances of LIME with no regularization and no weighting are denoted as LIME$+\lambda=0$ and LIME$+\pi=1$, respectively. LIME exhibits instability when $\sigma$ is small, whereas \textsc{Glime} demonstrates enhanced stability across different $\sigma$ values. Notably, LIME achieves greater stability without weighting or regularization when $\sigma$ is small. Conversely, regularization and weighting have minimal impact on the stability of LIME when $\sigma$ is large.}
         \label{fig:ji_full}
   \end{subfigure}
\hfill
   \begin{subfigure}[b]{0.48\textwidth}
      \centering
    \includegraphics[width=\textwidth]{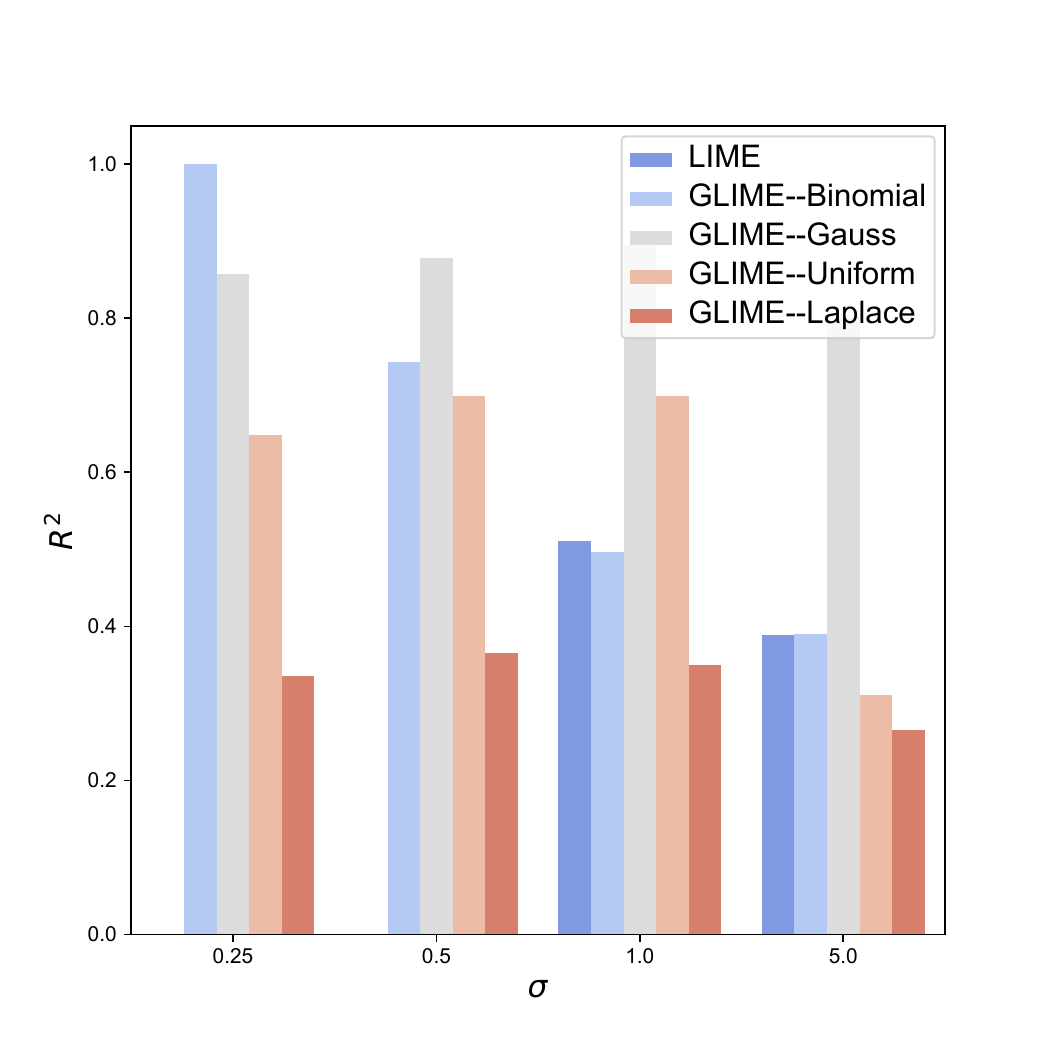}
    \caption{\textbf{$R^2$ comparison between LIME and various \textsc{Glime} methods with different sampling distributions.} We utilize 2048 samples to compute explanations and the corresponding $R^2$ for each image and each method. LIME exhibits nearly zero $R^2$ when $\sigma=0.25$ and $0.5$, suggesting minimal explanatory power. In general, the $R^2$ of LIME is consistently lower than that of \textsc{Glime}, underscoring \textsc{Glime}'s enhancement in local fidelity.}
    \label{fig:r2_bar}
   \end{subfigure}
    \caption{\textsc{Glime} consistently enhances stability and local fidelity compared to LIME across various values of $\sigma$.}
    \label{fig:stability}
\end{figure}
% \textbf{LIME is unstable and regularization/weighting is to blame.} In \autoref{fig:ji_full}, LIME$+\pi=1$ is LIME without weighting function $\pi$, i.e., all samples have the same weight 1. It is shown that LIME$+\pi=1$ is much more stable than its weighted counterpart when $\sigma$ is small (e.g., $\sigma=0.25, 0.5$). This suggests that weighting is a source of instability. LIME$+\lambda=0$ is LIME without regularization, it can observed that it is more stable than regularized LIME. However, this improvement is not significant. The reason is that when $\sigma$ is small, all samples have weight close to zero which cause the Ridge regression problem to be low-rank and leads to unstable solutions. When $\sigma$ is large, all samples have notable weights and thus regularization has little effect on the solution. For example, when $\sigma=5, d=40$, most samples would have weight $\approx \exp(-\frac{d}{2\sigma^2}) \approx 0.45$. Even samples those only have one non-zero element left have weight $\approx 0.2$. Regularization would not dominate even when only a few samples are available. This argument is confirmed by the similar performance of LIME, {LIME$+\pi=1$} and {LIME$+\lambda=0$} when $\sigma=1,5$.
\textbf{LIME's instability and the influence of regularization/weighting.} In \autoref{fig:ji_full}, it is evident that LIME without the weighting function ($\text{LIME}+\pi=1$) demonstrates greater stability compared to its weighted counterpart, especially when $\sigma$ is small (e.g., $\sigma=0.25, 0.5$). This implies that the weighting function contributes to instability in LIME. Additionally, we observe that LIME without regularization ($\text{LIME}+\lambda=0$) exhibits higher stability than the regularized LIME, although the improvement is not substantial. This is because, when $\sigma$ is small, the sample weights approach zero, causing the Ridge regression problem to become low-rank, leading to unstable solutions. Conversely, when $\sigma$ is large, significant weights are assigned to all samples, reducing the effectiveness of regularization. For instance, when $\sigma=5$ and $d=40$, most samples carry weights around 0.45, and even samples with only one non-zero element left possess weights of approximately 0.2. In such scenarios, the regularization term does not dominate, even with limited samples. This observation is substantiated by the comparable performance of LIME, LIME$+\pi=1$, and LIME$+\lambda=0$ when $\sigma=1$ and $5$. Further results are presented in \cref{app:stability_lime}.

\textbf{Enhancing stability in LIME with \textsc{Glime}.} In \autoref{fig:ji_full}, it is evident that LIME achieves a Jaccard Index of approximately 0.4 even with over 2000 samples when using the default $\sigma=0.25$. In contrast, both \textsc{Glime-Binomial} and \textsc{Glime-Gauss} provide stable explanations with only 200-400 samples. Moreover, with an increase in the value of $\sigma$, the convergence speed of LIME also improves. However, \textsc{Glime-Binomial} consistently outperforms LIME, requiring fewer samples for comparable stability. The logarithmic scale of the horizontal axis in \autoref{fig:ji_full} highlights the exponential faster convergence of \textsc{Glime} compared to LIME.

\textbf{Convergence of LIME and \textsc{Glime-Binomial} to a common limit.} In \autoref{fig:lime_converge} of \cref{app:common_limit}, we explore the difference and correlation between explanations generated by LIME and \textsc{Glime-Binomial}. Mean Squared Error (MSE) and Mean Absolute Error (MAE) are employed as metrics to quantify the dissimilarity between the explanations, while Pearson correlation and Spearman rank correlation assess their degree of correlation. As the sample size increases, both LIME and \textsc{Glime-Binomial} exhibit greater similarity and higher correlation. The dissimilarity in their explanations diminishes rapidly, approaching zero when $\sigma$ is significantly large (e.g., $\sigma=5$).

\subsection{Local fidelity of LIME and \textsc{Glime}}
\textbf{Enhancing local fidelity with \textsc{Glime}.} A comparison of the local fidelity between LIME and the explanation methods generated by \textsc{Glime} is presented in \autoref{fig:r2_bar}. Utilizing 2048 samples for each image to compute the $R^2$ score, \textsc{Glime} consistently demonstrates superior local fidelity compared to LIME. Particularly, when $\sigma=0.25$ and $0.5$, LIME exhibits local fidelity that is close to zero, signifying that the linear approximation model $(\hat{\*w}^{\text{LIME}})^\top \*z^\prime$ is nearly constant. Through the explicit integration of locality into the sampling process, \textsc{Glime} significantly improves the local fidelity of the explanations.

\begin{figure}
    \centering
    \includegraphics[width=12cm]{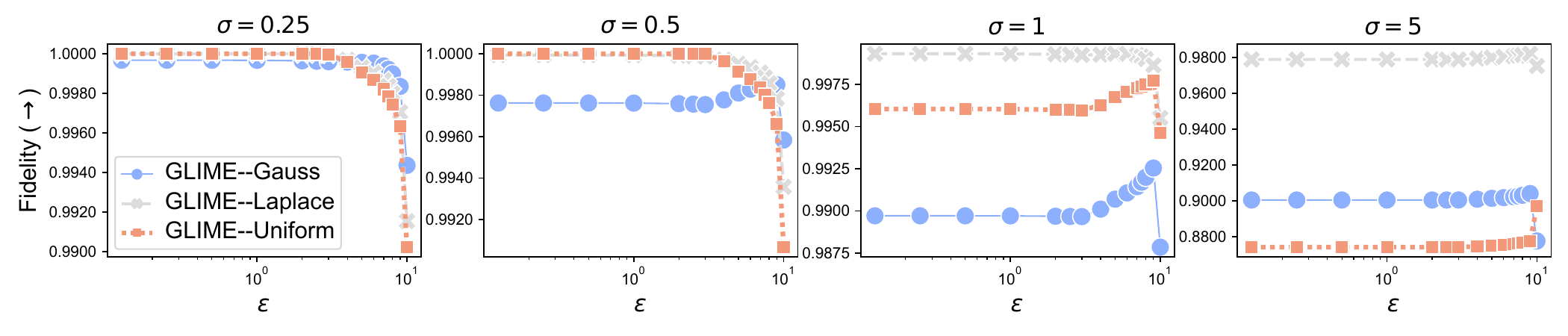}
    \caption{\textbf{Local fidelity of \textsc{Glime} across different neighborhood radii.} Explanations produced under a distribution with a standard deviation of $\sigma$ demonstrate the ability to capture behaviors within local neighborhoods with radii exceeding $\sigma$.}
    \label{fig:local_fidelity_generallime}
\end{figure}

\textbf{Local fidelity analysis of \textsc{Glime} under various sampling distributions.} In \autoref{fig:local_fidelity_generallime}, we assess the local fidelity of \textsc{Glime} employing diverse sampling distributions: $\mathcal{N}(\*0, \sigma^2\*I)$, $\text{Laplace}(\*0, \sigma/\sqrt{2})$, and $\text{Uni}([-\sqrt{3}\sigma, \sqrt{3}\sigma]^d)$. The title of each sub-figure corresponds to the standard deviation of these distributions. Notably, we observe that the value of $\sigma$ does not precisely align with the radius $\epsilon$ of the intended local neighborhood for explanation. Instead, local fidelity tends to peak at larger $\epsilon$ values than the corresponding $\sigma$. Moreover, different sampling distributions achieve optimal local fidelity for distinct $\epsilon$ values. This underscores the significance of selecting an appropriate distribution and parameter values based on the specific radius $\epsilon$ of the local neighborhood requiring an explanation. Unlike LIME, \textsc{Glime} provides the flexibility to accommodate such choices. For additional results and analysis, please refer to \cref{app:local_fidelity_generallime}.

\subsection{Human experiments}
In addition to numerical experiments, we conducted human-interpretability experiments to evaluate whether \textsc{Glime} provides more meaningful explanations to end-users. The experiments consist of two parts, with 10 participants involved in each. The details of the procedures employed in conducting the experiments is presented in the following.
\begin{enumerate}[itemsep=2pt,topsep=0pt,parsep=0pt, leftmargin=15pt]
    \item Can \textsc{Glime} improve the comprehension of the model's predictions? To assess this, we choose images for which the model's predictions are accurate. Participants are presented with the original images, accompanied by explanations generated by both LIME and \textsc{Glime}. They are then asked to evaluate the degree of alignment between the explanations from these algorithms and their intuitive understanding. Using a 1-5 scale, where 1 indicates a significant mismatch and 5 signifies a strong correspondence, participants rate the level of agreement.
    \item Can \textsc{Glime} assist in identifying the model's errors? To explore this, we select images for which the model's predictions are incorrect. Participants receive the original images along with explanations generated by both LIME and \textsc{Glime}. They are then asked to assess the degree to which these explanations aid in understanding the model's behaviors and uncovering the reasons behind the inaccurate predictions. Using a 1-5 scale, where 1 indicates no assistance and 5 signifies substantial aid, participants rate the level of support provided by the explanations.
\end{enumerate}
\cref{fig:human} presents the experimental results. When participants examined images with accurate model predictions, along with explanations from LIME and \textsc{Glime}, they assigned an average score of 2.96 to LIME and 3.37 to \textsc{Glime}. On average, \textsc{Glime} received a score 0.41 higher than LIME. Notably, in seven out of the ten instances, \textsc{Glime} achieved a higher average score than LIME.

In contrast, when participants examined images with incorrect model predictions, accompanied by explanations from LIME and \textsc{Glime}, they assigned an average score of 2.33 to LIME and 3.42 to \textsc{Glime}. Notably, \textsc{Glime} outperformed LIME with an average score 1.09 higher across all ten images. These results strongly indicate that \textsc{Glime} excels in explaining the model's behaviors.
\begin{figure}
    \centering
   \begin{subfigure}[b]{0.48\textwidth}
    \centering
         \includegraphics[height=4cm]{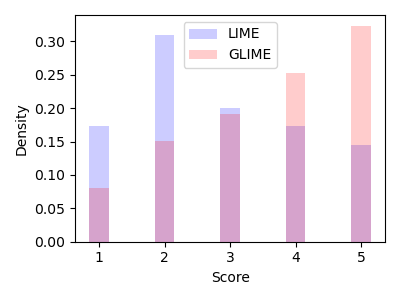}
         \caption{\textbf{Human-interpretability results for accurate predictions.} Participants consistently rated \textsc{Glime} significantly higher than LIME, indicating that \textsc{Glime} serves as a more effective tool for explaining model predictions.}
    \label{fig:human_1}
   \end{subfigure}
\hfill
   \begin{subfigure}[b]{0.48\textwidth}
      \centering
    \includegraphics[height=4cm]{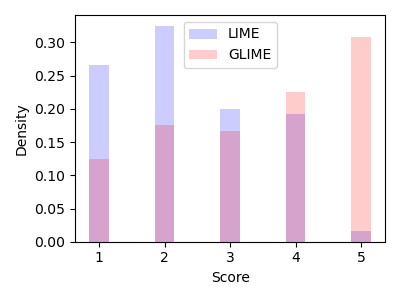}
    \caption{\textbf{Human-interpretability results for incorrect predictions.} Participants consistently rated \textsc{Glime} significantly higher than LIME, indicating that \textsc{Glime} excels in identifying the model's errors more effectively than LIME.}
    \label{fig:human_2}
   \end{subfigure}
    \caption{\textsc{Glime} helpes explaining model predictions better than LIME.}
    \label{fig:human}
\end{figure}

\section{Related work}
\textbf{Post-hoc local explanation methods.} In contrast to inherently interpretable models, black-box models can be explained through post-hoc explanation methods, which are broadly categorized as model-agnostic or model-specific. Model-specific approaches, such as Gradient \cite{baehrens2010explain, simonyan2013deep}, SmoothGrad \cite{smilkov2017smoothgrad}, and Integrated Gradient \cite{sundararajan2017axiomatic}, assume that the explained model is differentiable and that gradient access is available. For instance, SmoothGrad generates samples from a Gaussian distribution centered at the given input and computes their average gradient to mitigate noise. On the other hand, model-agnostic methods, including LIME \cite{ribeiro2016should} and Anchor \cite{ribeiro2018anchors}, aim to approximate the local model behaviors using interpretable models, such as linear models or rule lists. Another widely-used model-agnostic method, SHAP \cite{lundberg2017unified}, provides a unified framework that computes feature attributions based on the Shapley value and adheres to several axioms.

\textbf{Instability of LIME.} Despite being widely employed, LIME is known to be unstable, evidenced by divergent explanations under different random seeds \cite{zafar2019dlime, visani2022statistical, zhang2019should}. Many efforts have been devoted to stabilize LIME explanations. Zafar et al. \cite{zafar2019dlime} introduced a deterministic algorithm that utilizes hierarchical clustering for grouping training data and k-nearest neighbors for selecting relevant data samples. However, the resulting explanations may not be a good local approximation. Addressing this concern, Shankaranarayana et al. \cite{shankaranarayana2019alime} trained an auto-encoder to function as a more suitable weighting function in LIME. Shi et al. \cite{shi2020modified} incorporated feature correlation into the sampling step and considered a more restricted sampling distribution, thereby enhancing stability. Zhou et al. \cite{zhou2021s} employed a hypothesis testing framework to determine the necessary number of samples for ensuring stable explanations. However, this improvement came at the expense of a substantial increase in computation time.

\textbf{Impact of references.} LIME, along with various other explanation methods, relies on references (also known as baseline inputs) to generate samples. References serve as uninformative inputs meant to represent the absence of features \cite{binder2016layer, sundararajan2017axiomatic, shrikumar2017learning}. Choosing an inappropriate reference can lead to misleading explanations. For instance, if a black image is selected as the reference, important black pixels may not be highlighted \cite{kapishnikov2019xrai, erion2021improving}. The challenge lies in determining the appropriate reference, as different types of references may yield different explanations \cite{jain2022missingness, erion2021improving, kapishnikov2019xrai}. In \cite{kapishnikov2019xrai}, both black and white references are utilized, while \cite{fong2017interpretable} employs constant, noisy, and Gaussian blur references simultaneously. To address the reference specification issue, \cite{erion2021improving} proposes Expected Gradient, considering each instance in the data distribution as a reference and averaging explanations computed across all references.

\section{Conclusion}
% LIME suffers from unstable explanations and non-local and biased sampling. In this paper, we prove that the interplay between weighting and regularization in LIME is the underlying cause of LIME's instability. We introduce \textsc{Glime} where an equivalent form of LIME, \textsc{Glime}--LIME++, which is proven to converge faster than LIME and is thus more stable, can be derived. This result suggests incorporating locality explicitly in sampling is a better way to enforce locality. By choosing appropriate distribution, \textsc{Glime} produces explanation methods that are stable ,locally faithful and independent of reference selected. Experimental results validate our claims, indicating that \textsc{Glime} is both more stable and locally more faithful than LIME.
In this paper, we introduce \textsc{Glime}, a novel framework that extends the LIME method for local feature importance explanations. By explicitly incorporating locality into the sampling procedure and enabling more flexible distribution choices, \textsc{Glime} mitigates the limitations of LIME, such as instability and low local fidelity. Experimental results on ImageNet data demonstrate that \textsc{Glime} significantly enhances stability and local fidelity compared to LIME. While our experiments primarily focus on image data, the applicability of our approach readily extends to text and tabular data.
\section{Acknowledgement}
The authors would like to thank the anonymous reviewers for their constructive comments. Zeren Tan and Jian Li are supported by the National Natural Science Foundation of China Grant (62161146004). Yang Tian is supported by the Artificial and General Intelligence Research Program of Guo Qiang Research Institute at Tsinghua University (2020GQG1017).
\bibliography{ref.bib}
\bibliographystyle{plain}
%%%%%%%%%%%%%%%%%%%%%%%%%%%%%%%%%%%%%%%%%%%%%%%%%%%%%%%%%%%%
\newpage 
\appendix
\section{More discussions}\label[appendix]{app:discuss}
\subsection{Implementation details}\label[appendix]{app:details}
\textbf{Dataset selection.} The experiments use images from the validation set of the ImageNet-1k dataset. To ensure consistency, a fixed random seed (2022) is employed. Specifically, 100 classes are uniformly chosen at random, and for each class, an image is randomly selected.

\textbf{Models.} The pretrained models used are sourced from \texttt{torchvision.models}, with the \texttt{weights} parameter set to \texttt{IMAGENET1K\_V1}.

\textbf{Feature transformation.} The initial step involves cropping each image to dimensions of \texttt{(224, 224, 3)}. The \texttt{quickshift} method from \texttt{scikit-image} is then employed to segment images into super-pixels, with specific parameters set as follows: \texttt{kernel\_size=4}, \texttt{max\_dist=200}, \texttt{ratio=0.2}, and \texttt{random\_seed=2023}. This approach aligns with the default setting in LIME, except for the modified fixed random seed. Consistency in the random seed ensures that identical images result in the same super-pixels, thereby isolating the source of instability to the calculation of explanations. However, for different images, they are still segmented in different ways.

\textbf{Computing explanations.} The implemented procedure follows the original setup in LIME. The \texttt{hide\_color} parameter is configured as \texttt{None}, causing the average value of each super-pixel to act as its reference when the super-pixel is removed. The \texttt{distance\_metric} is explicitly set to \texttt{l2}, as recommended for image data in LIME \cite{ribeiro2016should}. The default value for \texttt{alpha} in Ridge regression is 1, unless otherwise specified. For each image, the model $f$ infers the most probable label, and the explanation pertaining to that label is computed. Ten different random seeds are utilized to compute explanations for each image. The \texttt{random\_seed} parameter in both the \texttt{LimeImageExplainer} and the \texttt{explain\_instance} function is set to these specific random seeds.

% \textbf{Sample generation.}  

\subsection{Stability of LIME and \textsc{Glime}}\label[appendix]{app:stability_lime}
\autoref{fig:jaccard_index_more} illustrates the top-1, top-5, top-10, and average Jaccard indices. Importantly, the results presented in \autoref{fig:jaccard_index_more} closely align with those in \autoref{fig:ji_full}. In summary, it is evident that \textsc{Glime} consistently provides more stable explanations compared to LIME.
\begin{figure}
    \centering
   \begin{subfigure}[b]{0.48\textwidth}
    \centering
         \includegraphics[width=\textwidth]{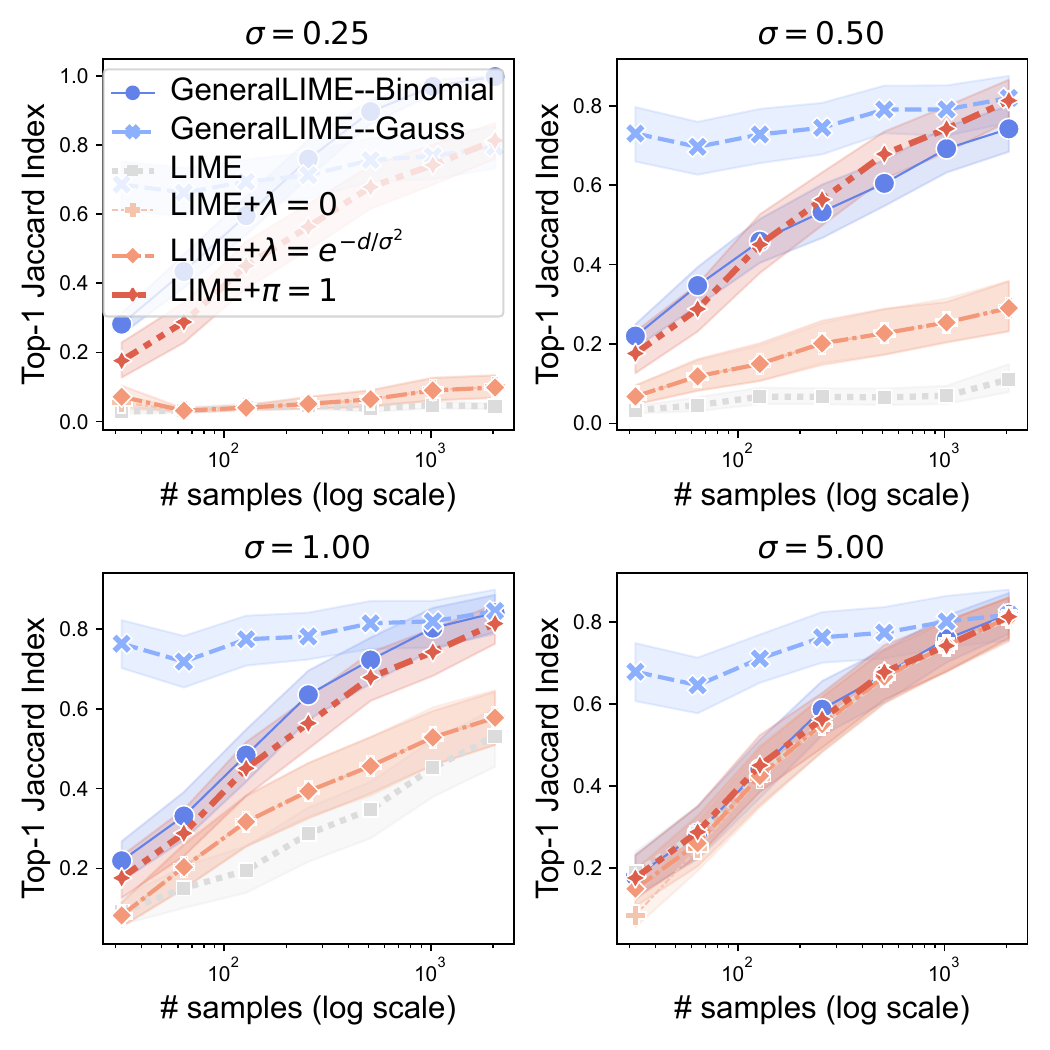}
         \caption{Top-1 Jaccard indices for various methods.}
   \end{subfigure}
\hfill
   \begin{subfigure}[b]{0.48\textwidth}
    \centering
         \includegraphics[width=\textwidth]{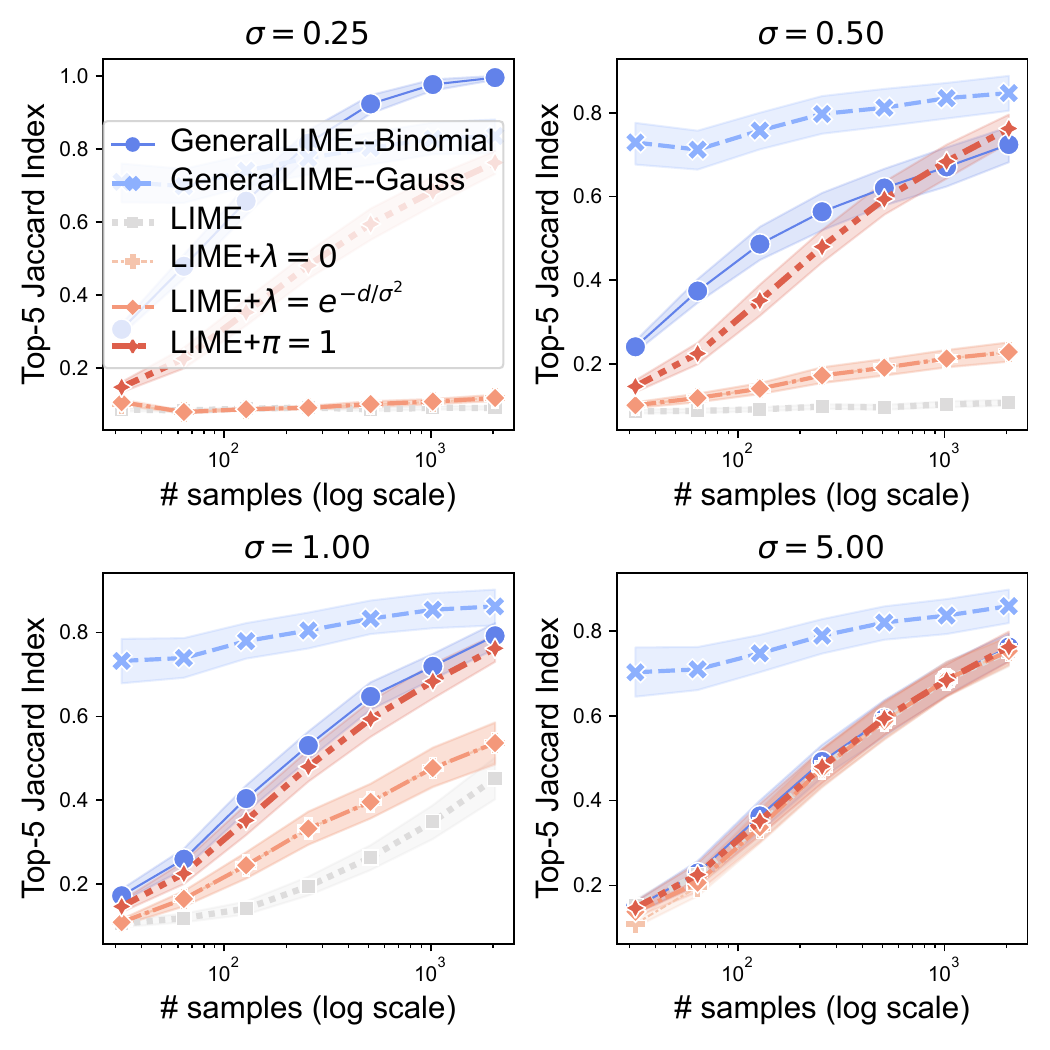}
         \caption{Top-5 Jaccard indices for various methods.}
   \end{subfigure}
   \hfill
   \begin{subfigure}[b]{0.48\textwidth}
    \centering
         \includegraphics[width=\textwidth]{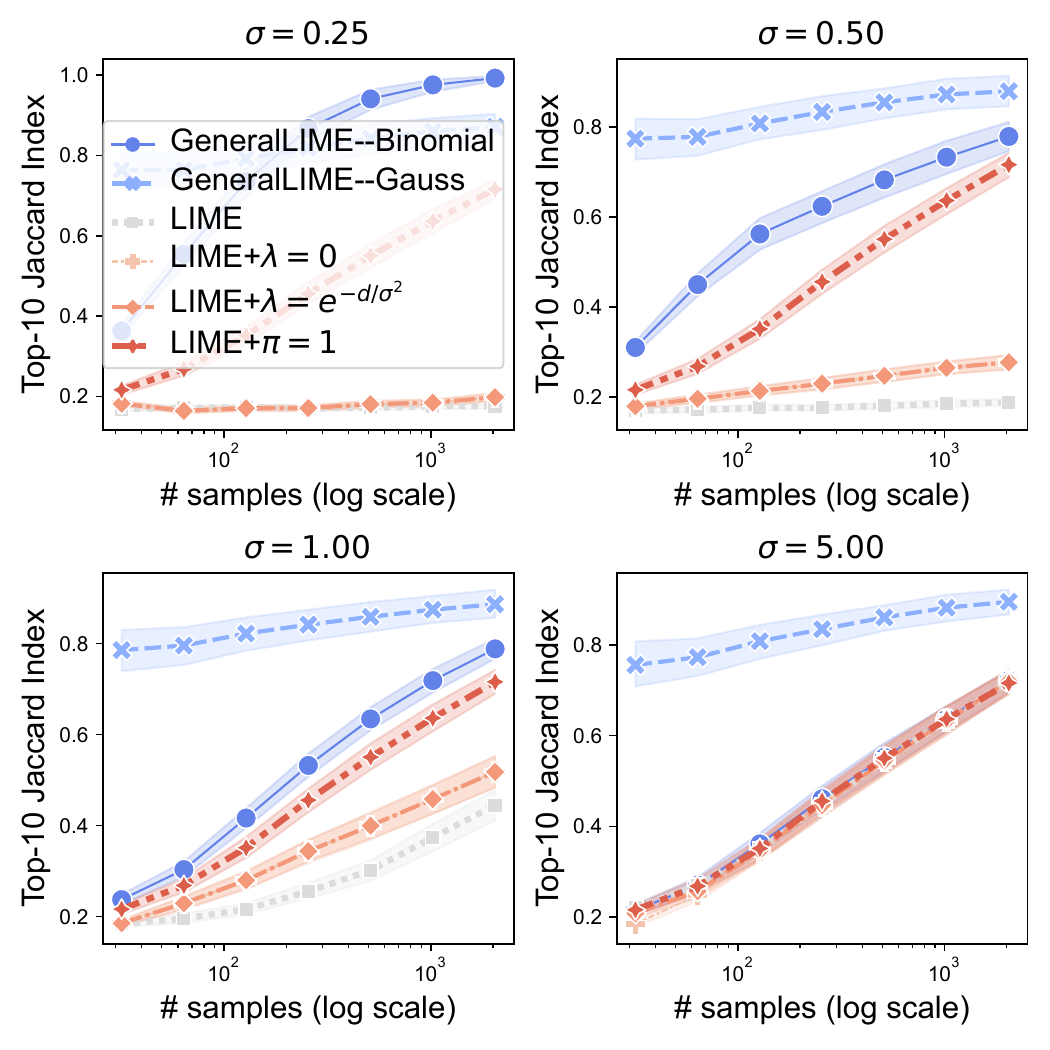}
         \caption{Top-10 Jaccard indices for various methods.}
   \end{subfigure}
   \hfill
   \begin{subfigure}[b]{0.48\textwidth}
    \centering
         \includegraphics[width=\textwidth]{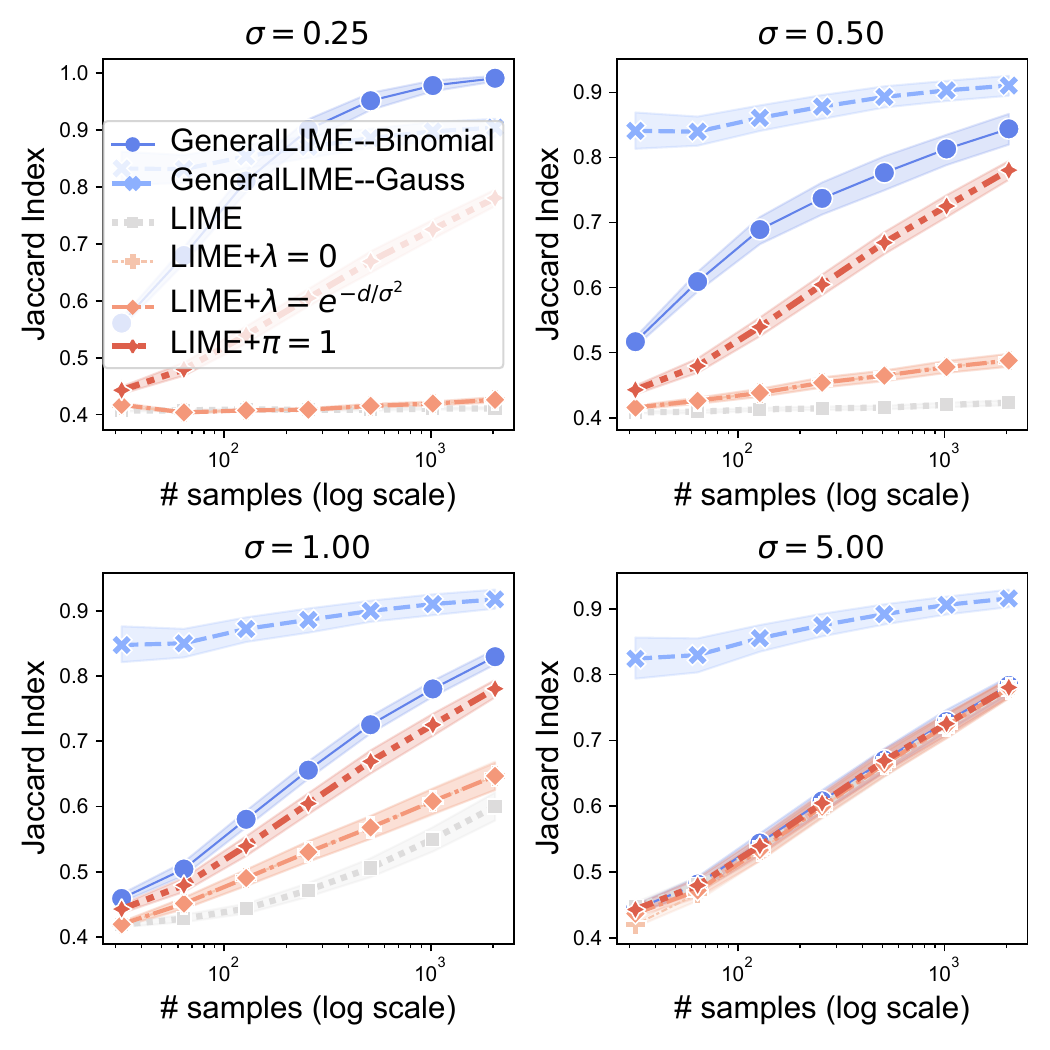}
         \caption{Average Jaccard indices for various methods.}
   \end{subfigure}
\caption{Top-1, 5, 10, and average Jaccard indices are computed for various methods. The average Jaccard index is obtained by averaging the top-1 to top-d Jaccard indices.}
\label{fig:jaccard_index_more}
\end{figure}

\subsection{LIME and \textsc{Glime-Binomial} converge to the same limit} \label[appendix]{app:common_limit}
In Figure \ref{fig:lime_converge}, the difference and correlation between explanations generated by LIME and \textsc{Glime-Binomial} are presented. With an increasing sample size, the explanations from LIME and \textsc{Glime-Binomial} become more similar and correlated. The difference between their explanations rapidly converges to zero, particularly when $\sigma$ is large, such as in the case of $\sigma=5$. While LIME exhibits a slower convergence, especially with small $\sigma$, it is impractical to continue sampling until their difference fully converges. Nevertheless, the correlation between LIME and \textsc{Glime-Binomial} strengthens with an increasing number of samples, indicating their convergence to the same limit as the sample size grows.
\begin{figure}
    \centering
         \includegraphics[width=15cm]{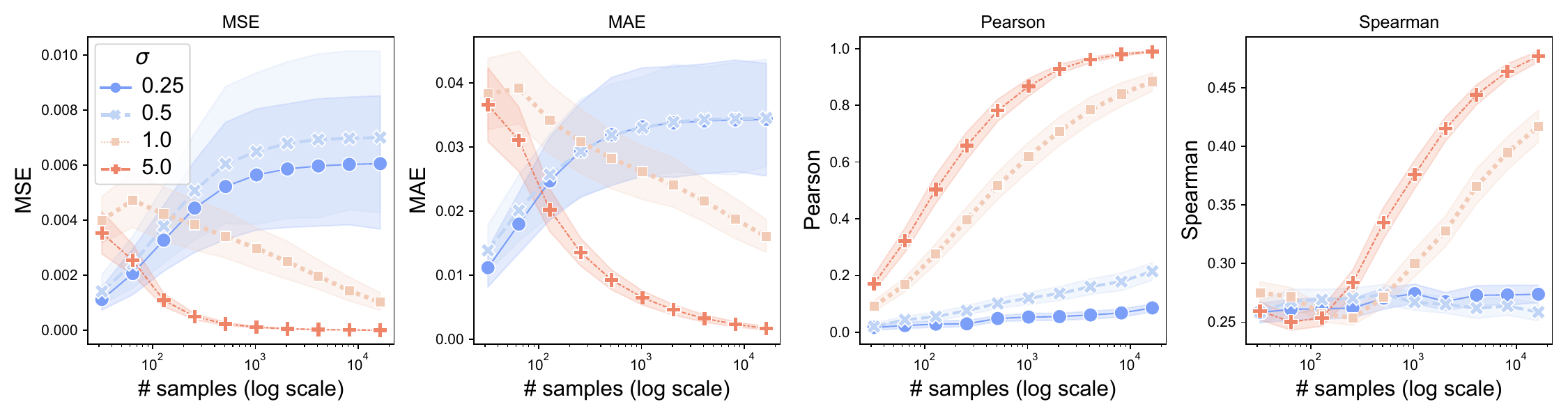}
         \caption{\textbf{Difference and correlation between LIME and \textsc{Glime-Binomial} explanations.} Mean Squared Error (MSE) and Mean Absolute Error (MAE) serve as metrics to evaluate the dissimilarity between explanations provided by LIME and \textsc{Glime-Binomial}. Pearson and Spearman correlation coefficients are employed to quantify the correlation between these explanations. With an increasing number of samples, the explanations from LIME and \textsc{Glime-Binomial} tend to show greater similarity. Notably, the dissimilarity and correlation of explanations between LIME and \textsc{Glime-Binomial} converge more rapidly when $\sigma$ is higher.}
         \label{fig:lime_converge}
\end{figure}
\subsection{LIME explanations are different for different references.}\label[appendix]{sec:diff_baseline}
The earlier work by Jain et al. \cite{jain2022missingness} has underscored the instability of LIME regarding references. As shown in \cref{sec:local_unbiased}, this instability originates from LIME's sampling distribution, which relies on the chosen reference $\*r$. Additional empirical evidence is presented in \autoref{fig:diff_baselines}. Six distinct references—black, white, red, blue, yellow image, and the average value of the removed super-pixel (the default setting for LIME)—are selected. The average Jaccard indices for explanations computed using these various references are detailed in \autoref{fig:diff_baselines}. The results underscore the sensitivity of LIME to different references.

Different references result in LIME identifying distinct features as the most influential, even with a sample size surpassing 2000. Particularly noteworthy is that, with a sample size exceeding 2000, the top-1 Jaccard index consistently remains below 0.7, underscoring LIME's sensitivity to reference variations.
% \begin{figure}
%     \centering
%     \includegraphics[width=6cm]{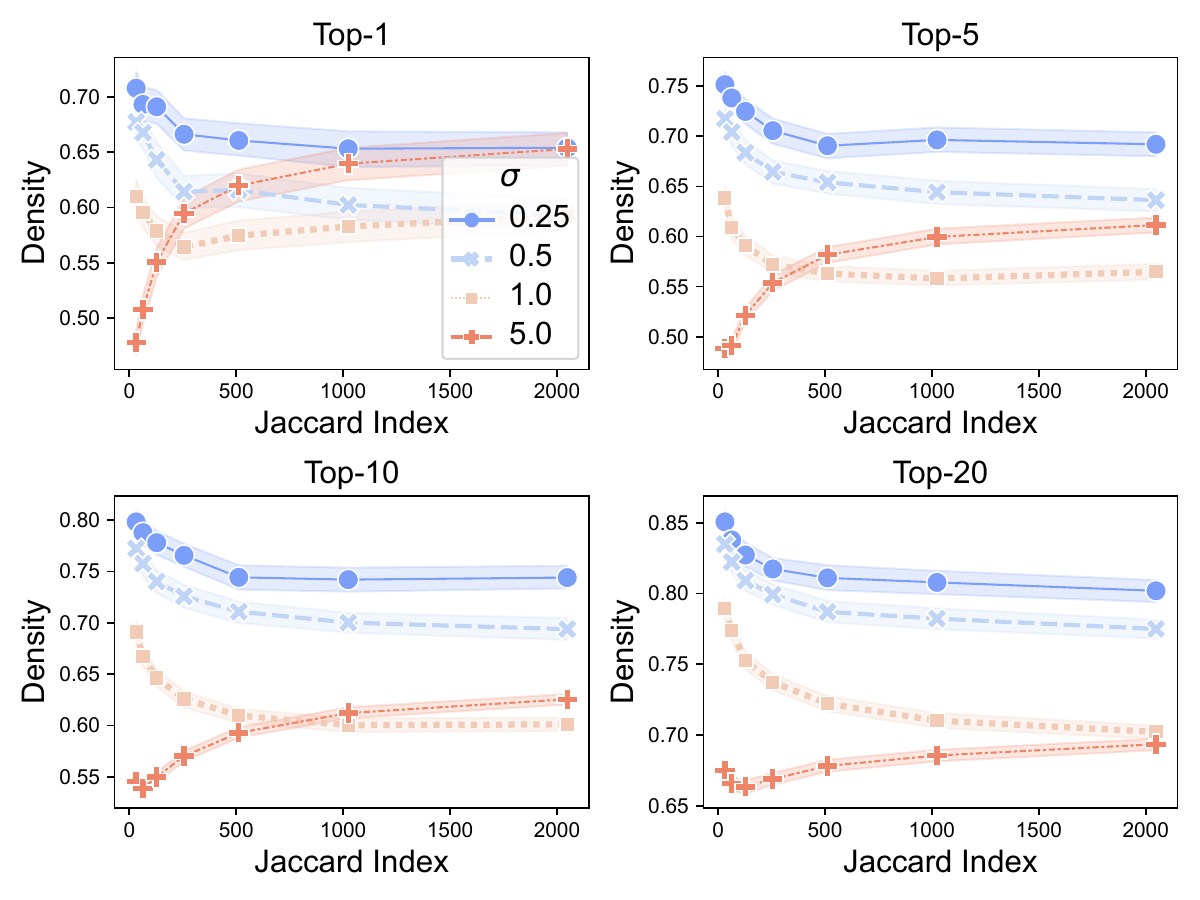}
%     \caption{Caption}
%     \label{fig:my_label}
% \end{figure}

\begin{figure}
    \centering
    \includegraphics[width=15cm]{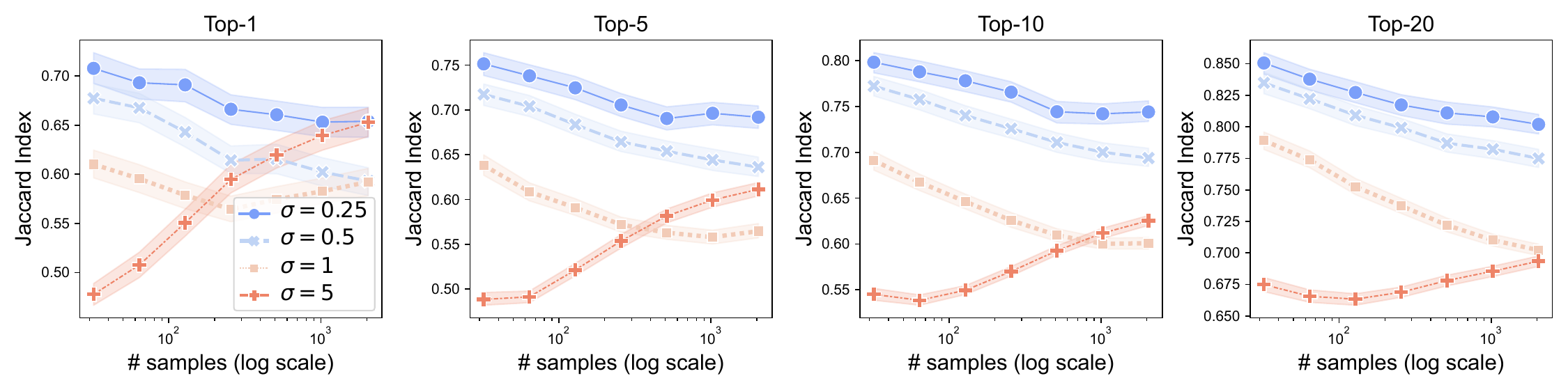}
    \caption{The Top-$K$ Jaccard index of explanations, computed with different references, consistently stays below 0.7, even when the sample size exceeds 2000.}
    \label{fig:diff_baselines}
\end{figure} 

\subsection{The local fidelity of \textsc{Glime}}\label[appendix]{app:local_fidelity_generallime}
\autoref{fig:local_fidelity_generallime} presents the local fidelity of \textsc{Glime}, showcasing samples from the $\ell_2$ neighborhood $\{\*z| \|\*z - \*x\|_2 \leq \epsilon\}$ around $\*x$. Additionally, \autoref{fig:local_fidelity_l1} and \autoref{fig:local_fidelity_linf} illustrate the local fidelity of \textsc{Glime} within the $\ell_1$ neighborhood $\{\*z| \|\*z - \*x\|_1 \leq \epsilon\}$ and the $\ell_\infty$ neighborhood $\{\*z| \|\*z - \*x\|_\infty \leq \epsilon\}$, respectively.

A comparison between \autoref{fig:local_fidelity_generallime} and \autoref{fig:local_fidelity_l1} reveals that, for the same $\sigma$, \textsc{Glime} can explain the local behaviors of $f$ within a larger radius in the $\ell_1$ neighborhood compared to the $\ell_2$ neighborhood. This difference arises from the fact that $\{\*z| \|\*z - \*x\|_2 \leq \epsilon\}$ defines a larger neighborhood compared to $\{\*z| \|\*z - \*x\|_1 \leq \epsilon\}$ with the same radius $\epsilon$.

Likewise, the set $\{\*z| \|\*z - \*x\|_\infty \leq \epsilon\}$ denotes a larger neighborhood than $\{\*z| \|\*z - \*x\|_2 \leq \epsilon\}$, causing the local fidelity to peak at a smaller radius $\epsilon$ for the $\ell_\infty$ neighborhood compared to the $\ell_2$ neighborhood under the same $\sigma$.

Remarkably, \textsc{Glime-Laplace} consistently demonstrates superior local fidelity compared to \textsc{Glime-Gauss} and \textsc{Glime-Uniform}. Nevertheless, in cases with larger $\epsilon$, \textsc{Glime-Gauss} sometimes surpasses the others. This observation implies that the choice of sampling distribution should be contingent on the particular radius of the local neighborhood intended for explanation by the user.
\begin{figure}
    \centering
    \includegraphics[width=15cm]{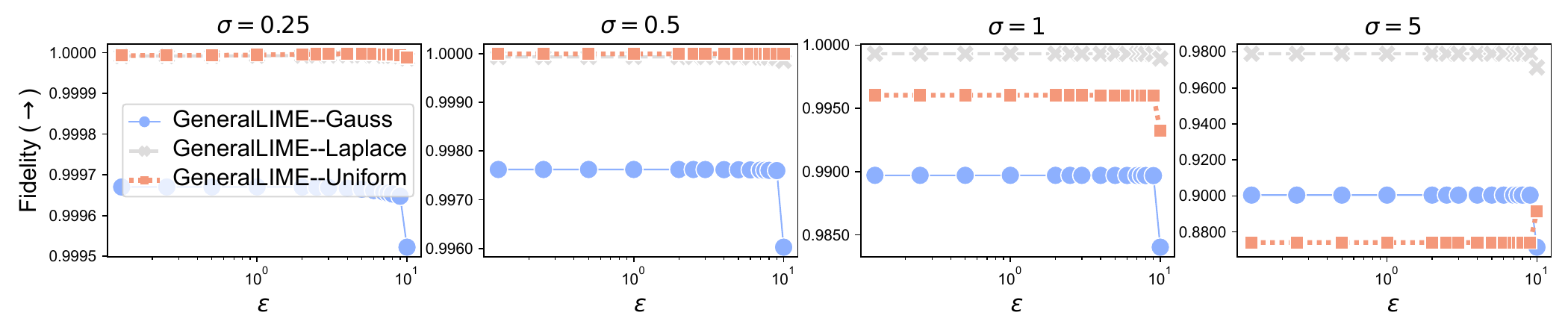}
    \caption{The local fidelity of \textsc{Glime} in the $\ell_1$ neighborhood}
    \label{fig:local_fidelity_l1}
\end{figure} 

\begin{figure}
    \centering
    \includegraphics[width=15cm]{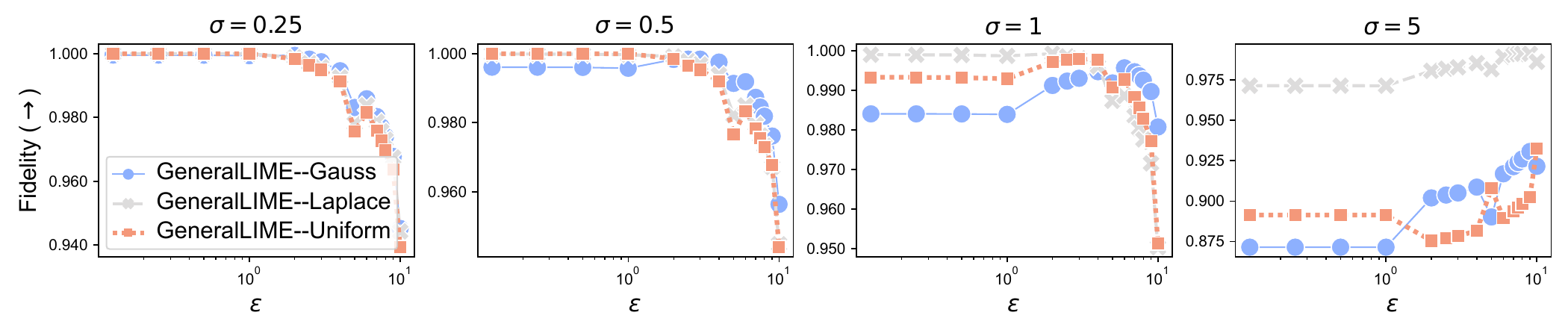}
    \caption{The local fidelity of \textsc{Glime} in the $\ell_\infty$ neighborhood}
    \label{fig:local_fidelity_linf}
\end{figure}

\subsection{\textsc{Glime} unifies several previous methods}\label[appendix]{app:previous_methods}

\textbf{KernelSHAP \cite{lundberg2017unified}.} KernelSHAP integrates the principles of LIME and Shapley values. While LIME employs a linear explanation model to locally approximate $f$, KernelSHAP seeks a linear explanation model that adheres to the axioms of Shapley values, including local accuracy, missingness, and consistency \cite{lundberg2017unified}. Achieving this involves careful selection of the loss function $\ell(\cdot, \cdot)$, the weighting function $\pi(\cdot)$, and the regularization term $R$. The choices for these parameters in LIME often violate local accuracy and/or consistency, unlike the selections made in KernelSHAP, which are proven to adhere to these axioms (refer to Theorem 2 in \cite{lundberg2017unified}).

% \textbf{IntegratedGradient.} $\mathcal{T}(\*x) = \*x$ is identity function. $\*z = \*z^\prime = \alpha \*x$ where $\alpha \sim \text{Uni}(0,1)$. Loss function $\ell(f(\*z), g_{\*v}(\*z^\prime)) = (f(\alpha \*x) - \alpha\*v^\top \*1)^2$.
\textbf{Gradient \cite{baehrens2010explain, simonyan2013deep}.} This method computes the gradient $\nabla f$ to assess the impact of each feature under infinitesimal perturbation \cite{baehrens2010explain, simonyan2013deep}.

\textbf{SmoothGrad \cite{smilkov2017smoothgrad}.} Acknowledging that standard gradient explanations may contain noise, SmoothGrad introduces a method to alleviate noise by averaging gradients within the local neighborhood of the explained input \cite{smilkov2017smoothgrad}. Consequently, the feature importance scores are computed as $\mathbb{E}_{\*\epsilon\sim\mathcal{N}(0,\sigma^2\*I)}[\nabla f(\*x+\*\epsilon)]$.

% \textbf{UniGrad \cite{wang2020smoothed}.} The formulation of UniGrad is the same as SmoothGrad except that $\*z = \*z^\prime + \*x $ where $\*z^\prime \sim \text{Uni}(B_{\*0}(r))$. 

\textbf{DLIME \cite{zafar2019dlime}.} Diverging from random sampling, DLIME seeks a deterministic approach to sample acquisition. In its process, DLIME employs agglomerative Hierarchical Clustering to group the training data, and subsequently utilizes k-Nearest Neighbour to select the cluster corresponding to the explained instance. The DLIME explanation is then derived by constructing a linear model based on the data points within the identified cluster.

\textbf{ALIME \cite{shankaranarayana2019alime}:} ALIME leverages an auto-encoder to assign weights to samples. Initially, an auto-encoder, denoted as $\mathcal{AE}(\cdot)$, is trained on the training data. Subsequently, the method involves sampling $n$ nearest points to $\*x$ from the training dataset. The distances between these samples and the explained instance $\*x$ are assessed using the $\ell_1$ distance between their embeddings, obtained through the application of the auto-encoder $\mathcal{AE}(\cdot)$. For a sample $\*z$, its distance from $\*x$ is measured as $\|\mathcal{AE}(\*z) - \mathcal{AE}(\*x)\|_1$, and its weight is computed as $\exp(-\|\mathcal{AE}(\*z) - \mathcal{AE}(\*x)\|_1)$. The final explanation is derived by solving a weighted Ridge regression problem.

\subsection{Results on tiny Swin-Transformer \cite{liu2021swin}}\label[appendix]{app:swin_transformer}
The findings on the tiny Swin-Transformer align with those on ResNet18, providing additional confirmation that \textsc{Glime} enhances stability and local fidelity compared to LIME. Please refer to \autoref{fig:swin_1}, \autoref{fig:swin_2} and \autoref{fig:swin_3} for results.
\begin{figure}
    \centering
    
   \begin{subfigure}[b]{0.48\textwidth}
    \centering
         \includegraphics[width=7cm]{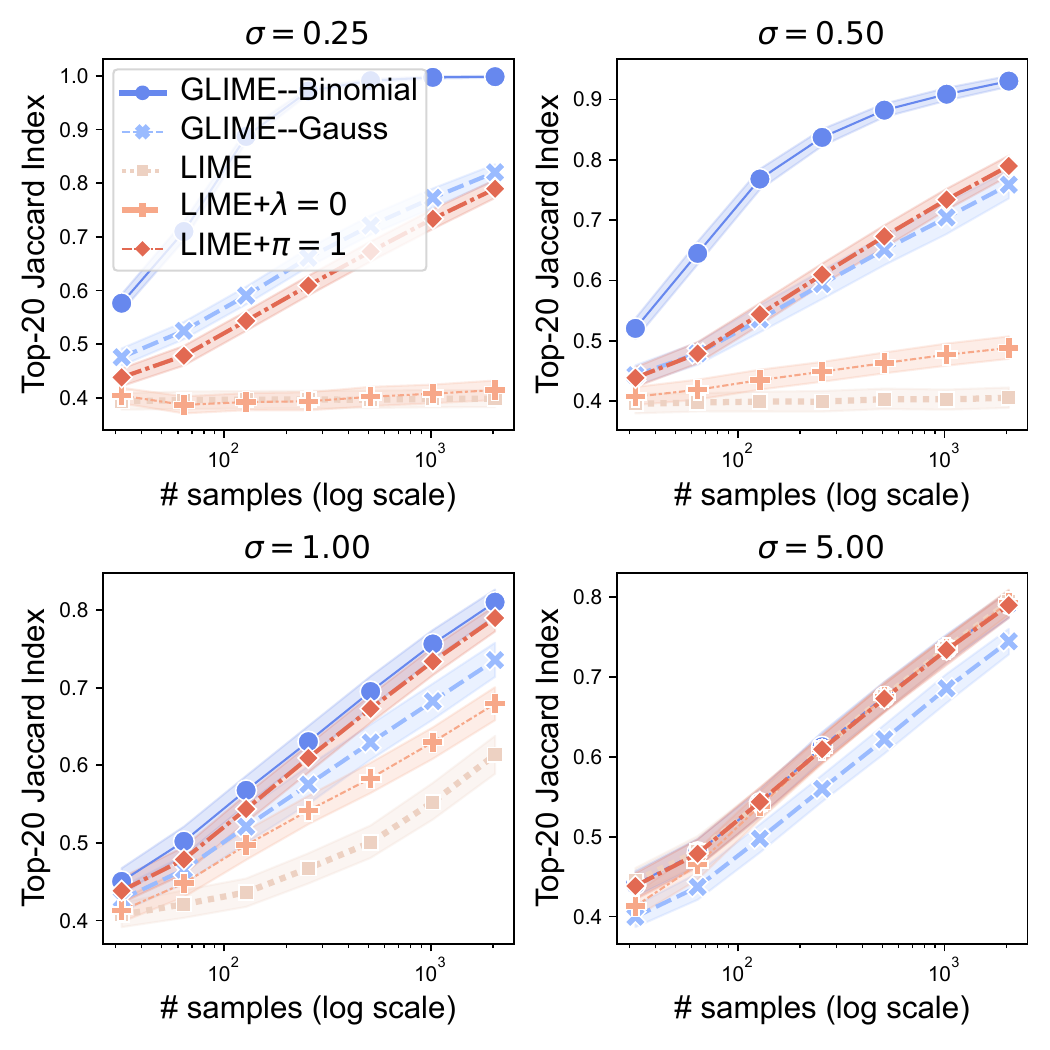}
         \caption{\textbf{Stability across various methods (tiny Swin-Transformer).} The reported metric is the Top-20 Jaccard index. LIME$+\lambda=0$ and LIME$+\pi=1$ represent LIME without regularization and without weighting, 
 respectively. LIME exhibits instability, particularly when $\sigma$ is small, whereas \textsc{Glime} demonstrates enhanced stability across varying $\sigma$ values. Notably, in the absence of weighting or regularization, LIME's stability significantly improves when $\sigma$ is small. Conversely, the impact of regularization and weighting on LIME's stability is marginal when $\sigma$ is large.}
   \end{subfigure}
\hfill
   \begin{subfigure}[b]{0.48\textwidth}
      \centering
    \includegraphics[width=\textwidth]{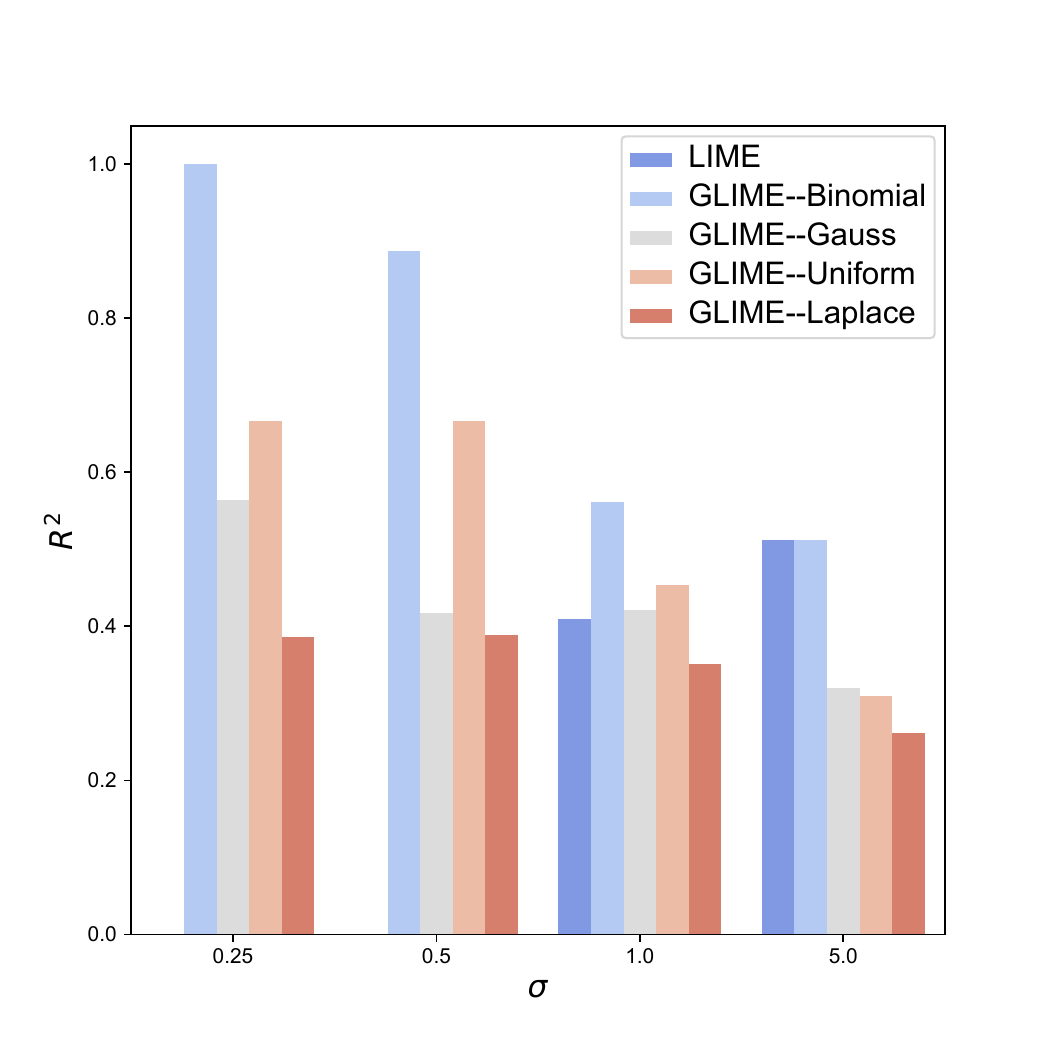}
    \caption{\textbf{$R^2$ comparison among LIME and \textsc{Glime} variants with different sampling distributions (tiny Swin-Transformer).} For each image and method, 2048 samples are utilized to compute the explanation and the corresponding $R^2$. Notably, LIME yields nearly zero $R^2$ when $\sigma=0.25$ and $0.5$, indicating an almost negligible explanation produced by LIME. In contrast, the $R^2$ values of LIME are consistently lower than those of \textsc{Glime}, underscoring that \textsc{Glime} enhances the local fidelity of LIME.}
   \end{subfigure}
    \caption{\textsc{Glime} markedly enhances both stability and local fidelity compared to LIME across various values of $\sigma$.}
    \label{fig:swin_1}
\end{figure}
\begin{figure}
    \centering
    
   \begin{subfigure}[b]{0.48\textwidth}
    \centering
         \includegraphics[width=\textwidth]{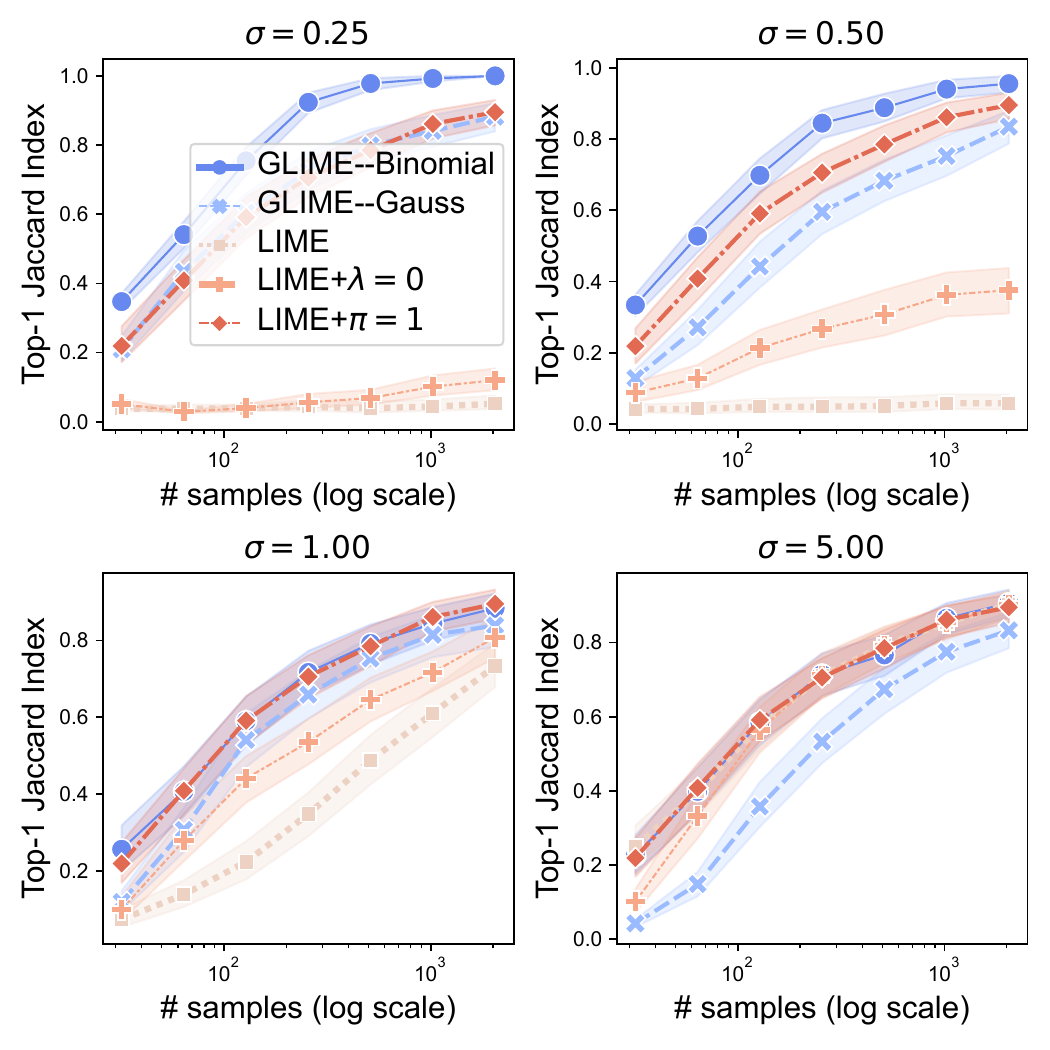}
         \caption{Top-1 Jaccard index of different methods (tiny Swin-Transformer).}
   \end{subfigure}
\hfill
   \begin{subfigure}[b]{0.48\textwidth}
    \centering
         \includegraphics[width=\textwidth]{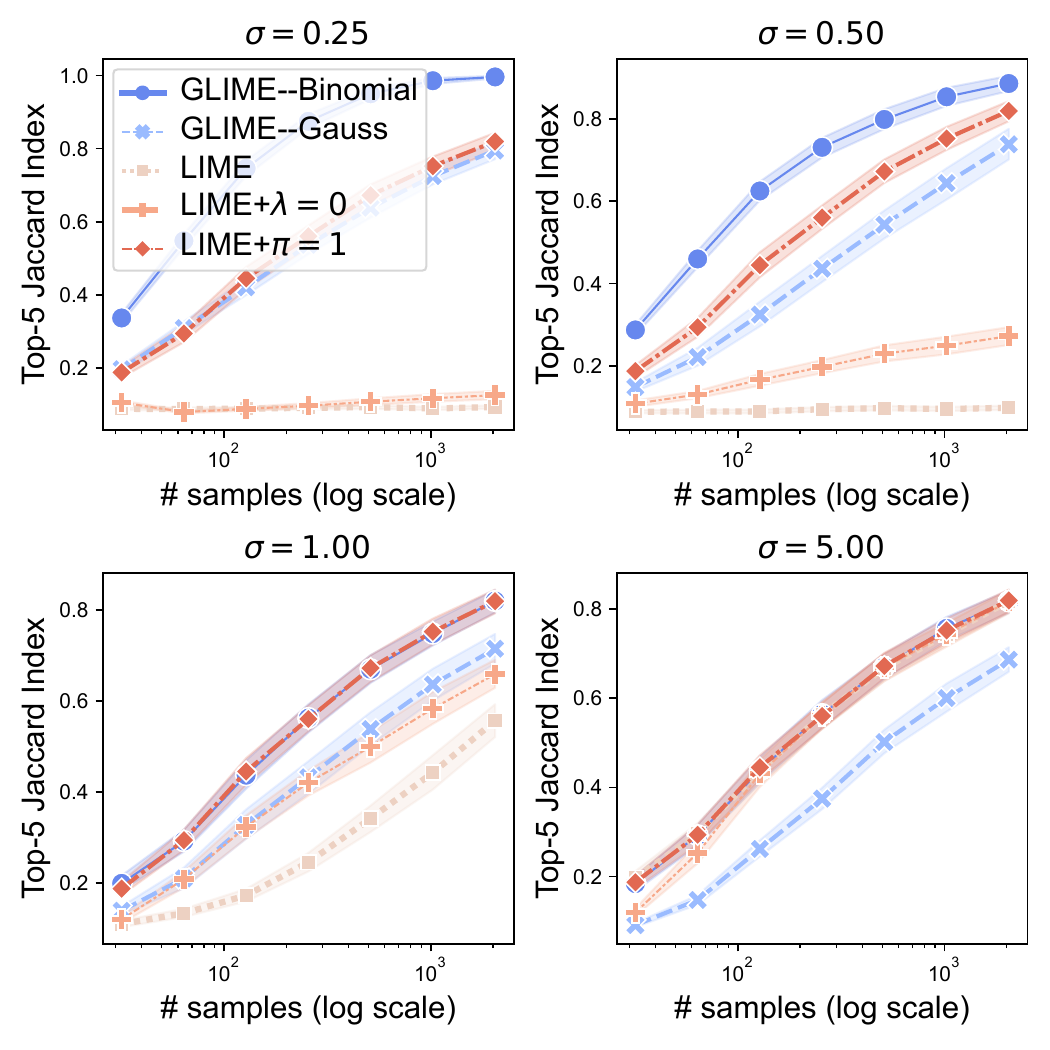}
         \caption{Top-5 Jaccard index of different methods (tiny Swin-Transformer).}
   \end{subfigure}
   \hfill
   \begin{subfigure}[b]{0.48\textwidth}
    \centering
         \includegraphics[width=\textwidth]{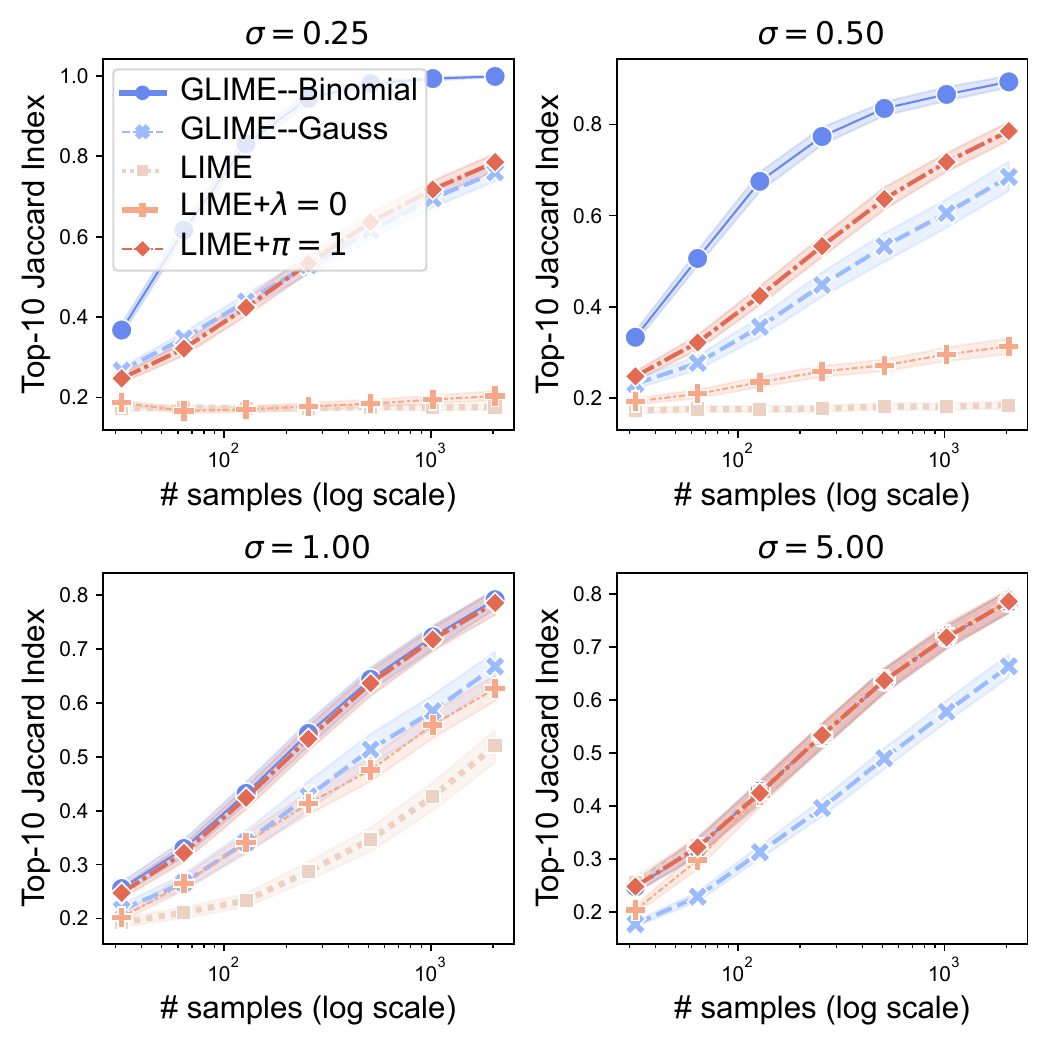}
         \caption{Top-10 Jaccard index of different methods (tiny Swin-Transformer).}
   \end{subfigure}
   \hfill
   \begin{subfigure}[b]{0.48\textwidth}
    \centering
    \includegraphics[width=\textwidth]{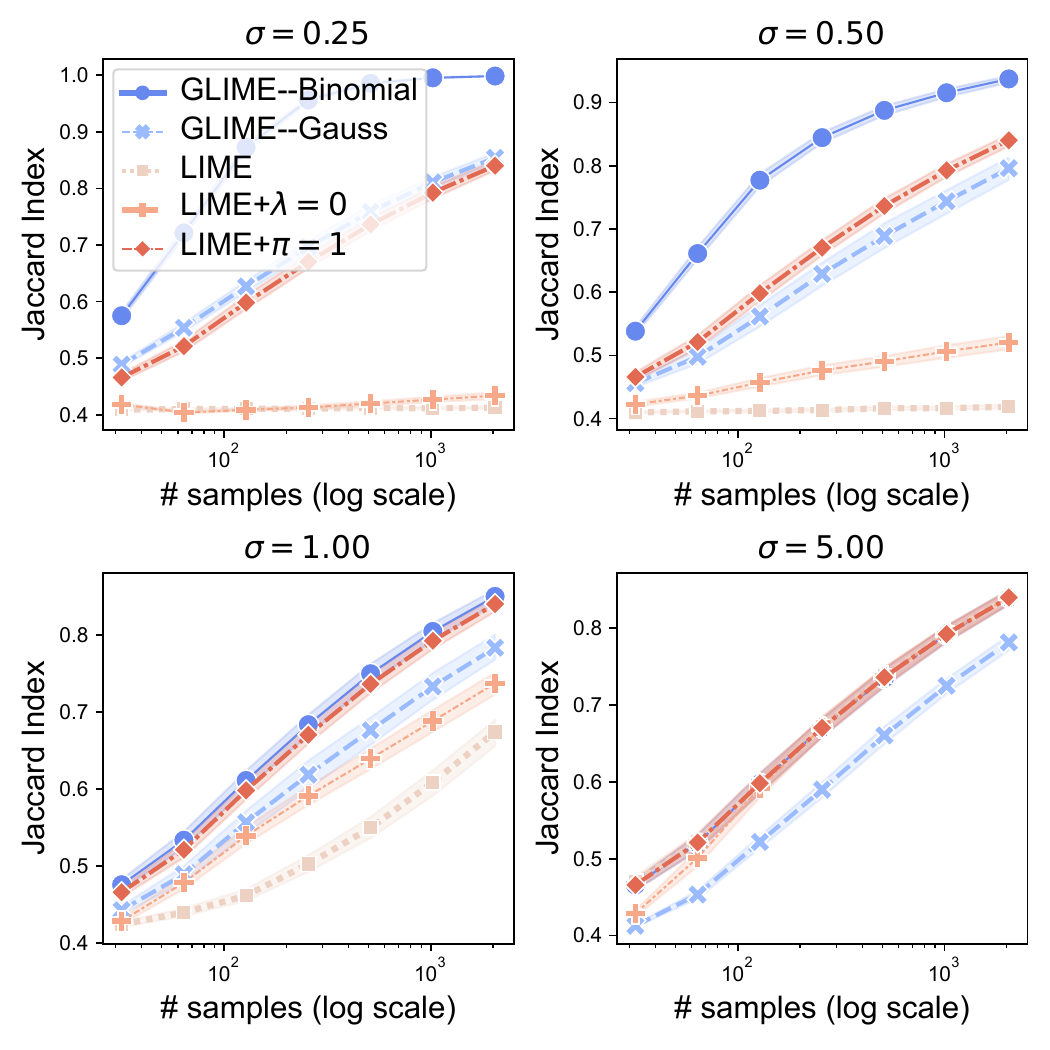}
         \caption{Average Jaccard index of different methods (tiny Swin-Transformer).}
   \end{subfigure}
\caption{Top-1, 5, 10, and average Jaccard indices are calculated for various methods, and the average Jaccard index represents the mean of top-$1,\cdots, d$ indices for the tiny Swin-Transformer.}
\label{fig:swin_2}
\end{figure}

\begin{figure}
    \centering
        \includegraphics[width=15cm]{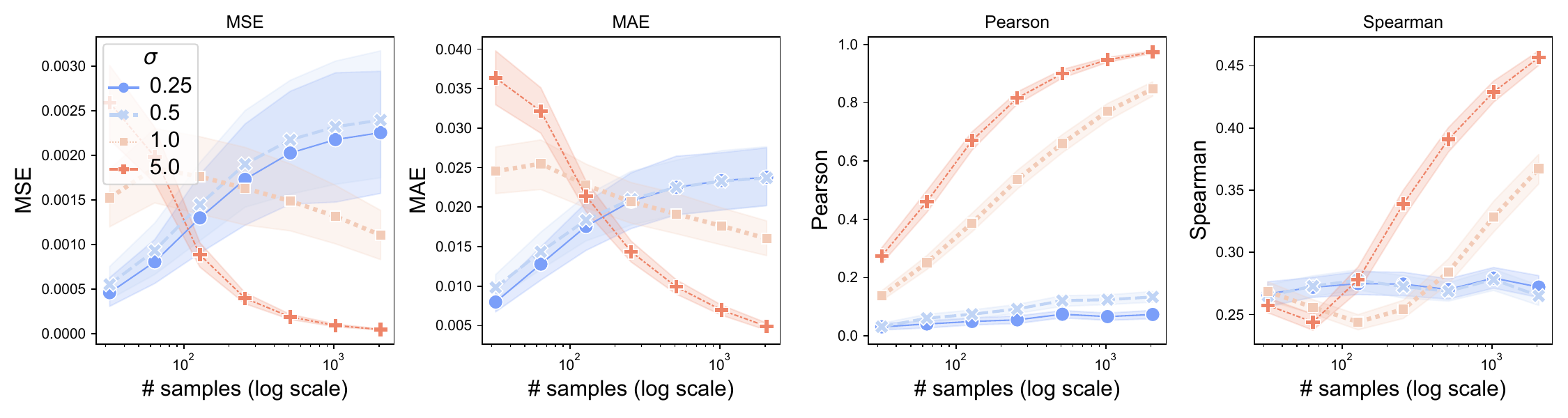}
         \caption{\textbf{Difference and correlation in LIME and \textsc{Glime-Binomial} explanations (tiny Swin-Transformer).} Mean Squared Error (MSE) and Mean Absolute Error (MAE) quantify the divergence between LIME and \textsc{Glime-Binomial} explanations. Pearson and Spearman correlation coefficients gauge the correlation. With an increasing sample size, the explanations tend to align more closely. Notably, the difference and correlation converge more rapidly with larger values of $\sigma$.}
         \label{fig:swin_3}
\end{figure}

\subsection{Comparing \textsc{Glime} with ALIME} \label[appendix]{app:alime}
While ALIME \cite{shankaranarayana2019alime} improves upon the stability and local fidelity of LIME, \textsc{Glime} consistently surpasses ALIME. A key difference between ALIME and LIME lies in their methodologies: ALIME employs an encoder to transform samples into an embedding space, calculating their distance from the input to be explained as $\|\mathcal{AE}(\mathbf{z}) - \mathcal{AE}(\mathbf{x})\|_1$, whereas LIME utilizes a binary vector $\mathbf{z} \in \{0,1\}^d$ to represent a sample, measuring the distance from the explained input as $\|\mathbf{1} - \mathbf{z}\|_2$.

Because ALIME relies on distance in the embedding space to assign weights to samples, there is a risk of generating very small sample weights if the produced samples are far from $\mathbf{x}$, potentially resulting in instability issues.

In our ImageNet experiments comparing \textsc{Glime} and ALIME, we utilize the VGG16 model from the repository \texttt{imagenet-autoencoder}\footnote{\url{https://github.com/Horizon2333/imagenet-autoencoder}} as the encoder in ALIME. The outcomes of these experiments are detailed in \cref{tab:glime_alime}. The findings demonstrate that, although ALIME demonstrates enhanced stability compared to LIME, this improvement is not as substantial as the improvement achieved by \textsc{Glime}, particularly under conditions of small $\sigma$ or sample size.
\begin{table}[!ht]
    \centering
    \caption{\textbf{Top-20 Jaccard Index of \textsc{Glime-Binomial}, \textsc{Glime-Gauss}, and ALIME.} \textsc{Glime-Binomial} and \textsc{Glime-Gauss} exhibit significantly higher stability than ALIME, particularly in scenarios with small $\sigma$ or limited samples.}
    \label{tab:glime_alime}
    \begin{tabular}{|l|l|l|l|l|l|}
    \hline
       \multicolumn{2}{|c|}{\# samples} &  128 & 256 & 512 & 1024 \\ \hline
\multirow{3}{*}{ $\sigma=0.25$} & \textsc{Glime-Binomial} & 0.952 & 0.981 & 0.993 & 0.998 \\ 
\cline{2-6}
         & \textsc{Glime-Gauss} & 0.872 & 0.885 & 0.898 & 0.911 \\
         \cline{2-6}
         & ALIME & 0.618 & 0.691 & 0.750 & 0.803 \\ \hline
        \multirow{3}{*}{$\sigma=0.5$} & \textsc{Glime-Binomial} & 0.596 & 0.688 & 0.739 & 0.772 \\ 
        \cline{2-6}
         & \textsc{Glime-Gauss} & 0.875 & 0.891 & 0.904 & 0.912 \\ 
        \cline{2-6}
         & ALIME & 0.525 & 0.588 & 0.641 & 0.688 \\ \hline
        \multirow{3}{*}{$\sigma=1$} & \textsc{Glime-Binomial} & 0.533 & 0.602 & 0.676 & 0.725 \\ \cline{2-6}
        ~ & \textsc{Glime-Gauss} & 0.883 & 0.894 & 0.908 & 0.915 \\ \cline{2-6}
        ~ & ALIME & 0.519 & 0.567 & 0.615 & 0.660 \\ \hline 
        \multirow{3}{*}{$\sigma=5$} & \textsc{Glime-Binomial} & 0.493 & 0.545 & 0.605 & 0.661 \\ \cline{2-6}
        ~ & \textsc{Glime-Gauss} & 0.865 & 0.883 & 0.898 & 0.910 \\ \cline{2-6}
        ~ & ALIME & 0.489 & 0.539 & 0.589 & 0.640 \\ \hline
    \end{tabular}
\end{table}

% \begin{table}[!ht]
%     \centering
%     \caption{}
%     \begin{tabular}{|c|p{2cm}|c|p{2cm}|c|p{2cm}|c|p{2cm}|c|}
%     \hline
%         ~ & \multicolumn{2}{|c|}{$\sigma=0.25$} & \multicolumn{2}{|c|}{$\sigma=0.5$} & \multicolumn{2}{|c|}{$\sigma=1$} & \multicolumn{2}{|c|}{$\sigma=5$} \\ \hline
%         \multirow{2}{*}{$R^2$} & \textsc{GLIME-Binomial} & LIME & \textsc{GLIME-Binomial} & LIME & \textsc{GLIME-Binomial} & LIME & \textsc{GLIME-Binomial} & LIME \\ \cline{2-9}
%          & 0.688 & 0.001 & 0.691 & 0.160 & 0.693 & 0.579 & 0.693 & 0.682 \\ \hline
%     \end{tabular}
% \end{table}

\subsection{Experiment results on IMDb}\label[appendix]{app:text_data}
The DistilBERT model is employed in experimental evaluations on the IMDb dataset, where 100 data points are selected for explanation. The comparison between \textsc{Glime-Binomial} and LIME is depicted in \cref{fig:stability_text} using the Jaccard Index. Our findings indicate that \textsc{Glime-Binomial} consistently exhibits higher stability than LIME across a range of $\sigma$ values and sample sizes. Notably, at smaller $\sigma$ values, \textsc{Glime-Binomial} demonstrates a substantial improvement in stability compared to LIME.

\begin{figure}
    \centering
   \begin{subfigure}[b]{0.48\textwidth}
    \centering
         \includegraphics[width=7cm]{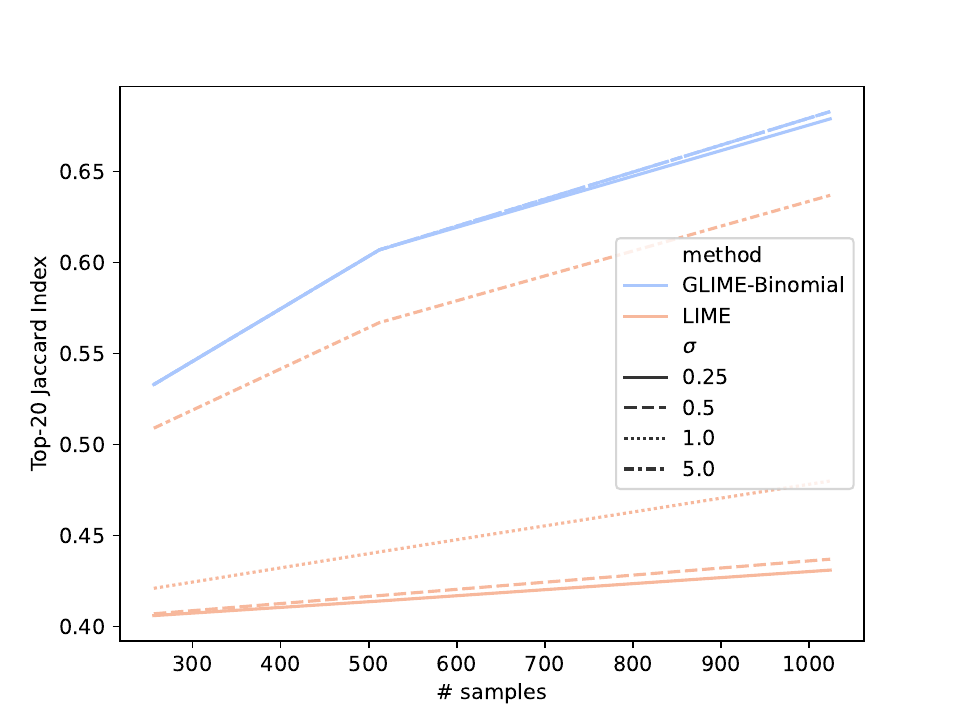}
         \caption{\textbf{Stability comparison between \textsc{Glime-Binomial} and LIME.} The top-20 Jaccard index is reported, illustrating that \textsc{Glime} displays superior stability compared to LIME across diverse values of $\sigma$. Notably, \textsc{Glime}'s stability remains consistently robust, showing limited sensitivity to changes in $\sigma$. In contrast, LIME exhibits increased stability as $\sigma$ values grow larger.}
         \label{fig:text_jaccard}
   \end{subfigure}
\hfill
   \begin{subfigure}[b]{0.48\textwidth}
      \centering
    \includegraphics[width=\textwidth]{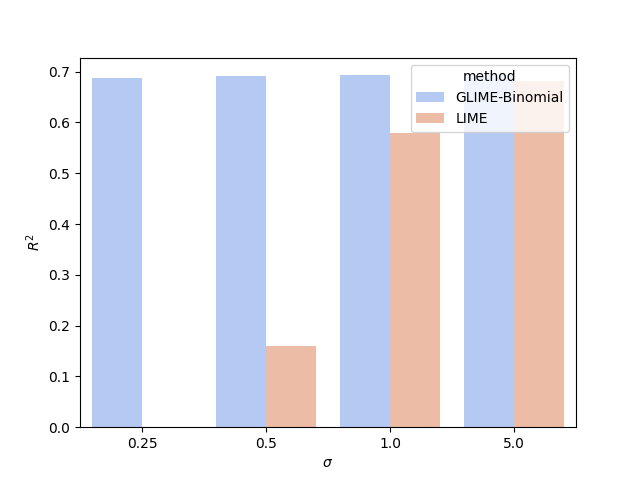}
    \caption{\textbf{Comparison of $R^2$ between LIME and \textsc{Glime-Binomial} under various sampling distributions.} Using 2048 samples for explanation computation, $R^2$ values are computed for each image and method. It is noteworthy that when $\sigma=0.25$, LIME exhibits nearly negligible $R^2$ values.}
    \label{fig:text_r2}
   \end{subfigure}
    \caption{\textsc{Glime} significantly enhances both stability and local fidelity compared to LIME across various $\sigma$ values.}
    \label{fig:stability_text}
\end{figure}
% \begin{table}[!ht]
%     \centering
%     \begin{tabular}{|l|l|l|l|l|}
%     \hline
%         \multicolumn{2}{|c|}{\# samples} & 258 & 512 & 1024 \\ \hline
%         \multirow{2}{*}{$\sigma=0.25$} & \textsc{GLIME-Binomial} & 0.533 & 0.607 & 0.679 \\ 
%         \cline{2-5}
%         ~ & LIME & 0.406 & 0.414 & 0.431 \\ \hline
%         \multirow{2}{*}{$\sigma=0.5$} & \textsc{GLIME-Binomial} & 0.533 & 0.607 & 0.683 \\   \cline{2-5}
%         ~ & LIME & 0.407 & 0.417 & 0.437 \\ \hline
%         \multirow{2}{*}{$\sigma=1$} & \textsc{GLIME-Binomial} & 0.533 & 0.607 & 0.683 \\ 
%          \cline{2-5}
%         ~ & LIME & 0.421 & 0.441 & 0.480 \\ \hline
%         \multirow{2}{*}{$\sigma=5$} & \textsc{GLIME-Binomial} & 0.533 & 0.607 & 0.683 \\  \cline{2-5}
%         ~ & LIME & 0.509 & 0.567 & 0.637 \\ \hline
%     \end{tabular}
% \end{table}

\section{Proofs}
% In this section, we provide the detailed proofs of theoretical results presented in the main part. 

% Before diving into proofs, we first derive ${\*w}^{\text{\textsc{Glime}}}$ and its limit $\*w$ since they will be used across all proofs. 

\subsection{Equivalent \textsc{Glime} formulation without $\pi(\cdot)$}\label[appendix]{app:equivalent_formulation_generallime}
By integrating the weighting function into the sampling distribution, the problem to be solved is
\[
\begin{split}
    {\*w}^{\text{\textsc{Glime}}} = & \mathop{\arg\min}_{\*v} \mbe_{\*z^\prime \sim \mathcal{P}}[\pi(\*z^\prime) \ell(f(\*z), g(\*z^\prime))] + \lambda R(\*v)\\ 
    = & \mathop{\arg\min}_{\*v} \int_{\mathbb{R}^d}\pi(\*z^\prime) \ell(f(\*z), g(\*z^\prime))\mathcal{P}(\*z) \mathrm{d}\*z  + \lambda R(\*v)\\ 
     = & \frac{\mathop{\arg\min}_{\*v} \int_{\mathbb{R}^d}\ell(f(\*z), g(\*z^\prime))\pi(\*z^\prime) \mathcal{P}(\*z) \mathrm{d}\*z  + \lambda R(\*v)}{ \int_{\mathbb{R}^d}\pi(\*u^\prime) \mathcal{P}(\*u) \mathrm{d}\*u} \\ 
     = &  \mathop{\arg\min}_{\*v} \frac{\int_{\mathbb{R}^d}\ell(f(\*z), g(\*z^\prime))\pi(\*z^\prime) \mathcal{P}(\*z) \mathrm{d}\*z}{ \int_{\mathbb{R}^d}\pi(\*u^\prime) \mathcal{P}(\*u) \mathrm{d}\*u}  + \frac{\lambda R(\*v)}{ \int_{\mathbb{R}^d}\pi(\*u^\prime) \mathcal{P}(\*u) \mathrm{d}\*u}\\ 
     = &  \mathop{\arg\min}_{\*v} \int_{\mathbb{R}^d}\ell(f(\*z), g(\*z^\prime))\widetilde{\mathcal{P}}(\*z) \mathrm{d}\*z + \frac{\lambda }{Z} R(\*v) \quad \widetilde{\mathcal{P}}(\*z) =  \frac{\pi(\*z^\prime)\mathcal{P}(\*z)}{Z}, Z = \int_{\mathbb{R}^d}\pi(\*u^\prime) \mathcal{P}(\*u) \mathrm{d}\*u\\ 
     = & \mathop{\arg\min}_{\*v} \mbe_{\*z^\prime \sim \widetilde{\mathcal{P}}}[\ell(f(\*z), g(\*z^\prime))] + \frac{\lambda }{Z} R(\*v)\\ 
\end{split}
\]
\subsection{Equivalence between LIME and \textsc{Glime-Binomial}}\label[appendix]{app:lime_binomial_equivalence}
For LIME, $\mathcal{P}=
\text{Uni}(\{0,1\}^d)$ and thus $\mathcal{P}(\*z^\prime, \|\*z^\prime\|_0=k)=\frac{1}{2^d}, k=0,1,\cdots, d$, so that 
\[
 Z = \int_{\mathbb{R}^d}\pi(\*u^\prime) \mathcal{P}(\*u) \mathrm{d}\*u = \sum_{k=0}^d e^{(k-d)/\sigma^2} \frac{\binom{d}{k}}{2^d} = \frac{e^{-d/\sigma^2}}{2^d} ( 1 + e^{1/\sigma^2})^d
\]
Thus, we have 
\[
\widetilde{\mathcal{P}}(\*z) =  \frac{\pi(\*z^\prime)\mathcal{P}(\*z)}{Z} = \frac{e^{(k-d)/\sigma^2}2^{-d}}{Z} = \frac{e^{k/\sigma^2}}{(1+e^{1/\sigma^2})^d}
\]
Therefore, \textsc{Glime-binomial} is equivalent to LIME. 

\subsection{LIME requires many samples to accurately estimate the expectation term in \autoref{eq:ori_lime}.}\label[appendix]{app:small_weights}
In \autoref{fig:sample_norm_dist}, it is evident that a lot of samples generated by LIME possess considerably small weights. Consequently, the sample estimation of the expectation in \autoref{eq:ori_lime} tends to be much smaller than the true expectation with high probability. In such instances, the regularization term would have a dominant influence on the overall objective.

Consider specific parameters, such as $\sigma=0.25$, $n=1000$, $d=20$ (where $\sigma$ and $n$ are the default values in the original implementation of LIME). The probability of obtaining a sample $\*z^\prime$ with $\|\*z^\prime\|_0 = d-1$ or $d$ is approximately $\frac{d}{2^d} + \frac{1}{2^d} = \frac{d+1}{2^d} \approx 2\times 10^{-5}$. Let's consider a typical scenario where $|f(\*z)|\in [0,1]$, and $(f(\*z)-\*v^\top \*z^\prime)^2$ is approximately $0.1$ for most $\*z, \*z^\prime$. In this case, $\mathbb{E}_{\*z^\prime \sim \text{Uni}(\{0,1\}^d)} [\pi(\*z^\prime) (f(\*z) - \*v^\top \*z^\prime)^2] \approx 0.1 \cdot \sum_{k=0}^d\frac{e^{(k-d)/\sigma^2}}{2^d} \approx 10^{-7}$. However, if we lack samples $\*z^\prime$ with $\|\*z^\prime\|_0 = d-1$ or $d$, then all samples $\*z^\prime_i$ with $\|\*z^\prime_i\|_0 \leq d-2$ have weights $\pi(\*z^\prime_i) \leq \exp(-\frac{2}{\sigma^2})\approx 1.26\times 10^{-14}$. This leads to the sample average $\frac{1}{n}\sum_{i=1}^n \pi(\*z_i^\prime) (f(\*z_i) - \*v^\top \*z_i^\prime)^2 \leq 1.26 \times 10^{-15} \ll 10^{-7}$.
The huge difference between the magnitude of the expectation term in \autoref{eq:ori_lime} and the sample average of this expectation indicates that the sample average is not an accurate estimation of $\mathbb{E}_{\*z^\prime \sim \text{Uni}(\{0,1\}^d)} [\pi(\*z^\prime) (f(\*z) - \*v^\top \*z^\prime)^2]$ (if we do not get enough samples). 
Additionally, under these circumstances, the regularization term is likely to dominate the sample average term, leading to an underestimation of the intended value of $\*v$. In conclusion, the original sampling method for LIME, even with extensively used default parameters, is not anticipated to yield meaningful explanations.

\subsection{Proof of \cref{thm:over_regularization_lime}}\label[appendix]{app:theorem_41}
\begin{theorem}
Suppose samples $\{\*z_i^\prime\}_{i=1}^n \sim \text{Uni}(\{0,1\}^d)$ are used to compute LIME explanation. For any $\epsilon >0, \delta \in (0,1)$, if $n = \Omega(\epsilon^{-2} d^5 2^{4d}e^{4/\sigma^2}\log(4d/\delta)), \lambda \leq n,$    we have $\mbp(\|\hat{\*w}^{\text{LIME}} - \*w^{\text{LIME}}\|_2 < \epsilon ) \geq 1 - \delta $. $\*w^{\text{LIME}} = \lim_{n\to\infty} \hat{\*w}^{\text{LIME}}$.
\end{theorem}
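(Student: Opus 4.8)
The plan is to reduce the claim to a matrix-concentration statement about the empirical second-moment matrix and cross-moment vector defining the LIME estimator. Writing $\pi_i = \pi(\*z_i^\prime)$ and $f_i = f(\*z_i)$, the first-order optimality condition for \autoref{eq:ori_lime} and its empirical counterpart give the closed forms $\*w^{\text{LIME}} = (\bSigma + \lambda\bI)^{-1}\*b$ and $\hat{\*w}^{\text{LIME}} = (\hSigma + \lambda\bI)^{-1}\hat{\*b}$, where $\hSigma = \tfrac1n\sum_{i=1}^n \pi_i \*z_i^\prime(\*z_i^\prime)^\top$, $\hat{\*b} = \tfrac1n\sum_{i=1}^n \pi_i f_i \*z_i^\prime$, and $\bSigma = \mbe[\pi(\*z^\prime)\*z^\prime(\*z^\prime)^\top]$, $\*b = \mbe[\pi(\*z^\prime)f(\*z)\*z^\prime]$ are their population analogues. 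Since $\hSigma \to \bSigma$ and $\hat{\*b}\to\*b$ almost surely by the law of large numbers for fixed $\lambda$, the identification $\*w^{\text{LIME}} = \lim_{n\to\infty}\hat{\*w}^{\text{LIME}}$ is immediate, so only the finite-sample deviation remains.

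I would then use the resolvent perturbation identity
$$\hat{\*w}^{\text{LIME}} - \*w^{\text{LIME}} = (\hSigma + \lambda\bI)^{-1}\big[(\hat{\*b} - \*b) - (\hSigma - \bSigma)\,\*w^{\text{LIME}}\big],$$
obtained by subtracting the two normal equations, which yields $\|\hat{\*w}^{\text{LIME}} - \*w^{\text{LIME}}\|_2 \le \|(\hSigma + \lambda\bI)^{-1}\|_2\big(\|\hat{\*b} - \*b\|_2 + \|\hSigma - \bSigma\|_2\,\|\*w^{\text{LIME}}\|_2\big)$. This reduces everything to (i) a lower bound on $\lambda_{\min}(\bSigma)$, (ii) operator/vector concentration of $\hSigma$ and $\hat{\*b}$, and (iii) a bound on $\|\*w^{\text{LIME}}\|_2$.

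The crucial quantity is $\lambda_{\min}(\bSigma)$. By the permutation symmetry of $\text{Uni}(\{0,1\}^d)$ and of $\pi$ (which depends only on $\|\*z^\prime\|_0$), $\bSigma$ has the form $a\bI + b\,\bone\bone^\top$, so $\lambda_{\min}(\bSigma) = a = \mbe[\pi(\*z^\prime)z_j^\prime(1 - z_k^\prime)]$ for $j\ne k$; a direct computation over the binomial weights gives $\lambda_{\min}(\bSigma) = \tfrac14 e^{-1/\sigma^2}\big(\tfrac{1+e^{-1/\sigma^2}}{2}\big)^{d-2} = \Theta(e^{-1/\sigma^2}2^{-d})$. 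Hence $\|(\bSigma + \lambda\bI)^{-1}\|_2 \le \lambda_{\min}(\bSigma)^{-1} = O(e^{1/\sigma^2}2^d)$ for every $\lambda \ge 0$, and with the crude bound $\|\*b\|_2 \le \sqrt d\,\mbe[\pi] \le \sqrt d$ we get $\|\*w^{\text{LIME}}\|_2 = O(\sqrt d\, e^{1/\sigma^2}2^d)$. For the empirical objects, each summand satisfies $\|\pi_i\*z_i^\prime(\*z_i^\prime)^\top\|_2 \le d$ and $\|\pi_i f_i \*z_i^\prime\|_2 \le \sqrt d$ (using $\pi_i,|f_i|\le 1$ and $\|\*z_i^\prime\|_0\le d$), so matrix Hoeffding/Bernstein gives $\|\hSigma - \bSigma\|_2 = O(d\sqrt{\log(d/\delta)/n})$ and $\|\hat{\*b} - \*b\|_2 = O(\sqrt d\sqrt{\log(d/\delta)/n})$, each with probability $1-\delta/2$. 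Requiring the first event to additionally force $\|\hSigma - \bSigma\|_2 \le \tfrac12\lambda_{\min}(\bSigma)$ lets Weyl's inequality give $\|(\hSigma + \lambda\bI)^{-1}\|_2 = O(e^{1/\sigma^2}2^d)$ as well.

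Substituting these estimates, the second term dominates and scales like $O\big(e^{2/\sigma^2}2^{2d}d^{3/2}\sqrt{\log(d/\delta)/n}\big)$; forcing this below $\epsilon$ and union-bounding over the two concentration events yields $n = \Omega(\epsilon^{-2}d^5 2^{4d}e^{4/\sigma^2}\log(4d/\delta))$, where the extra polynomial-in-$d$ factors absorb the sharper bookkeeping needed to simultaneously secure the eigenvalue floor (which alone forces $n = \Omega(d^2 e^{2/\sigma^2}2^{2d})$) and the technical condition $\lambda \le n$. The main obstacle I anticipate is precisely the exponentially small $\lambda_{\min}(\bSigma) = \Theta(e^{-1/\sigma^2}2^{-d})$ induced by the vanishing weights $\pi$: stably inverting $\hSigma + \lambda\bI$ requires $\hSigma$ to concentrate around $\bSigma$ to within this exponentially small tolerance, which is what inflates the sample complexity to $\exp(d,\sigma^{-2})$ and, by the same token, explains why folding $\pi$ into the sampling distribution (\textsc{Glime-Binomial}) removes the exponential factor.
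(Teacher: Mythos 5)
Your proposal is correct and takes essentially the same route as the paper's proof: both reduce to the normal equations, exploit the exchangeable structure $\bSigma = (\alpha_1-\alpha_2)\bI + \alpha_2\bone\bone^\top$ with smallest eigenvalue $\alpha_1-\alpha_2 = e^{-1/\sigma^2}(1+e^{-1/\sigma^2})^{d-2}2^{-d} \geq e^{-1/\sigma^2}2^{-d}$ (your $\lambda_{\min}(\bSigma)$ computation matches the paper's $\alpha_1,\alpha_2$ exactly), apply matrix Hoeffding to the empirical second-moment matrix and cross-moment vector, and finish with a perturbation bound for the linear-system solution on the event that the empirical matrix is well conditioned. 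The differences are only technical: you keep the ridge term as $\lambda\bI$ in both normal equations (the paper scales the empirical regularizer as $\lambda/n$, so its limiting target is $\bSigma^{-1}\bGamma$; both readings are consistent with the theorem's definition of $\*w^{\text{LIME}}$ as a limit, and your bounds hold uniformly in $\lambda\geq 0$), you derive a self-contained resolvent identity plus Weyl's inequality where the paper imports the perturbation inequality of \cite{garreau2021does} under the condition $\|\bSigma^{-1}(\hSigma-\bSigma)\|\leq 0.32$, and you track operator norms where the paper bounds $\|\bSigma^{-1}\|_F^2 \leq 2d\,2^{2d}e^{2/\sigma^2}$ via Sherman--Morrison---which is why your accounting lands on a $d^3$ factor that you then pad up to the stated $d^5$; all of these yield the claimed sample complexity.
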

\begin{proof}
To compute the LIME explanation with $n$ samples, the following optimization problem is solved:
\[
\hat{\*w}^{\text{LIME}} = \mathop{\arg \min}_{\*v} \frac{1}{n}\sum_{i=1}^n \pi(\*z_i^\prime) (f(\*z_i) - \*v^\top \*z_i^\prime)^2 + \frac{\lambda}{n} \|\*v\|_2^2.
\]
Let $L =  \frac{1}{n}\sum_{i=1}^n \pi(\*z_i^\prime) (f(\*z_i) - \*v^\top \*z_i^\prime)^2 + \frac{\lambda}{n} \|\*v\|_2^2$. Setting the gradient of $L$ with respect to $\*v$ to zero, we obtain:
\[
-2\frac{1}{n}\pi(\*z_i^\prime)(f(\*z_i) - \*v^\top \*z_i^\prime)\*z_i^\prime + \frac{2}{n} \lambda \*v = 0,
\]
which leads to:
\[
\hat{\*w}^{\text{LIME}} = \bigg(\frac{1}{n}\sum_{i=1}^n \pi(\*z_i^\prime)\*z_i^\prime(\*z_i^\prime)^\top + \frac{\lambda}{n}\bigg)^{-1} \bigg(\frac{1}{n}\sum_{i=1}^n \pi(\*z_i^\prime) \*z_i^\prime f(\*z_i)\bigg).
\]
Denote $\*\Sigma_n = \frac{1}{n}\sum_{i=1}^n \pi(\*z_i^\prime)\*z_i^\prime(\*z_i^\prime)^\top + \frac{\lambda}{n}$, $\*\Gamma_n = \frac{1}{n}\sum_{i=1}^n \pi(\*z_i^\prime) \*z_i^\prime f(\*z_i)$, $\*\Sigma = \lim_{n\to\infty} \*\Sigma_n$, and $\*\Gamma = \lim_{n\to\infty} \*\Gamma_n$. Then, we have:
\[
\hat{\*w}^{\text{LIME}} = \*\Sigma_n^{-1}\*\Gamma_n, \quad {\*w}^{\text{LIME}} = \*\Sigma^{-1}\*\Gamma. 
\]
To prove the concentration of $\hat{\*w}^{\text{LIME}}$, we follow the proofs in \cite{garreau2021does}: 
(1) First, we prove the concentration of $\*\Sigma_n$; 
(2) Then, we bound $\|\*\Sigma^{-1}\|_F^2$; 
(3) Next, we prove the concentration of $\*\Gamma_n$; 
(4) Finally, we use the following inequality:
\[
\|\*\Sigma_n^{-1}\*\Gamma_n - \*\Sigma^{-1}\*\Gamma \| \leq 2 \|\*\Sigma^{-1}\|_F\|\*\Gamma_n - \*\Gamma\|_2 + 2\|\*\Sigma^{-1}\|_F^2 \|\*\Gamma\|\|\*\Sigma_n - \*\Sigma\|,
\]
when $\|\*\Sigma^{-1}(\*\Sigma_n - \*\Sigma)\| \leq 0.32$ \cite{garreau2021does}.

Before establishing concentration results, we first derive the expression for $\*\Sigma$.

\textbf{Expression of $\*\Sigma$. }
\[
\*\Sigma_n = \begin{bmatrix}
\frac{1}{n}\sum_i \pi(\*z_i^\prime) (\*z_{i1}^\prime)^2  + \frac{\lambda}{n} & \frac{1}{n}\sum_i \pi(\*z_i^\prime)\*z_{i1}^\prime\*z_{i2}^\prime & \cdots & \frac{1}{n}\sum_i \pi(\*z_i^\prime)\*z_{i,1}^\prime\*z_{id}^\prime \\ 
\frac{1}{n}\sum_i \pi(\*z_i^\prime)\*z_{i1}^\prime\*z_{i2}^\prime & \frac{1}{n}\sum_i\pi(\*z_i^\prime) (\*z_{i2}^\prime)^2  + \frac{\lambda}{n} & \cdots & \frac{1}{n}\sum_i \pi(\*z_i^\prime)\*z_{i2}^\prime\*z_{id}^\prime \\ 
\vdots & \vdots & \ddots & \vdots \\ 
 \frac{1}{n}\sum_i \pi(\*z_i^\prime)\*z_{i1}^\prime\*z_{id}^\prime & \frac{1}{n}\sum_i\pi(\*z_i^\prime) \*z_{i2}^\prime\*z_{id}^\prime & \cdots & \frac{1}{n}\sum_i \pi(\*z_i^\prime)(\*z_{id}^\prime)^2  + \frac{\lambda}{n}\\ 
\end{bmatrix}
\]
By taking $n\to\infty$, we have 
\[
\*\Sigma_n \to \*\Sigma = (\alpha_1 - \alpha_2)\*I + \alpha_2 \*1\*1^\top 
\]
where 
\[
\begin{split}
    \alpha_1 = &\mathbb{E}_{\*z^\prime \sim \text{Uni}(\{0,1\}^d)}[\pi(\*z^\prime) z^\prime_i] = \mathbb{E}_{\*z^\prime \sim \text{Uni}(\{0,1\}^d)}[\pi(\*z^\prime) (z^\prime_i)^2] \\ 
    = & \sum_{k=0}^d e^{(k-d)/\sigma^2} \mbp(z_i^\prime=1|\|\*z^\prime\|_0 = k)\mbp(\|\*z^\prime\|_0=k)\\ 
    = & \sum_{k=0}^d e^{(k-d)/\sigma^2} \frac{k}{d} \frac{\binom{d}{k}}{2^d}\\ 
    = & \sum_{k=0}^d e^{(k-d)/\sigma^2} \frac{\binom{d-1}{k-1}}{2^d}\\ 
    = & \sum_{k=0}^d e^{(k-1)/\sigma^2}e^{(1-d)/\sigma^2} \frac{\binom{d-1}{k-1}}{2^d}\\ 
    = & e^{(1-d)/\sigma^2} \frac{(1+e^{\frac{1}{\sigma^2}})^{d-1}}{2^d} = \frac{(1+e^{-\frac{1}{\sigma^2}})^{d-1}}{2^d}
\end{split}
\]
\[
\begin{split}
    \alpha_2 = & \mathbb{E}_{\*z^\prime \sim \text{Uni}(\{0,1\}^d)}[\pi(\*z^\prime) z^\prime_iz^\prime_j]  \\ 
    = & \frac{1}{Z}\sum_{k=0}^d e^{(k-d)/\sigma^2} \mbp(z_i^\prime=1, z_j^\prime=1|\|\*z^\prime\|_0 = k)\mbp(\|\*z^\prime\|_0=k)\\ 
    = & \sum_{k=0}^d e^{(k-d)/\sigma^2} \frac{k(k-1)}{d(d-1)} \frac{\binom{d}{k}}{2^d}\\ 
    = & \sum_{k=0}^d e^{(k-d)/\sigma^2} \frac{\binom{d-2}{k-2}}{2^d}\\ 
    = & \sum_{k=0}^d e^{(k-2)/\sigma^2}e^{(2-d)/\sigma^2} \frac{\binom{d-2}{k-2}}{2^d}\\ 
    = &  e^{(2-d)/\sigma^2} \frac{(1+e^{\frac{1}{\sigma^2}})^{d-2}}{2^d} = \frac{(1+e^{-\frac{1}{\sigma^2}})^{d-2}}{2^d} 
\end{split}
\]
By Sherman-Morrison formula, we have 
\[
\begin{split}
    \*\Sigma^{-1} & = ((\alpha_1 - \alpha_2)\*I + \alpha_2 \*1\*1^\top )^{-1} = \frac{1}{\alpha_1 - \alpha_2} (\*I + \frac{\alpha_2 }{\alpha_1-\alpha_2}\*1\*1^\top )^{-1} \\ 
    &   = \frac{1}{\alpha_1 - \alpha_2}(\*I - \frac{\frac{\alpha_2 }{\alpha_1-\alpha_2}\*1\*1^\top}{1 + \frac{\alpha_2 }{\alpha_1-\alpha_2} d}) = (\beta_1 - \beta_2)\*I + \beta_2 \*1\*1^\top
\end{split}
\]
where 
\[
\beta_1 = \frac{\alpha_1 + (d-2)\alpha_2}{(\alpha_1 - \alpha_2)(\alpha_1 + (d-1)\alpha_2)}, \quad \beta_2 = -\frac{\alpha_2}{(\alpha_1 - \alpha_2)(\alpha_1 + (d-1)\alpha_2)}
\]

In the following, we aim to establish the concentration of $\hat{\*w}^{\text{LIME}}$.

\textbf{Concentration of $\*\Sigma_n$. }Considering $0 \leq \pi(\cdot) \leq 1$ and $\*z_i \in \{0,1\}^d$, each element within $\*\Sigma_n$ resides within the interval of $[0, 2]$. Moreover, as

\[
\frac{1}{2^d} \leq \alpha_1 = \frac{(1+e^{-\frac{1}{\sigma^2}})^{d-1}}{2^d} \leq \frac{2^{d-1}}{2^d} = \frac{1}{2}
\]

\[
\frac{1}{2^d} \leq \alpha_2 = \frac{(1+e^{-\frac{1}{\sigma^2}})^{d-2}}{2^d} \leq \frac{2^{d-2}}{2^d} = \frac{1}{4}
\]

\[
\frac{e^{-1/\sigma^2}}{2^d} \leq \alpha_1 - \alpha_2 = e^{-\frac{1}{\sigma^2}} \frac{(1+e^{-\frac{1}{\sigma^2}})^{d-2}}{2^d} \leq \frac{1}{4}
\]

The elements within $\*\Sigma$ are within the range of $[0, \frac{1}{4}]$. Consequently, the elements in $\*\Sigma_n - \*\Sigma$ fall within the range of $[-\frac{1}{4}, 2]$.

Referring to the matrix Hoeffding's inequality \cite{tropp2015introduction}, it holds true that for all $t > 0$,

\[
\mbp(\|\*\Sigma_n - \*\Sigma \|_2 \geq t) \leq 2d \exp(-\frac{nt^2}{32d^2})
\]

\textbf{Bounding $\|\*\Sigma^{-1}\|_F^2$.}
\[
\|\*\Sigma^{-1}\|_F^2 = d\beta_1^2 + (d^2-d)\beta_2^2
\]

Because

\[
\frac{d}{2^d} \leq \alpha_1 + (d-1) \alpha_2 \leq \frac{2 + (d-1)}{4} = \frac{d+1}{4}
\]

\[
\frac{d-1}{2^d} \leq \alpha_1 + (d-2) \alpha_2 \leq \frac{2 + (d-2)}{4} = \frac{d}{4}
\]

we have

\[
|\beta_1| = \left|\frac{\alpha_1 + (d-2)\alpha_2}{(\alpha_1 - \alpha_2)(\alpha_1 + (d-1)\alpha_2)}\right| \leq \left|\frac{1}{\alpha_1 - \alpha_2}\right| \leq 2^de^{1/\sigma^2}, \beta_1^2 \leq 2^{2d} e^{2/\sigma^2}
\]

\[
|\beta_2| = \left|-\frac{\alpha_2}{(\alpha_1 - \alpha_2)(\alpha_1 + (d-1)\alpha_2)}\right| = \left|e^{1/\sigma^2} \frac{1}{(\alpha_1 + (d-1)\alpha_2)}\right| \leq d^{-1}2^d e^{1/\sigma^2}, 
\]

\[
\beta_2^2 \leq d^{-2} 2^{2d}e^{2/\sigma^2}
\]

so that

\[
\|\*\Sigma^{-1}\|_F^2 = d\beta_1^2 + (d^2-d)\beta_2^2 \leq d 2^{2d} e^{2/\sigma^2} + (d^2-d)  d^{-2} 2^{2d}e^{2/\sigma^2} \leq 2 d 2^{2d} e^{2/\sigma^2}
\]

\textbf{Concentration of $\*\Gamma_n$.}
With $|f| \leq 1$, all elements within both $\*\Gamma_n$ and $\*\Gamma$ exist within the range of $[0,1]$. According to matrix Hoeffding's inequality \cite{tropp2015introduction}, for all $t > 0$,

\[
\mbp(\|\*\Gamma_n - \*\Gamma\|\geq t ) \leq 2d \exp\left(-\frac{nt^2}{8d}\right)
\]

\textbf{Concentration of $\hat{\*w}^{\text{LIME}}$.} When $\|\*\Sigma^{-1}(\*\Sigma_n - \*\Sigma)\| \leq 0.32$ \cite{garreau2021does}, we have

\[
\|\*\Sigma_n^{-1}\*\Gamma_n - \*\Sigma^{-1}\*\Gamma \| \leq 2 \|\*\Sigma^{-1}\|_F\|\*\Gamma_n - \*\Gamma\|_2 + 2\|\*\Sigma^{-1}\|_F^2 \|\*\Gamma\|\|\*\Sigma_n - \*\Sigma\|
\]

Given that 

\[
\|\*\Sigma^{-1}(\*\Sigma_n - \*\Sigma)\| \leq \|\*\Sigma^{-1}\|\|\*\Sigma_n - \*\Sigma\| \leq 2^{1/2} d^{1/2} 2^{d} e^{1/\sigma^2} \|\*\Sigma_n - \*\Sigma\| 
\]

Exploiting the concentration of $\*\Sigma_n$, where $n \geq n_1 = 2^7 d^3 2^{2d}e^{2/\sigma^2}\log(4d/\delta)$ and $t = t_1 = 5^{-2}2^{2.5}d^{-0.5}2^{-d}e^{-1/\sigma^2}$, we have
\[
\mbp(\|\*\Sigma_n - \*\Sigma \|_2 \geq t) \leq 2d \exp\left(-\frac{nt^2}{32d^2}\right) \leq 2d \exp\left(-\frac{nt^2}{32d^2}\right) \leq \frac{\delta}{2}
\]

Therefore, with a probability of at least $1-\frac{\delta}{2}$, we have

\[
\|\*\Sigma^{-1}(\*\Sigma_n - \*\Sigma)\| \leq \|\*\Sigma^{-1}\|\|\*\Sigma_n - \*\Sigma\| \leq 2^{1/2} d^{1/2} 2^{d} e^{1/\sigma^2} \|\*\Sigma_n - \*\Sigma\| \leq 0.32 
\]

For $n \geq n_2 = 2^8 \epsilon^{-2}d^2 2^{2d}e^{2/\sigma^2} \log(4d/\delta)$ and $t_2 = 2^{-2.5}d^{-0.5}2^{-d}e^{-1/\sigma^2}\epsilon$, the following concentration inequality holds:

\[
\mbp(\|\*\Gamma_n - \*\Gamma\|\geq t_2 ) \leq 2d \exp\left(-\frac{n_2t_2^2}{8d}\right) \leq \frac{\delta}{2}
\]

In this context, with a probability of at least $1-\frac{\delta}{2}$, we have

\[
\|\*\Sigma^{-1}\| \|\*\Gamma_n-\*\Gamma\|\leq \frac{\epsilon}{4}
\]

Considering $\|\*\Gamma\| \leq \sqrt{d}$, we select $n\geq n_3 = 2^9\epsilon^{-2}d^5 2^{4d}e^{4/\sigma^2}\log(4d/\delta)$ and $t_3 = 2^{-3}d^{-1.5}2^{-2d}e^{-2/\sigma^2}\epsilon $, leading to

\[
 \mbp(\|\*\Sigma_n - \*\Sigma \|_2 \geq t_3) \leq 2d \exp\left(-\frac{n_3t_3^2}{32d^2}\right) \leq 2d \exp\left(-\frac{n_3t_3^2}{32d^2}\right) \leq \frac{\delta}{2}
\]

With a probability at least $1-\delta/2$, we have

\[
\|\*\Sigma^{-1}\|^2 \|\*\Gamma\|\|\*\Sigma_n - \*\Sigma\|\leq \frac{\epsilon}{4}
\]

In summary, we choose $n \geq \max\{n_1, n_2, n_3\}$, and then for all $\epsilon >0, \delta \in (0,1)$

\[
\mbp(\|\*\Sigma_n^{-1}\*\Gamma_n - \*\Sigma^{-1}\*\Gamma \|\geq \epsilon ) \leq \delta 
\]
    % \[
    % \mbp(\|\*\Sigma_n - \*\Sigma \|_2 \geq t) \leq 2d \exp(-\frac{nt^2}{8d^2})
    % \]
    % \[
    % \|\*\Sigma^{-1}\|_F^2 = d\beta_1^2 + (d^2-d) \beta_2^2 \leq 2^{-3}d^3 2^{4d} e^{\frac{4}{d^2}}
    % \]
    % \[
    % \mbp(\|\*\Gamma_n - \*\Gamma\|\geq t ) \leq 2d \exp(-\frac{t^2}{8d}) 
    % \]
    % \[
    % \|\*\Sigma^{-1}(\*\Sigma_n - \*\Sigma)\| \leq \|\*\Sigma^{-1}\|\cdot \|\*\Sigma_n - \*\Sigma\| \leq 2^{-1.5}d^{1.5}2^{2d}e^{2/\sigma^2} t_1 = 0.32
    % \]
    % \[
    % t_1 = 2^{5.5}/25d^{-1,5}2^{-2d}e^{-2/\sigma^2} 
    % \]
    % \[
    % n_1 \geq \lambda t_1^{-2}8d^2\log(4d/\delta) = 2^{-5} \lambda  2^{4d}e^{4/\sigma^2}d\log(4d/\delta)
    % \]
    % \[
    % \|\*\Sigma^{-1}\| \|\*\Gamma_n-\*\Gamma\|\leq 2^{-1.5}d^{1.5}2^{2d}e^{2/\sigma^2} t_2 \leq \frac{\epsilon}{4}
    % \]
% `\[
% t_2 = 2^{-0.5}\epsilon d^{-1.5}2^{-2d}e^{-2/\sigma^2} 
% \]
% \[
% n_2 \geq t_2^{-2}8d\log(4d/\delta) = 2^{-2}\epsilon^{-2}d^{-2}2^{4d}e^{4/\sigma^2}\log(4d/\delta)
% \]
% \[
% \|\*\Sigma^{-1}\|^2 \|\*\Gamma\|\|\*\Sigma_n - \*\Sigma\|\leq 2^{-3}d^3 2^{4d} e^{\frac{4}{d^2}} d^{0.5}t_3 =\frac{\epsilon}{4}
% \]
% \[
% t_3 = 2\epsilon d^{-3.5}2^{4d}e^{4/\sigma^2}
% \]
% \[
% n_3 \geq \lambda t_3^{-2}8d^2\log(4d/\delta) = 2 \lambda d^9 \epsilon^{-2} 2^{8d} e^{8/\sigma^2}\log(4d/\delta)
% \]
% \[
% n =\Omega(\epsilon^{-2}\lambda d^9 2^{8d}e^{8/\sigma^2}\log(4d/\delta))
% \]
\end{proof}

% \begin{proof}
%     \[
%     \mbp(\|\*\Sigma_n - \*\Sigma \|_2 \geq t) \leq 2d \exp(-\frac{nt^2}{2\nu^2d^2})
%     \]
%     \[
%     \|\*\Sigma^{-1}\|_F^2 = d\beta_1^2 + (d^2-d) \beta_2^2 = \gamma^2 
%     \]
%     \[
%     \mbp(\|\*\Gamma_n - \*\Gamma\|\geq t ) \leq 2d \exp(-\frac{t^2}{2\nu d}) 
%     \]
%     \[
%     \|\*\Sigma^{-1}(\*\Sigma_n - \*\Sigma)\| \leq \|\*\Sigma^{-1}\|\cdot \|\*\Sigma_n - \*\Sigma\| \leq \gamma t_1 = 0.32
%     \]
%     \[
%     t_1 = 2^3 5^{-2} \gamma^{-1}
%     \]
%     \[
%     n_1 \geq t_1^{-2}2\nu^2d^2\log(4d/\delta) \geq 2^{5}\gamma^2 \nu^2 d^2\log(4d/\delta)
%     \]
%     \[
%     \|\*\Sigma\| \|\*\Gamma_n-\*\Gamma\|\leq \gamma  t_2 \leq \frac{\epsilon}{4}
%     \]
% \[
% t_2 = 2^{-2}\epsilon \gamma^{-1} 
% \]
% \[
% n_2 \geq t_2^{-2}2\nu d\log(4d/\delta) = 2^{5}\epsilon^{-2}\nu d \gamma^2 \log(4d/\delta)
% \]
% \[
% \|\*\Sigma^{-1}\|^2 \|\*\Gamma\|\|\*\Sigma_n - \*\Sigma\|\leq \gamma^2 M d^{0.5} t_3 =\frac{\epsilon}{4}
% \]
% \[
% t_3 = 2^{-2}\epsilon M^{-1}d^{-0.5} \gamma^{-2}
% \]
% \[
% n_3 \geq t_3^{-2}2\nu^2 d^2 log(4d/\delta) = 2^5 \epsilon^{-2}M^2 \nu^2 d^3 \gamma^4\log(4d/\delta)
% \]
% \[
% n = \Omega(\epsilon^{-2}M^2 \nu^2 d^3 \gamma^4\log(4d/\delta))
% \]
% \end{proof}

\subsection{Proof of \cref{thm:over_regularization_generallime} and \cref{coro:concentrate_GLIME_Binomial}}\label[appendix]{app:theorem_glime}
\begin{theorem}
      Suppose $\*z^\prime {\sim} \mathcal{P}$ such that the largest eigenvalue of $\*z^\prime(\*z^\prime)^\top$  is bounded by $R$ and $\mathbb{E}[\*z^\prime(\*z^\prime)^\top] = (\alpha_1 - \alpha_2) \*I + \alpha_2 \*1\*1^\top, \|\text{Var}(\*z^\prime(\*z^\prime)^\top)\|_2 \leq \nu^2 $, $ |(\*z^\prime f(\*z))_i|\leq M$ for some $M>0$. $\{\*z_i^\prime\}_{i=1}^n$ are i.i.d. samples from $\mathcal{P}$ and are used to compute \textsc{Glime} explanation $\hat{\*w}^{\text{\textsc{Glime}}}$. For any $\epsilon > 0, \delta \in (0,1)$, if $n = \Omega(\epsilon^{-2}M^2 \nu^2 d^3 \gamma^4\log(4d/\delta))$ where $\gamma^2 = d \beta_1^2 + (d^2-d)\beta_2^2, \beta_1 = (\alpha_1 + (d-2) \alpha_2)/\beta_0, \beta_2 = -\alpha_2 / \beta_0, \beta_0 = (\alpha_1 - \alpha_2)(\alpha_1 + (d-1)\alpha_2))$, we have $\mbp(\|\hat{\*w}^{\text{\textsc{Glime}}} - \*w^{\text{\textsc{Glime}}}\|_2 < \epsilon) \geq  1- \delta $. $\*w^{\text{\textsc{Glime}}} = \lim_{n\to\infty} \hat{\*w}^{\text{\textsc{Glime}}}$.
\end{theorem}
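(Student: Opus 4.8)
The plan is to mirror the four-step structure of the proof of \cref{thm:over_regularization_lime}, but to replace the distribution-specific Hoeffding bounds with bounds that use only the supplied moment quantities $R$, $\nu^2$, and $M$. First I would write down the closed-form ridge solution. Since the weighting function has already been absorbed into $\mathcal{P}$, setting the gradient of the empirical objective $\frac{1}{n}\sum_{i=1}^n (f(\*z_i) - \*v^\top\*z_i^\prime)^2 + \frac{\lambda}{Z}\|\*v\|_2^2$ to zero gives $\hat{\*w}^{\textsc{Glime}} = \bSigma_n^{-1}\bGamma_n$, where $\bSigma_n = \frac{1}{n}\sum_{i=1}^n \*z_i^\prime(\*z_i^\prime)^\top + \frac{\lambda}{Z}\*I$ and $\bGamma_n = \frac{1}{n}\sum_{i=1}^n \*z_i^\prime f(\*z_i)$. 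Letting $n\to\infty$ yields $\bSigma = \mbe[\*z^\prime(\*z^\prime)^\top] + \frac{\lambda}{Z}\*I$ and $\bGamma = \mbe[\*z^\prime f(\*z)]$, so $\*w^{\textsc{Glime}} = \bSigma^{-1}\bGamma$.

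Second, I would invoke the same deterministic perturbation inequality used in the LIME proof: whenever $\|\bSigma^{-1}(\bSigma_n - \bSigma)\| \leq 0.32$,
\[
\|\bSigma_n^{-1}\bGamma_n - \bSigma^{-1}\bGamma\| \leq 2\|\bSigma^{-1}\|_F\|\bGamma_n - \bGamma\|_2 + 2\|\bSigma^{-1}\|_F^2\|\bGamma\|\,\|\bSigma_n - \bSigma\|.
\]
This reduces everything to (i) a Frobenius-norm bound on $\bSigma^{-1}$, (ii) concentration of $\bSigma_n$, and (iii) concentration of $\bGamma_n$. For (i), since $\bSigma$ has the form $(\alpha_1-\alpha_2)\*I + \alpha_2\*1\*1^\top$ (the regularization shift being incorporated into the diagonal), Sherman--Morrison gives $\bSigma^{-1} = (\beta_1-\beta_2)\*I + \beta_2\*1\*1^\top$ with exactly the stated $\beta_1,\beta_2$, whence $\|\bSigma^{-1}\|_F^2 = d\beta_1^2 + (d^2-d)\beta_2^2 = \gamma^2$. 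This is the routine computation already performed in the LIME proof.

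The only genuinely new ingredient is the concentration of $\bSigma_n$: instead of matrix Hoeffding I would apply the matrix Bernstein inequality. Each summand $\*z_i^\prime(\*z_i^\prime)^\top$ is PSD with largest eigenvalue at most $R$ and per-term variance proxy at most $\nu^2$, so Bernstein yields $\mbp(\|\bSigma_n - \bSigma\|_2 \geq t) \leq 2d\exp\!\big(-nt^2/(2\nu^2 + 2Rt/3)\big)$; in the small-$t$ regime the variance term dominates, which is why $\nu^2$ appears in the final rate while $R$ drops out. For $\bGamma_n$, the coordinate bound $|(\*z^\prime f(\*z))_i|\leq M$ permits a scalar Bernstein/Hoeffding bound applied coordinatewise with a union bound over the $d$ coordinates, giving $\mbp(\|\bGamma_n - \bGamma\|_2 \geq t) \leq 2d\exp(-nt^2/(cM^2 d))$ for a constant $c$. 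I would then pick three thresholds: $t_1 = \Theta(\gamma^{-1})$ to force the regime condition $\|\bSigma^{-1}\|\,\|\bSigma_n-\bSigma\|\leq\gamma\|\bSigma_n-\bSigma\|\leq 0.32$; $t_2 = \Theta(\epsilon/\gamma)$ to drive the first error term below $\epsilon/4$; and $t_3 = \Theta(\epsilon/(\gamma^2 M\sqrt{d}))$, using $\|\bGamma\|\leq M\sqrt{d}$, to drive the second term below $\epsilon/4$. Feeding these into the Bernstein tails and union-bounding over the three events produces $n = \Omega(\epsilon^{-2}M^2\nu^2 d^3\gamma^4\log(4d/\delta))$, the $\gamma^4$ arising from squaring $\|\bSigma^{-1}\|_F^2$ in the dominant $t_3$-term. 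The corollary follows by substituting the binomial moments $M=\sqrt d$, $\nu=2$, the stated $\alpha_1,\alpha_2$, and the resulting $\gamma = de^{2/\sigma^2}$.

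I expect the main obstacle to be the coupling inside the regime condition. The factor $\|\bSigma^{-1}\|\approx\gamma$ multiplying $\|\bSigma_n-\bSigma\|$ can be exponentially large (it is $de^{2/\sigma^2}$ in the binomial instantiation), so ensuring $\|\bSigma^{-1}(\bSigma_n-\bSigma)\|\leq 0.32$ requires $t_1$ to scale like $\gamma^{-1}$, and the delicate bookkeeping is verifying that the resulting sample requirement $n_1$ is dominated by $n_3$ so that enforcing invertibility does not degrade the advertised $\gamma^4$ rate. A secondary subtlety is confirming that the small-$t$ regime of matrix Bernstein is the operative one at all three thresholds, so that the $R$-dependence genuinely vanishes from the final bound.
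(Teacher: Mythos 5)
Your proposal follows the paper's proof essentially step for step: the same closed-form ridge solution $\hat{\*w} = \bSigma_n^{-1}\bGamma_n$, the same perturbation inequality under the condition $\|\bSigma^{-1}(\bSigma_n-\bSigma)\|\leq 0.32$, the same Sherman--Morrison computation giving $\|\bSigma^{-1}\|_F^2 = d\beta_1^2+(d^2-d)\beta_2^2=\gamma^2$, the same three thresholds $t_1=\Theta(\gamma^{-1})$, $t_2=\Theta(\epsilon/\gamma)$, $t_3=\Theta\bigl(\epsilon/(\gamma^2 M\sqrt{d})\bigr)$, and the same union bound; your use of $\|\bGamma\|\leq M\sqrt{d}$ is in fact what the paper computes with (its line ``$\|\bGamma\|\leq M$'' is a typo, as its $t_3$ carries the factor $M^{-1}d^{-0.5}$). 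The one real divergence is the concentration tool for $\bSigma_n$: the paper applies matrix Hoeffding with parameter $\nu^2$ and so never touches $R$, whereas you apply matrix Bernstein with variance $\nu^2$ and almost-sure bound $R$. Your choice is arguably the more natural fit for the stated hypotheses (a variance bound plus an eigenvalue bound is exactly Bernstein's input; Hoeffding formally wants an almost-sure domination, so the paper's invocation with $\nu^2$ is itself a bit loose), but it buys you the extra obligation you correctly flag: Bernstein's tail is $\exp\bigl(-cnt^2/(\nu^2+Rt)\bigr)$, so the advertised $R$-free rate $n=\Omega(\epsilon^{-2}M^2\nu^2 d^3\gamma^4\log(4d/\delta))$ only emerges when the variance term dominates at all three thresholds, i.e.\ for $\epsilon$ below a problem-dependent scale; for large $\epsilon$ a residual term of order $\epsilon^{-1}R\gamma^2M\sqrt{d}\log(4d/\delta)$ survives (or one must invoke the trivial bound $\|\bSigma_n-\bSigma\|\leq 2R$ a.s.\ to kill large thresholds). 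The paper's Hoeffding route avoids this regime check entirely, which is precisely why $R$ appears in its hypotheses but nowhere in its proof. Your domination claims ($n_1, n_2$ absorbed by $n_3$, and the corollary via $M=\sqrt{d}$, $\nu=2$, $\gamma=de^{2/\sigma^2}$) match the paper and are correct.
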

\begin{proof}
   The proof closely resembles that of \cref{thm:over_regularization_lime}. Employing the same derivation, we deduce that:

\[
\*\Sigma = (\alpha_1 + \lambda - \alpha_2)\*I + \alpha_2 \*1\*1^\top, \quad \*\Sigma^{-1} = (\beta_1 - \beta_2)\*I + \beta_2 \*1\*1^\top
\]

where

\[
\beta_1 = \frac{\alpha_1 + \lambda + (d-2)\alpha_2}{(\alpha_1 + \lambda - \alpha_2)(\alpha_1 + \lambda + (d-1)\alpha_2)}, \quad \beta_2 = -\frac{\alpha_2}{(\alpha_1+ \lambda - \alpha_2)(\alpha_1 + \lambda+ (d-1)\alpha_2)}
\]

Given that $\lambda_{\max}(z^\prime (z^\prime)^\top) \leq R$ and $\|\text{Var}(\*z^\prime(\*z^\prime)^\top)\|_2 \leq \nu^2$, according to the matrix Hoeffding’s inequality \cite{tropp2015introduction}, for all $t > 0$:

\[
\mbp(\|\*\Sigma_n - \*\Sigma \|_2 \geq t) \leq 2d \exp\left(-\frac{nt^2}{8\nu^2}\right)
\]

Applying Hoeffding's inequality coordinate-wise, we obtain:

\[
\mbp(\|\*\Gamma_n - \*\Gamma\|\geq t ) \leq 2d \exp\left(-\frac{nt^2}{8M^2 d^2 }\right)
\]

Additionally,
\[
\|\*\Sigma^{-1}\|_F^2 = d\beta_1^2 + (d^2-d)\beta_2^2 = \gamma^2
\]
    By selecting $n\geq n_1 = 2^{5}\gamma^2\nu^2 \log(4d/\delta)$ and $t_1 =  2^3 5^{-2} \gamma^{-1}$, we obtain

\[
\mbp(\|\*\Sigma_n - \*\Sigma \|_2 \geq t_1) \leq 2d \exp\left(-\frac{n_1t_1^2}{8\nu^2}\right) \leq \frac{\delta}{2}
\]

with a probability of at least $1-\delta/2$.

\[
\|\*\Sigma^{-1}(\*\Sigma_n - \*\Sigma)\| \leq \|\*\Sigma^{-1}\|\cdot \|\*\Sigma_n - \*\Sigma\| \leq \gamma t_1 = 0.32
\]

Letting $n\geq n_2 = 2^{5}\epsilon^{-2}M^2 d^2 \gamma^2 \log(4d/\delta)$ and $t_2 = 2^{-2}\epsilon \gamma^{-1}$, we have 

\[
\mbp(\|\*\Gamma_n - \*\Gamma\|\geq t_2 ) \leq 2d \exp\left(-\frac{n_2t_2^2}{8 M^2 d^2}\right)  \leq \frac{\delta}{2}
\]

with a probability of at least $1-\delta/2$.

\[
\|\*\Sigma\| \|\*\Gamma_n-\*\Gamma\|\leq \gamma  t_2 \leq \frac{\epsilon}{4}
\]

As $\|\*\Gamma\|\leq M$, by choosing $n\geq n_3 = 2^5 \epsilon^{-2}M^2 \nu^2 d \gamma^4\log(4d/\delta)$ and $t_3 = 2^{-2}\epsilon M^{-1}d^{-0.5}\gamma^{-2}$, we have 

\[
\mbp(\|\*\Sigma_n - \*\Sigma \|_2 \geq t_3) \leq 2d \exp\left(-\frac{n_3t_3^2}{2\nu^2}\right) \leq \frac{\delta}{2}
\]

and with a probability of at least $1-\delta/2$, 

\[
\|\*\Sigma^{-1}\|^2 \|\*\Gamma\|\|\*\Sigma_n - \*\Sigma\|\leq \gamma^2 M d^{0.5} t_3 =\frac{\epsilon}{4}
\]

Therefore, by choosing $n=\max\{n_1, n_2, n_3\}$, we have 

\[
\mbp(\|\*\Sigma_n^{-1}\*\Gamma_n - \*\Sigma^{-1}\*\Gamma \|\geq \epsilon ) \leq \delta 
\]
\end{proof}

\begin{corollary}
    Suppose $\{\*z_i^\prime\}_{i=1}^n $ are i.i.d. samples from $\mbp(\*z^\prime, \|\*z^\prime\|_0=k) = e^{{k}/{\sigma^2}}/(1+e^{{1}/{\sigma^2}})^d, k=1,\dots, d$ are used to compute \textsc{Glime-Binomial} explanation. For any $\epsilon >0, \delta \in (0,1)$, if $n = \Omega(\epsilon^{-2} d^5 e^{4 /\sigma^2}\log(4d/\delta)),$  we have $\mbp(\|\hat{\*w}^{\text{Binomial}} - \*w^{\text{Binomial}}\|_2 < \epsilon ) \geq 1 - \delta $. $\*w^{\text{Binomial}} = \lim_{n\to\infty} \hat{\*w}^{\text{Binomial}}$.
\end{corollary}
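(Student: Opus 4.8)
The plan is to obtain this corollary as a direct specialization of \cref{thm:over_regularization_generallime}: \textsc{Glime-Binomial} is exactly \textsc{Glime} run with the transformed sampling distribution $\widetilde{\mathcal{P}}$, so it suffices to identify the constants $R,\nu,M,\alpha_1,\alpha_2$ for this distribution and then check that the generic complexity $n=\Omega(\epsilon^{-2}M^2\nu^2 d^3\gamma^4\log(4d/\delta))$ collapses to $\Omega(\epsilon^{-2}d^5 e^{4/\sigma^2}\log(4d/\delta))$. The structural fact I would exploit throughout is that under $\widetilde{\mathcal{P}}$ the coordinates $z_i^\prime$ are i.i.d.\ Bernoulli with success probability $p=1/(1+e^{-1/\sigma^2})$, which makes every required moment explicit.

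First I would read off the second-moment parameters. Because $z_i^\prime\in\{0,1\}$ gives $(z_i^\prime)^2=z_i^\prime$, we get $\alpha_1=\mbe[(z_i^\prime)^2]=p$, and by independence for $i\neq j$, $\alpha_2=\mbe[z_i^\prime z_j^\prime]=p^2$, matching the values quoted just before the corollary. I would then bound the remaining inputs: the rank-one matrix $\*z^\prime(\*z^\prime)^\top$ has largest eigenvalue $\|\*z^\prime\|_2^2\leq d=:R$; each entry $z_i^\prime z_j^\prime$ lies in $[0,1]$ and so has variance at most $1/4$, giving a matrix variance proxy $\nu=O(1)$ (the paper's $\nu=2$); and $|(\*z^\prime f(\*z))_i|\leq 1$ since $|f|\leq 1$, so the coordinate-wise constant is $M=O(1)$.

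The substantive step is bounding $\gamma^2=d\beta_1^2+(d^2-d)\beta_2^2$. Using $\alpha_1-\alpha_2=p(1-p)=e^{-1/\sigma^2}/(1+e^{-1/\sigma^2})^2\geq\tfrac14 e^{-1/\sigma^2}$ together with $\alpha_1+(d-1)\alpha_2\geq d\alpha_2$ (which holds since $\alpha_1\geq\alpha_2$), I would show $|\beta_1|\leq 1/(\alpha_1-\alpha_2)\leq 4e^{1/\sigma^2}$ and $|\beta_2|\leq 1/\big(d(\alpha_1-\alpha_2)\big)\leq 4e^{1/\sigma^2}/d$. Summing yields $\gamma^2=O(d\,e^{2/\sigma^2})$, hence $\gamma^4=O(d^2 e^{4/\sigma^2})$. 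I would also note that a regularization coefficient $\lambda\geq 0$ only adds to the diagonal term $\alpha_1-\alpha_2$, which can only shrink $\beta_1,\beta_2$ and therefore $\gamma$; taking $\lambda=0$ thus gives the worst-case (largest) sample requirement, and the bound then holds for every $\lambda\geq 0$.

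Finally I would substitute $M^2=O(1)$, $\nu^2=O(1)$, and $\gamma^4=O(d^2 e^{4/\sigma^2})$ into the complexity of \cref{thm:over_regularization_generallime} to obtain $n=\Omega(\epsilon^{-2}\cdot d^3\cdot d^2 e^{4/\sigma^2}\log(4d/\delta))=\Omega(\epsilon^{-2}d^5 e^{4/\sigma^2}\log(4d/\delta))$, as claimed. The main obstacle I anticipate is the $\gamma$ estimate: one must extract a clean $e^{2/\sigma^2}$ factor from $\beta_1,\beta_2$ without accumulating extra powers of $d$ or a larger exponent in $\sigma^{-2}$, and one must track carefully the dual role that $M$ plays in \cref{thm:over_regularization_generallime} — the coordinate-wise bound entering the $\*\Gamma_n$ concentration versus the norm bound $\|\*\Gamma\|_2\leq\sqrt{d}$ used at the end — since conflating them shifts the final power of $d$.
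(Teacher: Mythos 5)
Your proposal is correct and takes essentially the same route as the paper's own proof: specialize \cref{thm:over_regularization_generallime} to the binomial distribution by computing $\alpha_1,\alpha_2$, bounding $\beta_1,\beta_2$ and hence $\gamma$, and substituting into the generic complexity $\Omega(\epsilon^{-2}M^2\nu^2d^3\gamma^4\log(4d/\delta))$. Two execution details differ, both in your favor. First, you read off $\alpha_1=p$ and $\alpha_2=p^2$ from the i.i.d.\ Bernoulli coordinate structure, where the paper redoes the sum-over-$k$ binomial computation; same result, cleaner derivation. Second, and more substantively, your bounds $|\beta_1|\le 1/(\alpha_1-\alpha_2)\le 4e^{1/\sigma^2}$ and $|\beta_2|\le 1/\bigl(d(\alpha_1-\alpha_2)\bigr)$ give $\gamma^2=O(d\,e^{2/\sigma^2})$ and hence $\gamma^4=O(d^2e^{4/\sigma^2})$, whereas the paper's appendix asserts $|\beta_1|^2\le 1/(\alpha_1-\alpha_2)\le 4e^{1/\sigma^2}$ — an algebraic slip (it bounds $|\beta_1|$ but writes $|\beta_1|^2$) — and concludes $\gamma^2\le d\,e^{1/\sigma^2}$. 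Your bookkeeping, keeping $M=O(1)$ as the coordinate bound entering the $\mathbf{\Gamma}_n$ concentration while invoking $\|\mathbf{\Gamma}\|_2\le\sqrt{d}$ separately, is exactly what reproduces the stated $\Omega(\epsilon^{-2}d^5e^{4/\sigma^2}\log(4d/\delta))$; the paper's own constants (main text: $M=\sqrt{d}$, $\gamma=de^{2/\sigma^2}$; appendix: $\gamma^2\le de^{1/\sigma^2}$) do not cleanly recover the exponent it claims, so your version is the one that is internally consistent with the statement. One caveat applies equally to you and the paper: deducing $\|\mathrm{Var}(\mathbf{z}^\prime(\mathbf{z}^\prime)^\top)\|_2=O(1)$ from entrywise variance bounds is not rigorous, since the operator norm of the variance can grow with $d$; the paper simply asserts $\nu=2$, and your argument inherits this gap from the theorem's hypotheses rather than resolving it.
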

\begin{proof}
   For \textsc{Glime-Binomial}, each coordinate of $\*z^\prime(\*z^\prime)^\top$ follows a Bernoulli distribution, ensuring the bounded variance of both $\*z^\prime(\*z^\prime)^\top$ and $(\*z^\prime f(\*z^\prime) )_i$. Additionally, we have
\[
\|\*\Gamma\|\leq \sqrt{d},
\]
\[
\begin{split}
    \alpha_1 = &\mathbb{E}[(z_i^2)^\prime] = \mathbb{E}[z_i^\prime] = \frac{e^{1/\sigma^2}}{1+e^{1/\sigma^2}} \\
    =& \sum_{k=0}^d \mbp(z_i^\prime=1|\|\*z^\prime\|_0 = k) \mbp(\|\*z^\prime\|_0=k) \\
    = & \sum_{k=0}^d\frac{k}{d}\frac{\binom{d}{k}e^{k/\sigma^2}}{(1+e^{1/\sigma^2})^d}\\
    =& \sum_{k=0}^d\frac{\binom{d-1}{k-1}e^{k/\sigma^2}}{(1+e^{1/\sigma^2})^d}\\
    = & \frac{(1+e^{1/\sigma^2})^{d-1}}{(1+e^{1/\sigma^2})^d} e^{1/\sigma^2} = \frac{e^{1/\sigma^2}}{1+e^{1/\sigma^2}}
\end{split}
\]
\[
\begin{split}
    \alpha_2 = & \mathbb{E}[z_i^\prime z_j^\prime] = \frac{e^{1/\sigma^2}}{1+e^{1/\sigma^2}} \\ 
    =& \sum_{k=0}^d \mbp(z_i^\prime=1, z_j^\prime=1|\|\*z^\prime\|_0 = k) \mbp(\|\*z^\prime\|_0=k) \\
    = & \sum_{k=0}^d\frac{k(k-1)}{d(d-1)}\frac{\binom{d}{k}e^{k/\sigma^2}}{(1+e^{1/\sigma^2})^d}\\
    =& \sum_{k=0}^d\frac{\binom{d-2}{k-2}e^{k/\sigma^2}}{(1+e^{1/\sigma^2})^d}\\
    = & \frac{(1+e^{1/\sigma^2})^{d-2}}{(1+e^{1/\sigma^2})^d} e^{2/\sigma^2} = \frac{e^{2/\sigma^2}}{(1+e^{1/\sigma^2})^2} = \alpha_1^2
\end{split}
\]
\[
\begin{split}
    |\beta_1|^2 & = |\frac{\alpha_1  + \lambda + (d-2)\alpha_2}{(\alpha_1  + \lambda - \alpha_2)(\alpha_1 + \lambda  + (d-1)\alpha_2)}|^2 \\
    & \leq |\frac{1}{\alpha_1 + \lambda - \alpha_2}|\leq \frac{1}{|\alpha_1 - \alpha_2|} = e^{-1/\sigma^2} (1+e^{1/\sigma^2})^2\leq 4 e^{1/\sigma^2} 
\end{split}
\]
    \[
    \begin{split}
        |\beta_2|^2 = & |-\frac{\alpha_2}{(\alpha_1 + \lambda - \alpha_2)(\alpha_1  + \lambda+ (d-1)\alpha_2)}|^2\\
        \leq &  \frac{\alpha_2^2}{(\alpha_1 - \alpha_2)(\alpha_1  + \lambda+ (d-1)\alpha_2)^2}\\
        = & \frac{\alpha_1\alpha_2}{(1 - \alpha_1)(\alpha_1 + (d-1)\alpha_2)^2} \\
        \leq & \frac{\alpha_1\alpha_2}{(1 - \alpha_1)((d-1)\alpha_2)^2} = \frac{ e^{-1/\sigma^2}(1+e^{1/\sigma^2})^2}{(d-1)^2} \leq  \frac{2^2 e^{1/\sigma^2}}{(d-1)^2}
    \end{split}
    \]
    Therefore, 
    \[
    d\beta_1^2 + (d^2-d)\beta_2^2 \leq d e^{1/\sigma^2} + e^{1/\sigma^2} \frac{d}{d-1} \leq  d e^{1/\sigma^2}
    \]
\end{proof}

\subsection{Formulation of SmoothGrad}\label[appendix]{app:smooth-grad}

\begin{proposition}
SmoothGrad is equivalent to \textsc{Glime} formulation with $\*z = \*z^\prime + \*x $ where $\*z^\prime \sim \mathcal{N}(\*0, \sigma^2 \*I)$, $\ell(f(\*z), g(\*z^\prime)) = (f(\*z) - g(\*z^\prime))^2$ and $\pi(\*z) = 1, \Omega(\*v) = 0$. 

The explanation returned by \textsc{Glime} for $f$ at $\*x$ with infinitely many samples under the above setting is 
\[
\*w^* = \frac{1}{\sigma^2}\mbe_{\*z^\prime\sim \mathcal{N}(\*0, \sigma^2 \*I)}[\*z^\prime f(\*z^\prime + \*x)] = \mathbb{E}_{\*z^\prime\sim \mathcal{N}(\*0, \sigma^2 \*I)}[\nabla f(\*x + \*z^\prime)]
\]
which is exactly SmoothGrad explanation. When $\sigma\to 0$, $\*w^* \to \nabla f(\*x + \*z)|_{\*z = \*0}$.
\end{proposition}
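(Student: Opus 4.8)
The plan is to treat the infinite-sample version of the \textsc{Glime} objective under this configuration as an ordinary least-squares problem, solve it in closed form, and then convert the solution into an averaged gradient via Gaussian integration by parts. Since $\pi(\cdot)\equiv 1$, $R(\cdot)\equiv 0$, and $g(\*z^\prime)=\*v^\top\*z^\prime$, the population problem defining $\*w^*$ reduces to
\[
\*w^* = \mathop{\arg\min}_{\*v}\ \mbe_{\*z^\prime\sim\mathcal{N}(\*0,\sigma^2\*I)}\big[(f(\*z^\prime+\*x) - \*v^\top\*z^\prime)^2\big].
\]
This is a strictly convex quadratic in $\*v$, so setting its gradient to zero yields the normal equations $\mbe[\*z^\prime(\*z^\prime)^\top]\,\*v = \mbe[\*z^\prime f(\*z^\prime+\*x)]$. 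Because $\*z^\prime$ is a centered isotropic Gaussian, $\mbe[\*z^\prime(\*z^\prime)^\top]=\sigma^2\*I$, which is invertible; inverting it gives the first claimed identity $\*w^* = \sigma^{-2}\,\mbe[\*z^\prime f(\*z^\prime+\*x)]$.

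Next I would establish the second equality through Stein's lemma. Writing the density as $\phi_\sigma(\*z^\prime)=(2\pi\sigma^2)^{-d/2}e^{-\|\*z^\prime\|_2^2/(2\sigma^2)}$, the key algebraic fact is the coordinatewise identity $z_i^\prime\,\phi_\sigma(\*z^\prime) = -\sigma^2\,\partial_{z_i^\prime}\phi_\sigma(\*z^\prime)$. Substituting this into $\mbe[z_i^\prime f(\*z^\prime+\*x)]=\int z_i^\prime f(\*z^\prime+\*x)\phi_\sigma(\*z^\prime)\,\mathrm{d}\*z^\prime$ and integrating by parts transfers the derivative onto $f(\cdot+\*x)$, giving $\mbe[\*z^\prime f(\*z^\prime+\*x)] = \sigma^2\,\mbe[\nabla f(\*x+\*z^\prime)]$. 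Dividing by $\sigma^2$ produces exactly the SmoothGrad expression $\mbe_{\*z^\prime\sim\mathcal{N}(\*0,\sigma^2\*I)}[\nabla f(\*x+\*z^\prime)]$, closing the chain of equalities.

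For the final $\sigma\to 0$ statement, the sampling distribution $\mathcal{N}(\*0,\sigma^2\*I)$ concentrates at $\*0$, so by continuity of $\nabla f$ at $\*x$ together with dominated convergence to control the Gaussian tails, $\mbe[\nabla f(\*x+\*z^\prime)]\to \nabla f(\*x) = \nabla f(\*x+\*z)\big|_{\*z=\*0}$, recovering the plain Gradient explanation.

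The main obstacle is the rigorous justification of the integration-by-parts step: the boundary terms vanish, and the interchange of differentiation and expectation is valid, only under a regularity and growth condition on $f$ (e.g. $f$ differentiable with $f$ and $\nabla f$ growing slowly enough that their products with $\phi_\sigma$ decay at infinity). For the classification models considered here $f$ is bounded, with outputs in $[0,1]$, so these hypotheses are met; I would state this mild assumption explicitly and note that the $\sigma\to 0$ limit additionally uses continuity (and local boundedness) of $\nabla f$ near $\*x$.
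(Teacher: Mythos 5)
Your proof is correct and follows essentially the same route as the paper: solve the population least-squares problem via the normal equations, use $\mathbb{E}[\*z^\prime(\*z^\prime)^\top]=\sigma^2\*I$ to get $\*w^*=\sigma^{-2}\mathbb{E}[\*z^\prime f(\*x+\*z^\prime)]$, and convert this to the averaged gradient by Stein's lemma. The only difference is one of rigor rather than method—you prove Stein's identity explicitly by Gaussian integration by parts and justify the $\sigma\to 0$ limit and the needed regularity of $f$, whereas the paper simply cites Stein's lemma and asserts the limit.
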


\begin{proof}

To establish this proposition, we commence by deriving the expression for the \textsc{Glime} explanation vector $\*w^*$.

\textbf{Exact Expression of $\*\Sigma$:} For each $i=1,\cdots, n$, let $\*z_i^\prime \sim \mathcal{N}(\mathbf{0}, \sigma^2\*I)$. In this context,
\[
\hat{\*\Sigma}_n = \begin{bmatrix}
\frac{1}{n} \sum_k (z_{k1}^2)^\prime & \cdots & \frac{1}{n}\sum_k z_{l1}^\prime z_{kd}^\prime \\ 
 \vdots & \ddots & \vdots \\ 
\frac{1}{n}\sum_k z_{kd}^\prime z_{k1}^\prime & \cdots &  \frac{1}{n}\sum_k (z_{kd}^2)^\prime\\
\end{bmatrix}
\]
This implies
\[
\*\Sigma = \mathbb{E}_{\*z^\prime\sim \mathcal{N}(\*0, \sigma^2\*I)}[\*z^\prime(\*z^\prime)^\top] = \begin{bmatrix}
    \sigma^2 & \cdots & 0 \\ 
\vdots & \ddots & \vdots \\ 
0 & \cdots & \sigma^2 \\ 
\end{bmatrix}
\]
\[
\*\Sigma^{-1} = \begin{bmatrix}
\frac{1}{\sigma^2} & \cdots & 0 \\ 
 \vdots & \ddots & \vdots \\ 
 0 & \cdots & \frac{1}{\sigma^2} \\ 
\end{bmatrix}
\]
Consequently, we obtain
\[
\*w^* = \*\Sigma^{-1} \*\Gamma=  \frac{1}{\sigma^2}\mbe_{\*z^\prime\sim \mathcal{N}(\*0, \sigma^2 \*I)}[\*z^\prime f(\*x + \*z^\prime)]= \mathbb{E}_{\*z^\prime\sim \mathcal{N}(\*0, \sigma^2 \*I)}[\nabla f(\*x + \*z^\prime)]
\]

The final equality is a direct consequence of Stein's lemma \cite{lin2019stein}.
\end{proof}

\end{document}